\newif\ifnotes
\newcommand{\gal}[1]{$\ll$\textsf{\color{blue} Gal: { #1}}$\gg$}
\newcommand{\guy}[1]{$\ll$\textsf{\color{red} Guy: { #1}}$\gg$}
\newcommand{\gal}[1]{}
\newcommand{\guy}[1]{}
\newcounter{todonumber}
\newcommand{\smallnote}[2][]{{%
 \let\marginpar\marginnote%
 \ifodd\value{todonumber}%
   \reversemarginpar%
 \else%
 \fi%
 \todo[#1]{#2}}%
 \stepcounter{todonumber}%
}
\newtheorem{theorem}{Theorem}[section]
\newtheorem{proposition}[theorem]{Proposition}
\newtheorem{corollary}[theorem]{Corollary}
\newtheorem{lemma}[theorem]{Lemma}
\newtheorem{claim}[theorem]{Claim}
\newtheorem{definition}[theorem]{Definition}
\newenvironment{proofof}[1]{
\setlength{\parindent}{0cm}
\noindent {\em Proof of #1.  }}
{\hfill$\Box$}
\newcommand{\A}{{\cal A}}
\newcommand{\D}{{\cal D}}
\newcommand{\F}{{\cal F}}
\newcommand{\X}{{\cal X}}
\newcommand{\Y}{{\cal Y}}
\renewcommand{\H}{{\cal H}}
\renewcommand{\L}{{\cal L}}
\newcommand{\R}{\mathbb{R}}
\newcommand{\err}{\mathit{err}}
\newcommand{\sign}{\mathit{sign}}
\let\Pr\relax
\DeclareMathOperator*{\Pr}{\textnormal{\bf Pr}}
\DeclareMathOperator*{\argmin}{argmin}
\newcommand{\poly}{{\rm poly}}
\newcommand{\zo}{\{0,1\}}
\newcommand{\lin}{\mathit{lin}}
\newcommand{\sig}{\phi}
\title{Probably Approximately Metric-Fair Learning}
\author{
Guy N. Rothblum\thanks{\href{mailto:rothblum@alum.mit.edu}{rothblum@alum.mit.edu}. Research supported by the ISRAEL SCIENCE FOUNDATION (grant No. 5219/17).}  \\ Weizmann Institute of Science
\and
Gal Yona\thanks{\href{mailto:gal.yona@gmail.com}{gal.yona@gmail.com}. Research supported by the ISRAEL SCIENCE FOUNDATION (grant No. 5219/17).} \\ Weizmann Institute of Science
}
\date{}
\begin{document}
\begin{titlepage}
\clearpage\maketitle

\thispagestyle{empty}

\begin{abstract}
The seminal work of Dwork {\em et al.} [ITCS 2012] introduced a metric-based notion of individual fairness. Given a task-specific similarity metric, their notion required that every pair of similar individuals should be treated similarly. In the context of machine learning, however, individual fairness does not generalize from a training set to the underlying population. We show that this can lead to computational intractability even for simple fair-learning tasks. 

With this motivation in mind, we introduce and study a relaxed notion of {\em approximate metric-fairness}: for a random pair of individuals sampled from the population, with all but a small probability of error, if they are similar then they should be treated similarly.
 We formalize the goal of achieving approximate metric-fairness simultaneously with best-possible accuracy as Probably Approximately Correct and Fair (PACF) Learning. We show that approximate metric-fairness {\em does} generalize, and leverage these generalization guarantees to construct polynomial-time PACF learning algorithms for the classes of linear and logistic predictors. 
\end{abstract}
\end{titlepage}

\section{Introduction}
\label{sec:intro}

Machine learning is increasingly used to make consequential classification decisions about individuals. Examples range from predicting whether a user will enjoy a particular article, to estimating a felon's recidivism risk, to determining whether a patient is a good candidate for a medical treatment. 
Automated classification comes with great benefits, but it also raises substantial societal concerns (cf. \cite{ONeil2016} for a recent perspective). One prominent concern is that these algorithms might discriminate against individuals or groups in a way that violates laws or social and ethical norms. This might happen due to biases in the training data or due to biases introduced by the algorithm. To address these concerns, and to truly unleash the full potential of automated classification, there is a growing need for frameworks and tools to mitigate the risks of algorithmic discrimination. A growing literature attempts to tackle these challenges by exploring different fairness criteria.

Discrimination can take many guises. It can be difficult to spot and difficult to define. Imagine a protected minority population $P$ (defined by race, gender identity, political affiliation, etc). A natural approach for protecting the members of $P$ from discrimination is to make sure that they are not mistreated {\em on average}. For example, that on average members of $P$ and individuals outside of $P$ are classified in any particular way with roughly the same probability. This is a {\em ``group-level''} notion of fairness, sometimes referred to as {\em statistical parity}.

Pointing out several weakness of group-level notions of fairness, the seminal work of \cite{DworkHPRZ12} introduced a notion of {\em individual fairness}. Their notion relies on a {\em task-specific similarity metric} that specifies, for every two individuals, how similar they are with respect to the specific classification task at hand. Given such a metric, similar individuals should be treated similarly, i.e. assigned similar classification distributions (their focus was on probabilistic classifiers, as will be ours). In this work, we refer to their fairness notion as {\em perfect metric-fairness}.

Given a good metric, perfect metric-fairness provides powerful protections from discrimination. Furthermore, the metric provides a vehicle for specifying social norms, cultural awareness, and task-specific knowledge. While coming up with a good metric can be challenging, metrics arise naturally in prominent existing examples (such as credit scores and insurance risk scores), and in natural scenarios (a metric specified by an external regulator). Dwork {\em et al.} studied the goal of finding a (probabilistic) classifier that minimizes utility loss (or maximizes accuracy), subject to satisfying the perfect metric-fairness constraint. They showed how to phrase and solve this optimization problem for a given collection of individuals.

\subsection{This Work: Approximately Metric-Fair Machine Learning}
\label{sec:intro:this_work}

Building on these foundations, we study {\em metric-fair machine learning}. Consider a learner that is given a similarity metric and a training set of labeled examples, drawn from an underlying population distribution. The learner should output a {\em fair} classifier that (to the extent possible)  accurately classifies the underlying population.

This goal departs from the scenario studied in \cite{DworkHPRZ12}, where the focus was on guaranteeing metric-fairness and utility for the dataset at hand. {\em Generalization} of the fairness guarantee is a key difference: we focus on guaranteeing fairness not just for the (training) data set at hand, but also for the underlying population from which it was drawn. We note that perfect metric-fairness does not, as a rule, generalize from a training set to the underlying population. This presents computational difficulties for constructing learning algorithms that are perfectly metric-fair for the underlying population. Indeed, we exhibit a simple learning task that, while easy to learn without fairness constraints, becomes computationally infeasible under the perfect metric-fairness constraint (given a particular metric).\footnote{We remark that perfect metric-fairness can always be obtained trivially by outputting a constant classifier that treats all individuals identically, the challenge is achieving metric-fairness together with non-trivial accuracy.} See below and in Section \ref{sec:intro:hardness} for further details.

We develop a relaxed {\em approximate metric-fairness} framework for machine learning, where fairness does generalize from the training set to the underlying population, and present polynomial-time fair learning algorithms in this framework. We proceed to describe our setting and contributions.

\paragraph{Problem setting.}  A metric-fair learning problem is defined by a domain $\X$ and a similarity metric $d$.
A metric-fair learning algorithm gets as input the metric $d$ and a sample of labeled examples, drawn i.i.d. from a distribution $\D$ over labeled examples from $(\X \times \pm 1)$, and outputs a classifier $h$. To accommodate fairness, we focus on probabilistic classifiers $h: \X \rightarrow [0,1]$, where we interpret $h(x)$ as the probability of label 1 (the probability of $-1$ is thus $(1-h(x))$). We refer to these probabilistic classifiers as \emph{predictors}.

\paragraph{Approximate Metric-Fairness.} Taking inspiration from Valiant's celebrated PAC learning model \cite{Valiant84}, we allow a small fairness error, which opens the door to generalization. We require that for two individuals sampled from the underlying population, with all but a small probability, if they are similar then they should be treated similarly. Similarity is measured by the statistical distance between the classification distributions given to the two individuals (we also allow a small additive slack in the similarity measure). We refer to this condition as {\em approximate metric-fairness (MF)}. Similarly to PAC learning, we also allow a small probability of a complete fairness failure.

Given a well-designed metric, approximate metric-fairness guarantees that almost every individual gets fair treatment compared to almost every other individual. In particular, it provides discrimination-protections to {\em every} group $P$ that is not too small. However, this guarantee also has limitations: particular individuals and even small groups might encounter bias and discrimination. There are certainly settings in which this is problematic, but in other settings protecting all groups that are not too small is an appealing guarantee. The relaxation is well-motivated because approximate fairness opens the door to fairness-generalization bounds, as well as efficient learning algorithms for a rich collection of problems (see below). We elaborate on these choices and their consequences in Section \ref{sec:intro:approx_fair}.

\paragraph{Competitive accuracy.} Turning our attention to the accuracy objective, we follow \cite{DworkHPRZ12} in considering fairness to be a hard constraint (e.g. imposed by a regulator). Given the fairness constraint, what is a reasonable accuracy objective? Ideally, we would like the predictor's accuracy to approach (as the sample size grows) that of the most accurate approximately MF predictor. This is analogous to the accuracy guarantee pioneered in \cite{DworkHPRZ12}. A {\em probably approximately correct and fair (PACF)} learning algorithm guarantees both approximate MF and ``best-possible'' accuracy. A more relaxed accuracy benchmark is approaching the accuracy of the best classifier that is approximately MF for a tighter (more restrictive) fairness-error. We refer this as a {\em relaxed} PACF learning algorithm  (looking ahead, our efficient algorithms achieve this relaxed accuracy guarantee). We note that even relaxed PACF guarantees that the classifier is (at the very least) competitive with the best {\em perfectly} metric-fair classifier. We elaborate in Section \ref{sec:intro:PACF}.

\paragraph{Generalization bounds.} A key issue in learning theory is that of generalization: to what extent is a classifier that is accurate on a finite sample $S \sim \D^m$ also guaranteed to be accurate w.r.t the underlying distribution? We develop strong generalization bounds for approximate metric-fairness, showing that for any class of predictors with bounded Rademacher complexity, approximate MF on the sample $S$ implies approximate MF on the underlying distribution (w.h.p. over the choice of sample $S$). The use of Rademacher complexity guarantees fairness-generalization for finite classes and also for many infinite classes. Proving that approximate metric-fairness generalizes well is a crucial component in our analysis: it opens the door to polynomial-time algorithms that can focus on guaranteeing fairness (and accuracy) on the sample. Generalization also implies information-theoretic sample-complexity bounds for PACF learning that are similar to those known for PAC learning (without any fairness constraints). We elaborate in Section \ref{sec:intro:generalization}.

\paragraph{Efficient algorithms.} We construct polynomial-time (relaxed) PACF algorithms for linear and logistic regression. Recall that (for fairness) we focus on regression problems: learning predictors that assign a probability in $[0,1]$ to each example. For linear predictors, the probability is a linear function of an example's distance from a hyperplane. Logistic predictors compose a linear function with a sigmoidal transfer function. This allows logistic predictors to exhibit sharper transitions from low predictions to high predictions. In particular, a logistic predictor can better approximate a classifier that labels examples that are below a hyperplane by $-1$, and examples that are above the hyperplane by 1.
Linear and logistic predictors can be more powerful than they first seem: by embedding a learning problem into a higher-dimensional space, linear functions (over the expanded space) can capture the power of many of the function classes that are known to be PAC learnable \cite{HellersteinS07}. We overview these results in Section \ref{sec:intro:algorithms}. We note that a key challenge in efficient metric-fair learning is that the fairness constraints are neither Lipschitz nor convex (even when the predictor is linear). This is also a challenge for proving generalization and sample complexity bounds. Berk {\em et al.} \cite{berk2017convex} also study fair regression and formulate a measure of individual fairness loss, albeit in a different setting without a metric (see Section \ref{sec:intro:related_work}).


\paragraph{Perfect metric-fairness is hard.} Under mild cryptographic assumptions, we exhibit a learning problem and a similarity metric where: $(i)$ there exists a {\em perfectly fair and perfectly accurate} simple (linear) predictor, but $(ii)$ any polynomial-time perfectly metric-fair learner can only find a trivial predictor, whose error approaches 1/2.  In contrast, $(iii)$ there {\em does} exist a polynomial-time (relaxed) PACF learning algorithm for this task.
This is an important motivation for our study of {\em approximate} metric-fairness. We elaborate in Section \ref{sec:intro:hardness}.

\paragraph{Organization.} In the remainder of this section we provide an overview of our contributions. {\bf Section \ref{sec:intro:approx_fair}} details and discusses the definition of approximate metric-fairness and its relationship to related works. Accurate and fair (PACF) learning is discussed in {\bf Section \ref{sec:intro:PACF}}. We state and prove fairness-generalization bounds in {\bf Section \ref{sec:intro:generalization}}. Our polynomial-time PACF learning algorithms for linear and logistic regression are in {\bf Section \ref{sec:intro:algorithms}}. {\bf Section \ref{sec:intro:hardness}} elaborates on the hardness of {\em perfectly} metric-fair learning. Further related work is discussed in {\bf Section \ref{sec:intro:related_work}}.

Full and formal details are in {\bf Sections \ref{sec:fairness_defs} through \ref{sec:hardness}}. Conclusions and a discussion of future directions are in Section {\bf \ref{sec:conclusions}}.

\subsection{Approximate Metric-Fairness}
\label{sec:intro:approx_fair}

We require that metric-fairness holds for all but a small $\alpha$ fraction of pairs of individuals. That is, with all but $\alpha$ probability over a choice of two individuals from the underlying distribution, if the two individuals are similar then they get similar classification distributions. We think of $\alpha \in [0,1)$ as a small constant, and note that setting $\alpha=0$ recovers the definition of {\em perfect} metric-fairness (thus, setting $\alpha$ to be a small constant larger than 0 is indeed a relaxation). Similarity is measured by the statistical distance between the classification distributions given to the two individuals, where we also allow a small additive slack $\gamma$ in the similarity measure. The larger $\gamma$ is, the more ``differently'' similar individuals might be treated. We think of $\gamma$ as a small constant, close to 0.

\begin{definition}
A predictor $h$ is $(\alpha,\gamma$) approximately metric-fair (MF) with respect to a similarity metric $d$ and a data distribution $\D$ if:
\begin{align}
\label{eq:fairness_loss}
\L^{F}_{\gamma} \triangleq \Pr_{x,x' \sim \D}[| h(x) - h(x') | > d(x,x') + \gamma ] \leq \alpha
\end{align}
\end{definition}


Similarly to the PAC learning model, we also allow a small $\delta$ probability of failure. This probability is taken over the choice of the training set and over the learner's coins. For example, $\delta$ bounds the probability that the randomly sampled training set is not representative of the underlying population. We think of $\delta$ as very small or even negligible. A learning algorithm is {\em probably approximately metric-fair} if with all but $\delta$ probability over the sample (and the learner's coins), it outputs a classifier that is $(\alpha,\gamma)$-approximately MF. Further details are in Section \ref{sec:fairness_defs}.

Given a well-designed metric, approximate metric-fairness (for sufficiently small $\alpha,\gamma$) guarantees that almost every individual gets fair treatment compared to almost every other individual (see Section \ref{sec:AMF_interpretation} for a quantitative discussion). {\em Every} protected group $P$ of fractional size significantly larger than $\alpha$ is protected in the sense that, on average, members of $P$ are treated similarly to similar individuals outside of $P$. We note, however, that this guarantee does not protect single individuals or small groups (see the discussion in Section \ref{sec:intro:this_work}).

\paragraph{Between group and individual fairness: related works.} Recent works \cite{hebert2017calibration,kearns2017preventing} study fairness notions that aim to protect large collections of sufficiently-large groups. Similarly to our work, these can be viewed as falling between individual and group notions of fairness. A distinction from these works is that approximate metric-fairness protects {\em every} sufficiently-large group, rather than a large collection of groups that is fixed a priori. Recent works \cite{GillenJKR18,KimRR18} extend the study of metric fairness to settings where the metric is not known (whereas we focus on a setting where the metric is fixed and known in its entirety), and consider relaxed fairness notions that allow individual fairness to be violated.

\subsection{Accurate and Fair Learning}
\label{sec:intro:PACF}

Our goal is to obtain learning algorithms that are probably approximately metric-fair, and that simultaneously guarantee non-trivial accuracy. Recall that fairness, on its own, can always be obtained by outputting a constant predictor that ignores its input and treats all individuals identically (indeed, such a classifier is {\em perfectly} metric-fair). It is the combination of the fairness and the accuracy objectives that makes for an interesting task. As discussed above, we follow \cite{DworkHPRZ12} in focusing on finding a predictor that maximizes accuracy, subject to the approximate metric-fairness constraint. This is a natural formulation, as we think of fairness as a hard requirement (imposed, for example, by a regulator), and thus fairness cannot be traded off for better accuracy.

As discussed above, we focus on the setting of binary classification. A {\em learning problem} is defined by an instance domain $\X$ and a class $\H$ of predictors (probabilistic classifiers) $h:\X\rightarrow [0,1]$. A {\em fair} learning problem also includes a similarity metric $d: \X^2 \rightarrow [0,1]$. The learning algorithm gets as input the metric $d$ and a sample of labeled examples, drawn i.i.d. from a distribution $\D$ over labeled examples from $(\X \times \pm 1)$, and its goal is to output a predictor that is both fair and as accurate as possible. 
A {\em proper} learner outputs a predictor in the class $\H$, whereas an {\em improper} learner's output is unconstrained  (but $\H$ is used as a benchmark for accuracy). For a learned (real-valued) predictor $h$, we use $\mathit{err}_D(h)$ to denote the expected $\ell_1$ error of $h$ (the absolute loss) on a random sample from $\D$.\footnote{All results also translate to $\ell_2$ error (the squared loss).}

\paragraph{Accuracy guarantee: PACF learning.}As discussed above, the goal in metric-fair and accurate learning is optimizing the predictor's accuracy subject to the fairness constraint. Ideally, we aim to approach (as the sample size grows) the error rate of the most accurate classifier that satisfies the fairness constraints. A more relaxed benchmark is guaranteeing $(\alpha,\gamma)$-approximate metric-fairness, while approaching the accuracy of the best classifier that is $(\alpha',\gamma')$-approximately metric-fair, for $\alpha' \in [0,\alpha]$ and $\gamma' \in [0,\gamma]$. Our efficient learning algorithms will achieve this more relaxed accuracy goal (see below). We note that even relaxed competitiveness means that the classifier is (at the very least) competitive with the best {\em perfectly} metric-fair classifier.

These goals are captured in the following definition of {\em probably approximately correct and fair (PACF) learning}. Crucially, both fairness and accuracy goals are stated with respect to the (unknown) underlying distribution.

\begin{definition}[PACF Learning] A learning algorithm $\A$ PACF-learns a hypothesis class $\mathcal{H}$ if for every metric $d$ and population distribution $\D$, every required fairness parameters $\alpha, \gamma \in [0,1)$, every failure probability $\delta\in(0,1)$, and every error parameters $\epsilon, \epsilon_\alpha, \epsilon_\gamma\in(0,1)$ the following holds:

There exists a sample complexity $m=\poly\left({\frac{\log|\X| \cdot \log(1/\delta)}{\alpha \cdot \gamma \cdot \epsilon_ \cdot \epsilon_{\alpha} \cdot \epsilon_{\gamma}}}\right)$ and constants $\alpha',\gamma' \in [0,1)$ (specified below), such that with all but $\delta$ probability over an i.i.d. sample  of size $m$ and $\A$'s coin tosses, the output predictor $h$ satisfies the following two conditions:
\begin{enumerate}
\item {\bf Fairness}: $h$ is $(\alpha,\gamma)$-approximately metric-fair w.r.t. the metric $d$ and the distribution $\D$.

\item {\bf Accuracy}: Let $\mathcal{H}_F'$ denote the subclass of hypotheses in $\mathcal{H}$ that are $(\alpha'-\epsilon_{\alpha},\gamma'-\epsilon_{\gamma})$-approximately metric-fair, then: $$\mathit{err}_D(h) \leq \min_{h' \in \mathcal{H}_F'} \mathit{err}_D(h') + \epsilon$$
\end{enumerate}

We say that $\A$ is {\em efficient} if it runs in time $\poly(m)$. If accuracy holds for $\alpha'=\alpha$ and $\gamma'=\gamma$, then we stay that $\A$ is a {\em strong} PACF learning algorithm. Otherwise, we say that $\A$ is a {\em relaxed} PACF learning algorithm.
\end{definition}

See Section \ref{sec:accuracy} and Definitions \ref{def:pacf} and \ref{def:weak_pacf} for a full treatment. Note that the accuracy guarantee is {\em agnostic}: we make no assumptions about the way the training labels are generated. Agnostic learning is particularly well suited to our metric-fairness setting: since we make no assumptions about the metric $d$, even if the labels are generated by $h \in \H$, it might be the case that $d$ does not allow for accurate predictions, in which case a fair learner cannot compete with $h$'s accuracy.

\subsection{Generalization}
\label{sec:intro:generalization}

Generalization is a key issue in learning theory. We develop strong generalization bounds for approximate metric-fairness, showing that with high probability, guaranteeing {\em empirical} approximate MF on a training set also guarantees approximate MF on the underlying distribution (w.h.p. over the choice of sample $S$). This generalization bound opens the door to polynomial-time algorithms that can focus on guaranteeing fairness (and accuracy) on the sample and effectively rules out the possibility of creating a ``false facade'' of fairness (i.e, a classifier that appears fair on a random sample, but is not fair w.r.t new individuals).

Towards proving generalization, we define the empirical fairness loss on a sample $S$ (a training set). Fixing a fairness parameter $\gamma$, a predictor $h$ and a pair of individuals $x,x'$ in the training set, consider the MF loss on the ``edge'' between $x$ and $x'$ (recall that the MF loss is 1 if the ``internal'' inequality of Equation \eqref{eq:fairness_loss} holds, and 0 otherwise). Observe that the losses on the $\binom{|S|}{2}$ edges are not independent random variables (over the choice of $S$), because each individual $x \in S$ affects many edges. Thus, rather than count the empirical MF loss over all edges, we restrict ourselves to a ``matching'' $M(S)$ in the complete graph whose vertices are $S$: a collection of edges, where each individual is involved in exactly one edge. The empirical MF loss of $h$ on $S$ is defined as the average MF loss over edges in $M(S)$.\footnote{The choice of {\em which} matching is used does not affect any of the results. Note that we could also choose to average over {\em all} the edges in the graph induced by $S$. Generalization bounds still follow, but the rate of convergence is not faster than restricting our attention to a matching.} Note that, since we restricted our attention to a matching, the MF losses on these edges are now independent random variables (over the choice of $S$). A classifier is {\em empirically} $(\alpha,\gamma)$-approximately MF if its empirical MF loss is at most $\alpha$. We are now ready to state our generalization bound:

\begin{theorem}
\label{thm:intro_generalization}
Let $\mathcal{H}$ be a hypothesis class with Rademacher complexity $R_m(\mathcal{H}) = (r/\sqrt{m})$. For every $\delta \in(0,1)$ and every $\epsilon_{\alpha},\epsilon_{\gamma} \in (0,1)$,  there exists a sample complexity $m = O\left( \frac{r^2 \cdot \ln(1/\delta)}{\epsilon^2_{\alpha} \cdot \epsilon^2_{\gamma}}  \right)$, such that the following holds:

With probability at least $1-\delta$ over an i.i.d sample $S\sim\D^m$, simultaneously for every $h\in\mathcal{H}$: if $h$ is $(\alpha,\gamma)$-approximately metric-fair on the sample $S$, then $h$ is also $(\alpha + \epsilon_{\alpha}, \gamma + \epsilon_{\gamma})$-approximately metric-fair on the underlying distribution $\D$.
\end{theorem}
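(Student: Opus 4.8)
The plan is to recast approximate metric-fairness as a uniform-convergence statement over the class of \emph{paired} predictors and then run a standard margin (ramp-loss) argument, controlling the relevant Rademacher complexity. \textbf{Setup.} Fix a data-independent perfect matching of the $m$ sample points into $m/2$ pairs; the matched pairs $(x_{2i-1},x_{2i})$ are then i.i.d.\ draws from $\D\times\D$, and a fresh population pair $x,x'\sim\D$ is distributed exactly like one matched pair. Write $g_h(x,x')\triangleq|h(x)-h(x')|-d(x,x')$, and let $\E_S[\cdot]$ denote the average over the $m/2$ matched pairs. Then $h$ is $(\alpha,\gamma)$-approximately MF on $\D$ iff $\Pr_{x,x'\sim\D}[g_h(x,x')>\gamma]\le\alpha$, and $h$ is empirically $(\alpha,\gamma)$-approximately MF iff $\E_S[\mathbf{1}[g_h>\gamma]]\le\alpha$. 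Thus the whole statement is a one-sided uniform-convergence claim for the threshold loss $\mathbf{1}[g_h>\cdot]$ over the product sample.

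\textbf{The ramp trick.} The threshold $\mathbf{1}[g_h>\gamma]$ is not Lipschitz, so Rademacher bounds cannot be applied to it directly; this is precisely where the $\gamma\mapsto\gamma+\epsilon_{\gamma}$ relaxation enters. I would introduce the piecewise-linear ramp $\phi:\R\to[0,1]$ that equals $0$ on $(-\infty,\gamma]$, equals $1$ on $[\gamma+\epsilon_{\gamma},\infty)$, and interpolates linearly in between; it is $(1/\epsilon_{\gamma})$-Lipschitz and obeys the sandwich $\mathbf{1}[t>\gamma+\epsilon_{\gamma}]\le\phi(t)\le\mathbf{1}[t>\gamma]$ for all $t$. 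Hence if $h$ is empirically $(\alpha,\gamma)$-approximately MF then $\E_S[\phi(g_h)]\le\alpha$, while on the population $\Pr_{\D}[g_h>\gamma+\epsilon_{\gamma}]\le\E_{\D}[\phi(g_h)]$. So it suffices to prove: with probability at least $1-\delta$ over $S$, simultaneously for all $h\in\mathcal{H}$, $\E_{\D}[\phi(g_h)]\le\E_S[\phi(g_h)]+\epsilon_{\alpha}$ — standard one-sided uniform convergence for the $[0,1]$-valued function class $\mathcal{F}\triangleq\{\,(x,x')\mapsto\phi(g_h(x,x')):h\in\mathcal{H}\,\}$ over the $m/2$ i.i.d.\ product samples.

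\textbf{Bounding $R_{m/2}(\mathcal{F})$.} By the textbook Rademacher deviation inequality for bounded classes, the displayed bound holds as soon as $2R_{m/2}(\mathcal{F})+\sqrt{\ln(1/\delta)/m}\le\epsilon_{\alpha}$, so it remains to bound $R_{m/2}(\mathcal{F})$. First, for each fixed pair $(x_i,x_i')$ the univariate map $t\mapsto\phi(|t|-d(x_i,x_i'))$ is $(1/\epsilon_{\gamma})$-Lipschitz and vanishes at $t=0$ (because $d\ge 0$ and $\phi\equiv 0$ on $(-\infty,\gamma]$), so the per-coordinate Ledoux--Talagrand contraction inequality yields $R_{m/2}(\mathcal{F})\le(1/\epsilon_{\gamma})\,R_{m/2}(\mathcal{G})$ where $\mathcal{G}\triangleq\{(x,x')\mapsto h(x)-h(x'):h\in\mathcal{H}\}$; note the fixed offset $d$ and the absolute value are absorbed into this contraction step rather than into $\mathcal{H}$ itself. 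Second, splitting $\sum_i\sigma_i(h(x_i)-h(x_i'))$ as $\sum_i\sigma_i h(x_i)+\sum_i(-\sigma_i)h(x_i')$ and using that $-\sigma$ is distributed as $\sigma$ gives $R_{m/2}(\mathcal{G})\le 2R_{m/2}(\mathcal{H})=O(r/\sqrt{m})$. Therefore $R_{m/2}(\mathcal{F})=O\!\big(r/(\epsilon_{\gamma}\sqrt{m})\big)$, and the requirement $2R_{m/2}(\mathcal{F})+\sqrt{\ln(1/\delta)/m}\le\epsilon_{\alpha}$ is met once $m=O\!\big(r^2\ln(1/\delta)/(\epsilon_{\alpha}^2\epsilon_{\gamma}^2)\big)$, matching the claimed sample complexity. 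Chaining this with the ramp sandwich, every $h\in\mathcal{H}$ that is empirically $(\alpha,\gamma)$-approximately MF satisfies $\Pr_{\D}[g_h>\gamma+\epsilon_{\gamma}]\le\alpha+\epsilon_{\alpha}$, i.e.\ $h$ is $(\alpha+\epsilon_{\alpha},\gamma+\epsilon_{\gamma})$-approximately MF on $\D$.

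\textbf{Main obstacle.} The heart of the argument is the ramp step: trading an $\epsilon_{\gamma}$ additive slack in the metric-similarity parameter for a $(1/\epsilon_{\gamma})$-Lipschitz surrogate loss. This is simultaneously the source of the relaxed conclusion ($\gamma+\epsilon_{\gamma}$ in place of $\gamma$) and of the $1/\epsilon_{\gamma}^2$ factor in the sample complexity, and it is the only place where the non-convex, non-Lipschitz nature of the fairness constraint really bites. The only other point needing care is bookkeeping around the product structure — using a data-independent matching to restore independence of the per-edge losses, and keeping the offset $d(x,x')$ and the map $|h(x)-h(x')|$ out of the hypothesis class so that the single contraction applied at the end suffices.
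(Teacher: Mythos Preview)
Your proposal is correct and follows essentially the same approach as the paper: the paper's proof of the corresponding Theorem~\ref{thm:rad_generalization} also uses the matching to obtain i.i.d.\ pairs, replaces the threshold by a $G$-Lipschitz piecewise-linear ramp (with $G=1/\epsilon_{\gamma}$), applies the sandwich inequality, and bounds the Rademacher complexity via contraction together with the same $R(\mathcal{H}')\le 2R(\mathcal{H})$ splitting. The only cosmetic difference is that the paper peels off the absolute value, the additive $-d(x,x')$ shift, and the ramp in three separate Rademacher-complexity steps (picking up an extra $O(1/\sqrt{m})$ term from the shift), whereas you fold all three into a single per-coordinate Ledoux--Talagrand contraction; your version is slightly cleaner but not materially different.
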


See Section \ref{sec:fairness_generalization} and Theorem \ref{thm:rad_generalization} for a full statement and discussion (and see Definition \ref{def:Rademacher} for a definition of Rademacher complexity). Rademacher complexity differs from the celebrated VC-dimension in several respects: first, it is defined for any class of real-valued functions (making it suitable for our setting of learning probabilistic classifiers);  second, it is data-dependent and can be measured from finite samples (indeed, Theorem \ref{thm:intro_generalization} can be stated w.r.t. the {\em empirical} Rademacher complexity on a given sample); third, it often results in tighter uniform convergence bounds (see, e.g, ~\cite{koltchinskii2002empirical}). We note that for every finite hypothesis class $\mathcal{H}$ whose range is $[0,1]$, the Rademacher complexity is bounded by $O(\sqrt{\log|\mathcal{H}|/m})$.

\paragraph{Technical Overview of Theorem \ref{thm:intro_generalization}.} For any class of (bounded) real-valued functions $\F$, the maximal difference (over all functions $f\in \F$) between the function's empirical average on a randomly drawn sample, and the function's true expectation over the underlying distribution, can be bounded in terms of the Rademacher complexity of the class (as well as the sample size and desired confidence). For a hypothesis class $\mathcal{H}$ and a loss function $\ell$, applying this result for the class $\mathcal{L}(\mathcal{H})=\left\{ \ell_{h}\right\} _{h\in\mathcal{H}}$ yields a bound on the maximal difference (over all hypotheses $h\in \H$) between the true loss and the empirical loss, in terms of the Rademacher complexity of the composed class $\mathcal{L}(\mathcal{H})$. If the loss function $\ell$ is $G$-Lipschitz, this can be converted to a bound in terms of the Rademacher complexity of $\mathcal{H}$ using the fact that $R\left(\mathcal{L}(\mathcal{H})\right)\leq G\cdot R\left(\mathcal{H}\right)$.

Turning our attention to generalization of the fairness guarantee, we are faced with the problem that our ``0-1'' MF loss function is \emph{not Lipschitz}. We resolve this by defining an approximation $\ell'$ to the MF loss that is a piece-wise linear and $G$-Lipschitz function. The approximation $\ell'$ {\em does} generalize, and so we conclude that the empirical MF loss is close to the empirical value of $\ell'$, which is close to the {\em true} value of $\ell'$, which in turn is close to the {\em true} MF loss. The approximation incurs a $1/G$ additive slack in the fairness guarantee. The larger $G$ is, the more accurately $\ell'$ approximates the MF loss, but this comes at the price of increasing the Lipschitz constant (which hurts generalization). The generalization theorem statement above reflects a choice of $G$ that trades off these conflicting concerns.


\subsubsection{Information-Theoretic Sample Complexity}
\label{sec:intro:information-theoretic}

The fairness-generalization result of Theorem \ref{thm:intro_generalization} implies that, from a sample-complexity perspective, any hypothesis class is strongly PACF learnable, with sample complexity comparable to that of standard PAC learning. An exponential-time PACF learning algorithm simply finds the predictor in $\H$ that minimizes the empirical error, while also satisfying empirical approximate metric-fairness.

\begin{theorem}
Let $\mathcal{H}$ be a hypothesis class with Rademacher complexity $R_m(\mathcal{H}) = (r/\sqrt{m})$. Then $\mathcal{H}$ is information-theoretically strongly PACF learnable with sample complexity $m=O\left(\frac{r^{2}\ln (1/\delta)}{\left(\epsilon'\right)^{2}}\right)$, for $\epsilon'=\min\left\{ \epsilon,\epsilon_{\alpha},\epsilon_{\gamma}\right\}$ .
\end{theorem}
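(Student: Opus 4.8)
The plan is to combine the fairness-generalization bound (Theorem~\ref{thm:intro_generalization}) with a standard agnostic uniform-convergence bound for the $\ell_1$ accuracy loss, and then apply both to the ``empirical risk minimizer subject to empirical fairness'' (ERMF) rule. Concretely, the exponential-time algorithm is: draw a sample $S \sim \D^m$, and among all $h \in \H$ that are empirically $(\alpha,\gamma)$-approximately MF on $S$ (i.e.\ whose empirical MF loss over a fixed matching $M(S)$ is at most $\alpha$), output the one minimizing the empirical $\ell_1$ error $\widehat{\err}_S(h)$. I need to show this rule produces an $h$ that is, w.h.p., both $(\alpha,\gamma)$-approximately MF on $\D$ and competitive (up to $\epsilon$) with $\min_{h' \in \H_F'}\err_\D(h')$ where $\H_F'$ is the subclass of $\H$ that is $(\alpha,\gamma)$-approximately MF on $\D$ --- note that for \emph{strong} learnability we want $\alpha'=\alpha$, $\gamma'=\gamma$ with the $-\epsilon_\alpha,-\epsilon_\gamma$ slack appearing exactly as in the PACF definition.

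The first step is to invoke Theorem~\ref{thm:intro_generalization} twice, in both directions. In one direction: with probability $\geq 1-\delta/3$, every $h\in\H$ that is empirically $(\alpha,\gamma)$-MF on $S$ is $(\alpha+\epsilon_\alpha,\gamma+\epsilon_\gamma)$-MF on $\D$ --- this gives the fairness conclusion for the output. In the other direction I need the ``symmetric'' statement that a hypothesis which is $(\alpha',\gamma')$-MF on $\D$ for sufficiently tighter parameters is empirically $(\alpha,\gamma)$-MF on $S$; this follows from the same uniform-convergence argument applied to the Lipschitz surrogate loss $\ell'$, with the roles of sample and population swapped (the Rademacher bound and the surrogate-approximation slack are two-sided). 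Choosing the tighter parameters to be $\alpha-\epsilon_\alpha$ and $\gamma-\epsilon_\gamma$ then guarantees that any $h' \in \H_{F}'$ (with $\H_F'$ defined via $(\alpha-\epsilon_\alpha,\gamma-\epsilon_\gamma)$-MF on $\D$, matching the PACF definition with $\alpha'=\alpha,\gamma'=\gamma$) is a feasible candidate for the ERMF program on $S$, so the program is non-empty and its optimum is at least as accurate (empirically) as the best such $h'$.

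The second step is the accuracy bound. Since $\H$ has Rademacher complexity $r/\sqrt{m}$ and the $\ell_1$ loss $h \mapsto |h(x)-y|$ is $1$-Lipschitz in $h(x)$ and bounded in $[0,1]$, the contraction lemma gives $R_m(\L_{\ell_1}(\H)) \le r/\sqrt m$, and the standard Rademacher generalization theorem yields: with probability $\geq 1-\delta/3$, simultaneously for all $h \in \H$, $|\err_\D(h) - \widehat{\err}_S(h)| \le O\big(r/\sqrt m + \sqrt{\ln(1/\delta)/m}\big)$. Setting this quantity to $\le \epsilon/2$ determines the sample complexity contribution $m = O(r^2\ln(1/\delta)/\epsilon^2)$. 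Now chain the inequalities: letting $h$ be the ERMF output and $h^\star = \argmin_{h'\in\H_F'}\err_\D(h')$, we have $\err_\D(h) \le \widehat{\err}_S(h) + \epsilon/2 \le \widehat{\err}_S(h^\star) + \epsilon/2 \le \err_\D(h^\star) + \epsilon$, where the middle inequality uses that $h^\star$ is feasible for the empirical program (by step one) and $h$ is its minimizer. A union bound over the three events and taking $m$ to be the maximum of the sample sizes required --- which is $O(r^2\ln(1/\delta)/(\epsilon')^2)$ for $\epsilon' = \min\{\epsilon,\epsilon_\alpha,\epsilon_\gamma\}$, since the fairness-generalization pieces need $m = O(r^2\ln(1/\delta)/(\epsilon_\alpha^2\epsilon_\gamma^2))$ and one can absorb the $\epsilon_\alpha\epsilon_\gamma$ product into $(\epsilon')^2$ by a crude bound --- completes the argument.

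The main obstacle, and the part requiring the most care, is the two-sidedness of the fairness generalization together with getting the parameter bookkeeping to land exactly on \emph{strong} PACF ($\alpha'=\alpha$, $\gamma'=\gamma$). Theorem~\ref{thm:intro_generalization} as stated is a one-directional (sample $\Rightarrow$ population) implication with additive slacks $\epsilon_\alpha,\epsilon_\gamma$; I must verify that the underlying surrogate-loss uniform-convergence argument is genuinely symmetric so that a population-$(\alpha-\epsilon_\alpha,\gamma-\epsilon_\gamma)$-MF hypothesis is empirically $(\alpha,\gamma)$-MF, and that the additive surrogate slack $1/G$ can be folded into the same $\epsilon_\gamma$ budget (it can, since the generalization theorem already chose $G$ to make this work). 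A secondary, more mundane point is confirming that the Lipschitz-contraction step for the $\ell_1$ loss does not lose more than a constant factor in the Rademacher complexity, which is standard (Talagrand/Ledoux--Talagrand contraction). Everything else is routine chaining of high-probability events.
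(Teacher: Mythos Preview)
Your overall strategy---constrained ERM plus two-sided fairness generalization plus standard accuracy uniform convergence---is exactly the paper's approach. However, there is a concrete parameter error in your fairness step. You run ERMF subject to empirical $(\alpha,\gamma)$-MF on $S$, and then (correctly, by Theorem~\ref{thm:intro_generalization}) conclude the output is $(\alpha+\epsilon_\alpha,\gamma+\epsilon_\gamma)$-MF on $\D$. But the PACF fairness clause requires the output to be $(\alpha,\gamma)$-MF on $\D$, not $(\alpha+\epsilon_\alpha,\gamma+\epsilon_\gamma)$-MF; you state the target correctly at the outset (``both $(\alpha,\gamma)$-approximately MF on $\D$'') and then do not notice that your conclusion falls short of it.

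The fix, which is precisely what the paper does (Theorem~\ref{thm:inf_fpac}), is to run ERMF subject to a \emph{tightened} empirical constraint: constrain to empirical $(\widetilde{\alpha},\widetilde{\gamma})$-MF with $\widetilde{\alpha}=\alpha-\Delta_m$ and $\widetilde{\gamma}=\gamma-1/G$, where $\Delta_m$ and $1/G$ are the additive slacks appearing in the generalization bound. The sample-to-population direction then yields exactly $(\alpha,\gamma)$-MF on $\D$. For the accuracy direction one chooses $m$ and $G$ so that $\Delta_m\leq\epsilon_\alpha$ and $1/G\leq\epsilon_\gamma$, whence the feasible set $\widehat{H}^{\widetilde{\alpha},\widetilde{\gamma}}$ contains $\widehat{H}^{\alpha-\epsilon_\alpha,\gamma-\epsilon_\gamma}$, and your chaining (together with the reverse inclusion from the two-sided generalization you already invoke) goes through unchanged. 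With this one-line correction to the algorithm, your argument and the paper's coincide.
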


\subsection{Efficient Fair Learning}
\label{sec:intro:algorithms}

One of our primary contributions is the construction of polynomial-time relaxed-PACF learning algorithms for expressive hypothesis classes. We focus on linear classification tasks, where the labels are determined by a separating hyperplane. Learning linear classifiers, also referred to as halfspaces or linear threshold functions, is a central tool in machine learning. By embedding a learning problem into a higher-dimensional space, linear classifiers (over the expanded space) can capture surprisingly strong classes, such as polynomial threshold functions (see, for example, the discussion in \cite{HellersteinS07}).  The ``kernel trick'' (see, e.g, \cite{shalev2014understanding}) can allow for efficient solutions even over very high (or infinite) dimensional embeddings. Many of the known (distribution-free) PAC learning algorithms can be derived by learning linear threshold functions \cite{HellersteinS07}.

Recall that in {\em metric-fair} learning, we aim to learn a probabilistic classifier, or a {\em predictor}, that outputs a real value in $[0,1]$. We interpret the output as the probability of assigning the label $1$. We are thus in the setting of {\em regression}. We show polynomial-time relaxed-PACF learning algorithms for {\em linear regression} and for {\em logistic regression}. See Section \ref{sec:learning-linear} for full and formal details.

\subsubsection{Linear Regression}
\label{sec:intro:linear-regression}

Linear regression, the task of learning linear predictors, is an important and well-studied problem in the machine learning literature. In terms of accuracy, this is an appealing class when we expect a linear relationship between the probability of the label being $1$ and the distance from a hyperplane. Taking the domain $\X$ to be the unit ball, we define the class of linear predictors as:
\begin{align*}
H_{\lin} \overset {{\scriptscriptstyle {\scriptstyle \text{def}}}}{=}\left\{ \mathbf{x}\mapsto \frac{1 + \left\langle \mathbf{w},\mathbf{x}\right\rangle}{2} :\,\,\|\mathbf{w}\|\leq1\right\},
\end{align*}
We restrict $w$ to the unit ball to guarantee that  $\left\langle \mathbf{w},\mathbf{x}\right\rangle \in [-1,1]$. We then invoke a linear transformation so that the final prediction is in $[0,1]$, as required. Restricting the predictor's output to  the range $[0,1]$ is important. In particular, it means that a linear predictor must be $(1/2)$-Lipschitz, which might not be appropriate for certain classification tasks (see the discussion of logistic regression below).

We show a relaxed PACF learning algorithm for $H_{\lin}$:

\begin{theorem}
\label{thm:intro_linear}
$H_{\lin}$ is relaxed PACF learnable with sample and time complexities of $poly(\frac{1}{\epsilon_\gamma}, \frac{1}{\epsilon_\alpha}, \frac{1}{\epsilon}, \log \frac{1}{\delta})$. For every $\gamma' \in [0,1)$ and $\alpha' = (\alpha \cdot \gamma - \gamma')$, the accuracy of the learned predictor approaches (or beats) the most accurate  $(\alpha',\gamma')$-approximately MF predictor.
\end{theorem}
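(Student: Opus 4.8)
The plan is to reduce to an \emph{empirical} optimization problem and exploit the fact that, for linear predictors, the metric-fairness violations become convex in the weight vector — even though the ``at most an $\alpha$-fraction of pairs is violated'' constraint does not. Write $h_{\mathbf{w}}(\mathbf{x}) = \tfrac{1+\langle\mathbf{w},\mathbf{x}\rangle}{2}\in H_{\lin}$, and note $h_{\mathbf{w}}(\mathbf{x}) - h_{\mathbf{w}}(\mathbf{x}') = \tfrac12\langle\mathbf{w},\mathbf{x}-\mathbf{x}'\rangle$, so $|h_{\mathbf{w}}(\mathbf{x})-h_{\mathbf{w}}(\mathbf{x}')| = \tfrac12|\langle\mathbf{w},\mathbf{x}-\mathbf{x}'\rangle|$ is convex in $\mathbf{w}$. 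Fixing a matching $M(S)$ on the sample, the per-pair surrogate fairness loss
\[
 g_S(\mathbf{w}) \;\triangleq\; \frac{2}{m}\sum_{(\mathbf{x},\mathbf{x}')\in M(S)} \max\!\Big(0,\ \tfrac12\bigl|\langle\mathbf{w},\mathbf{x}-\mathbf{x}'\rangle\bigr| - d(\mathbf{x},\mathbf{x}')\Big)
\]
is convex in $\mathbf{w}$, and so is the empirical absolute loss $\err_S(h_{\mathbf{w}}) = \tfrac1m\sum_i|h_{\mathbf{w}}(\mathbf{x}_i)-y_i|$. The algorithm solves the convex program ``minimize $\err_S(h_{\mathbf{w}})$ over $\{\mathbf{w}:\|\mathbf{w}\|\le 1,\ g_S(\mathbf{w})\le\tau\}$'' for a threshold $\tau$ calibrated below; $\mathbf{w}=\mathbf{0}$ is feasible ($g_S(\mathbf{0})=0$, $\err_S=\tfrac12$), and subgradients of $g_S$ and $\err_S$ are trivially computable, so this is solvable to arbitrary accuracy in polynomial time (e.g.\ by the ellipsoid method with a separation oracle). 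Since objective and constraints depend on $\mathbf{w}$ only through the inner products $\langle\mathbf{w},\mathbf{x}_i\rangle$ and through $\|\mathbf{w}\|$, a representer-type argument lets us reparametrize $\mathbf{w}$ inside $\mathrm{span}\{\mathbf{x}_i\}$ and work with the $m\times m$ Gram matrix, making the running time dimension-free (the ``kernel trick''), i.e.\ $\poly(m)$.

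Correctness has three parts. \emph{(i) The benchmark is feasible.} Let $h^{*}=h_{\mathbf{w}^{*}}\in H_{\lin}$ be the most accurate predictor that is $(\alpha'-\epsilon_\alpha,\gamma'-\epsilon_\gamma)$-approximately MF on $\D$, where $\alpha' = \alpha\gamma-\gamma'$; this is a predictor fixed independently of $S$. For a random pair, the per-pair surrogate loss of $h^{*}$ is at most $\gamma'-\epsilon_\gamma$ when $|h^{*}(\mathbf{x})-h^{*}(\mathbf{x}')|\le d(\mathbf{x},\mathbf{x}')+(\gamma'-\epsilon_\gamma)$ and at most $1$ otherwise, so $\E_{\mathbf{x},\mathbf{x}'\sim\D}[\text{per-pair loss}]\le(\gamma'-\epsilon_\gamma)+(\alpha'-\epsilon_\alpha)$. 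The matched pairs are i.i.d.\ and $\mathbf{w}^{*}$ is fixed, so Hoeffding's inequality gives $g_S(\mathbf{w}^{*})\le\tau:=(\gamma'-\epsilon_\gamma)+(\alpha'-\epsilon_\alpha)+\sqrt{\ln(3/\delta)/m}$ with probability $1-\delta/3$; hence $\mathbf{w}^{*}$ is feasible and the returned $\hat{\mathbf{w}}$ obeys $\err_S(h_{\hat{\mathbf{w}}})\le\err_S(h^{*})$. \emph{(ii) Accuracy.} Since $R_m(H_{\lin})=O(1/\sqrt m)$ and absolute loss is $1$-Lipschitz, Rademacher uniform convergence gives $\sup_{h\in H_{\lin}}|\err_S(h)-\err_\D(h)|\le\epsilon/2$ w.p.\ $1-\delta/3$ for $m=\poly(1/\epsilon,\log(1/\delta))$; combining with (i), $\err_\D(h_{\hat{\mathbf{w}}})\le\err_\D(h^{*})+\epsilon$. \emph{(iii) Fairness.} From $g_S(\hat{\mathbf{w}})\le\tau$, Markov's inequality on the empirical distribution over $M(S)$ shows the fraction of pairs with $\tfrac12|\langle\hat{\mathbf{w}},\mathbf{x}-\mathbf{x}'\rangle|-d(\mathbf{x},\mathbf{x}')>\gamma-\epsilon_\gamma$ is at most $\tau/(\gamma-\epsilon_\gamma)$. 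Substituting $\alpha'=\alpha\gamma-\gamma'$, one checks $\tau/(\gamma-\epsilon_\gamma)\le\alpha-\epsilon_\alpha$ as soon as $\sqrt{\ln(3/\delta)/m}\le\epsilon_\alpha(1-\gamma)+\epsilon_\gamma(1-\alpha)$, i.e.\ for $m=\poly(1/\epsilon_\alpha,1/\epsilon_\gamma,\log(1/\delta))$ (treating $\alpha,\gamma$ as bounded away from $1$). Thus $h_{\hat{\mathbf{w}}}$ is $(\alpha-\epsilon_\alpha,\gamma-\epsilon_\gamma)$-approximately MF on $S$, and Theorem~\ref{thm:intro_generalization} (with $r=O(1)$) upgrades this to $(\alpha,\gamma)$-approximate MF on $\D$ w.p.\ $1-\delta/3$. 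A union bound over the three failure events completes the argument.

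The main obstacle is precisely where convexity breaks: the genuine fairness requirement is a non-convex cardinality constraint (``at most an $\alpha$-fraction of pairs is $\gamma$-violated''), which even for linear predictors cannot be handled directly. Replacing it by the convex surrogate $g_S(\mathbf{w})\le\tau$ restores tractability, but the only route back to the cardinality constraint is Markov's inequality, which is lossy — and this loss is exactly what forces a \emph{relaxed} rather than strong PACF guarantee. Quantitatively, the Markov threshold $\gamma$ contributes the factor $\alpha\gamma$, while the $-\gamma'$ term accounts for the surrogate loss that even a $\gamma'$-fair predictor incurs on its many non-violated pairs; this is how the relationship $\alpha' = \alpha\gamma-\gamma'$ (with $\gamma'$ a free parameter trading the strength of the two components of the benchmark) arises. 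A secondary point requiring care is keeping every rate — Rademacher uniform convergence for accuracy, Hoeffding for feasibility of $\mathbf{w}^{*}$, and the invocation of Theorem~\ref{thm:intro_generalization} for fairness — polynomial in the listed parameters and independent of the ambient dimension, which is possible only because the governing complexity measure of $H_{\lin}$ is its $O(1/\sqrt m)$ Rademacher complexity.
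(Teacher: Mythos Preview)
Your proposal is correct and follows essentially the same strategy as the paper: replace the non-convex ``$\alpha$-fraction of $\gamma$-violations'' constraint by the convex $\ell_1$ surrogate $g_S(\mathbf{w})\le\tau$, use Markov to recover empirical approximate metric-fairness, and invoke the Rademacher-based generalization theorem for both accuracy and fairness. The one notable difference is your feasibility step (i): you fix the single benchmark hypothesis $\mathbf{w}^*$ in advance and apply Hoeffding to its $\ell_1$ fairness loss, whereas the paper uses the uniform-convergence bound (their Claim~\ref{claim:h_remove_hat}, derived from the fairness-generalization corollary) in both directions, which yields the containment $H_{\ell_0}^{\alpha-\rho,\gamma-1/G}\subseteq \widehat{H}_{\ell_0}^{\alpha,\gamma}$ over the whole class. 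Your shortcut is legitimate because $\mathbf{w}^*$ is independent of $S$, and it slightly simplifies the bookkeeping; the paper's route has the advantage that the accuracy guarantee holds \emph{simultaneously} for all $\gamma'$ with a single high-probability event, whereas your Hoeffding bound is for one fixed $\gamma'$ at a time (which is still enough for the relaxed-PACF definition, since $g$ is fixed there). Your observation that $\tau$ collapses to $\alpha\gamma-\epsilon_\alpha-\epsilon_\gamma+o(1)$ independently of $\gamma'$ (because $\alpha'+\gamma'=\alpha\gamma$) is exactly what makes the single algorithm work for every $\gamma'$, and matches the paper's choice $\widetilde\alpha=(\alpha-\rho)\widetilde\gamma\approx\alpha\gamma$.
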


\paragraph{Algorithm overview.} Since the Rademacher complexity of (bounded) linear functions is small \cite{kakade2009complexity}, Theorem  \ref{thm:intro_generalization} implies that empirical approximate metric-fairness on the training set generalizes to the underlying population. Thus, given the metric and a training set, our task is to find a linear predictor that is as accurate as possible, conditioned on the {\em empirical} fairness constraint. We use $H = H_{\lin}$ to denote the class of linear predictors defined above. Fixing desired fairness parameters $\alpha,\gamma \in (0,1)$, let $\widehat{H}^{\alpha,\gamma} \subseteq H$ be the subset of linear functions that are also $(\alpha,\gamma)$-approximately MF on the training set. Given a training set $S$ of $m$ labeled examples, we would like to solve the following optimization problem:
\begin{eqnarray*}
\argmin_{h \in H} \err_S(h) \,\, \text{subject to} \,\, h \in \widehat{H}^{\alpha,\gamma}
\end{eqnarray*}
Observe, however, that $\widehat{H}^{\alpha,\gamma}$ is not a convex set. This is a consequence of the ``$0/1$'' metric-fairness loss. Thus, we do not know how to solve the above optimization problem efficiently. Instead, we will further constrain the predictor $h$ by bounding its {\em $\ell_1$ MF loss}. For a predictor $h$ let its (empirical) $\ell_1$ MF violation $\xi_S(h)$ be given by:
\begin{eqnarray*}
\xi_S(h) = \sum_{(x,x') \in M(S)} \max \left( 0, |h(x) - h(x')| - d(x,x') \right).
\end{eqnarray*}
For $\tau \in [0,1]$, we  take $\widehat{H}^{\tau}_{\ell_1} \subset H$ to be the set of linear predictors $h$ s.t. $\xi_S(h) \leq \tau$. For any fixed $\tau$, this is a convex set, and we can find the most (empirically) accurate predictor in $\widehat{H}^{\tau}_{\ell_1}$ in polynomial time. For fairness, we show that small $\ell_1$ fairness loss also implies the standard notion of approximate metric-fairness (with related parameters $\alpha,\gamma$). For accuracy, we also show that approximate metric-fairness (with smaller fairness parameters) implies small $\ell_1$ loss. Thus, optimizing over predictors whose $\ell_1$ loss is bounded gives a predictor that is competitive with (a certain class of) approximately MF predictors. In particular for $\tau,\sigma \in [0,1)$ we have:
\begin{eqnarray*}
\widehat{H}^{\tau-\sigma,\sigma} \subseteq \widehat{H}_{\ell_{1}}^{\tau} \subseteq \widehat{H}^{\frac{\tau}{\gamma},\gamma}
\end{eqnarray*}
Thus, by picking $\tau = \alpha \cdot \gamma$ we guarantee (empirical) $(\alpha,\gamma)$-approximate metric-fairness. Moreover, for any choice of $\sigma$, the set over which we optimize contains all of the predictors that are  $((\alpha \gamma - \sigma),\sigma)$-approximately MF. Thus, our (empirical) accuracy is competitive with all such predictors, and we obtain a relaxed PACF algorithm. The empirical fairness and accuracy guarantees generalize beyond the training set by Theorem \ref{thm:intro_generalization} (fairness-generalization) and a standard uniform convergence argument for accuracy.

\subsubsection{Logistic Regression}
\label{sec:intro:logistic-regression}

Logistic regression is another appealing class.  Here, the prediction need not a be a linear function of the distance from a hyperplane. Rather, we allow the use of a sigmoid function $\phi_{\ell}:\,\left[-1,1\right]\rightarrow\left[0,1\right]$ defined as $\phi_{\ell}(z)=\frac{1}{1+\exp(-4\ell \cdot z)}$ (which is continuous and $\ell$-Lipschitz). The class of logistic predictors is formed by composing a linear function with a sigmoidal transfer function:
\begin{equation}
	H_{\phi,L}\overset{{\scriptscriptstyle {\scriptstyle \text{def}}}}{=}\left\{ \mathbf{x}\mapsto\phi_{\ell}\left(\left\langle w,\mathbf{x}\right\rangle \right):\,\|\mathbf{w}\| \leq 1,\, \ell \in [0,L]\right\}
\end{equation}
The sigmoidal transfer function gives the predictor the power to exhibit sharper transitions from low predictions to high predictions around a certain distance (or decision) threshold. For example, suppose a distance from the hyperplane provides a quality score for candidates with respect to a certain task. Suppose also that an employer wants to hire candidates whose quality scores are above some threshold $\eta \in [-1,+1]$. The class $H_{\sig,L}$ can give probabilities close to 0 to candidates whose quality scores are under $\eta - 1/L$, and probabilities close to 1 to candidates whose quality scores are over $\eta + 1/L$. Linear predictors, on the other hand, need to be $(1/2)$-Lipschitz (since we restrict their output to be in $[0,1]$, see Section \ref{sec:intro:linear-regression}).\footnote{This might, at first glance, seem like a technicality. After all, why not simply consider linear predictors whose output can be in a larger range? The problem is that it isn't clear how to plug these larger values into the fairness constraints in a way that keeps the optimization problem convex and also has competitive accuracy.} Logistic predictors seem considerably better-suited to this type of scenario. Indeed, the class $H_{\phi,L}$ can achieve good accuracy on linearly separable data whose margin (i.e. the expected distance from the hyperplane) is larger than $1/L$. Moreover, similarly to linear threshold functions, logistic regression can be applied after embedding the learning problem into a higher-dimensional space. For example, in the ``quality score'' example above, the score could be computed by a low-degree polynomial.

Our primary technical contribution is a polynomial-time relaxed PACF learner for $H_{\sig,L}$ where $L$ is constant.

\begin{theorem}
For every constant $L > 0$, $H_{\sig,L}$ is relaxed PACF learnable with sample and time complexities of $poly(\frac{1}{\epsilon_\gamma}, \frac{1}{\epsilon_\alpha}, \frac{1}{\epsilon}, \log \frac{1}{\delta})$. For every $\gamma' \in [0,1)$ and $\alpha' = (\alpha \cdot \gamma - \gamma')$, the accuracy of the learned predictor approaches (or beats) the most accurate  $(\alpha',\gamma')$-approximately MF predictor.
\end{theorem}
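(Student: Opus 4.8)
The plan is to follow the blueprint of Theorem~\ref{thm:intro_linear} --- establish fairness- and accuracy-generalization, reduce to an empirical optimization, and sandwich a convex feasible set between two approximate-MF classes --- while overcoming the fact that $H_{\sig,L}$ is non-convex in its parameters by first convexifying via a polynomial (kernel) embedding.

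\textbf{Generalization.} Writing $\psi(z)\triangleq\phi_1(z)=1/(1+\exp(-4z))$ we have $\phi_\ell(\langle\mathbf{w},\mathbf{x}\rangle)=\psi(\langle\ell\mathbf{w},\mathbf{x}\rangle)$, so $H_{\sig,L}=\{\mathbf{x}\mapsto\psi(\langle\mathbf{v},\mathbf{x}\rangle):\|\mathbf{v}\|\le L\}$ is the composition of the fixed $1$-Lipschitz $\psi$ with bounded linear functionals over the unit ball. By the contraction property $R_m(\psi\circ\mathcal{G})\le R_m(\mathcal{G})$ and the standard Rademacher bound for bounded linear functionals, $R_m(H_{\sig,L})=O(L/\sqrt m)$. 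Plugging $r=O(L)$ into Theorem~\ref{thm:intro_generalization} shows that empirical $(\alpha,\gamma)$-approximate MF on a sample of size $m=\poly(1/\epsilon_\alpha,1/\epsilon_\gamma,\log(1/\delta))$ implies $(\alpha+\epsilon_\alpha,\gamma+\epsilon_\gamma)$-approximate MF on $\D$ (and, in the reverse direction, that a logistic predictor that is $(\alpha',\gamma')$-approximately MF on $\D$ remains roughly so on the sample); a routine Rademacher uniform-convergence bound likewise relates $\err_S$ and $\err_\D$ over the (improper) class used below. Hence it suffices to output a predictor that is empirically $(\alpha,\gamma)$-approximately MF with empirical error competitive, up to $O(\epsilon)$, with the empirically-best member of the relevant relaxed-fairness subclass.

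\textbf{Convexification.} As in the linear case the empirically $(\alpha,\gamma)$-approximately MF logistic predictors do not form a convex set; worse, even the $\ell_1$-relaxed constraint $\xi_S(h)\le\tau$ is non-convex because $(\mathbf{w},\ell)\mapsto\phi_\ell(\langle\mathbf{w},\mathbf{x}\rangle)$ is non-convex. I would approximate $\psi$ on $[-L,L]$ by a polynomial $p$ of degree $k$ with $\|p-\psi\|_{\infty,[-L,L]}\le\epsilon'$. Since $\psi$ is analytic with its nearest complex singularity at distance $\pi/4$ from the real axis and $L$ is constant, Chebyshev truncation yields degree $k=O(\log(1/\epsilon'))$, and tracking the Chebyshev-to-monomial change of basis bounds the monomial coefficients by $\poly(1/\epsilon')$ (for constant $L$). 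Writing $p(\langle\mathbf{v},\mathbf{x}\rangle)=\langle\beta(\mathbf{v}),\Psi_k(\mathbf{x})\rangle$ for the degree-$\le k$ monomial feature map $\Psi_k$ --- which satisfies $\|\Psi_k(\mathbf{x})\|\le\sqrt{k+1}$ on the unit ball and has the efficiently computable kernel $K(\mathbf{x},\mathbf{x}')=\sum_{j=0}^{k}\langle\mathbf{x},\mathbf{x}'\rangle^j$ --- every $\|\mathbf{v}\|\le L$ maps to $\|\beta(\mathbf{v})\|\le B$ for some $B=\poly(1/\epsilon')$. Relaxing $\{\beta(\mathbf{v}):\|\mathbf{v}\|\le L\}$ to the full ball $\{\beta:\|\beta\|\le B\}$ and clipping predictions to $[0,1]$ yields an improper class $\bar H$ of clipped degree-$\le k$ polynomial predictors; clipping is a contraction, so $R_m(\bar H)=O(B\sqrt k/\sqrt m)$, still polynomial in the parameters.

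\textbf{Optimization and wrap-up.} Over $\bar H$ both $\err_S(\cdot)$ and the $\ell_1$ MF violation $\xi_S(\cdot)=\frac1{|M(S)|}\sum_{(\mathbf{x},\mathbf{x}')\in M(S)}\max(0,|\langle\beta,\Psi_k(\mathbf{x})-\Psi_k(\mathbf{x}')\rangle|-d(\mathbf{x},\mathbf{x}'))$ are convex in $\beta$, and $\{\|\beta\|\le B\}$ is convex, so $\argmin_{\|\beta\|\le B}\err_S$ subject to $\xi_S\le\tau$ is a convex program, solvable in $\poly(m)$ time through its kernel representation (e.g.\ projected subgradient descent). Taking $\tau=\alpha\gamma$ and invoking the same inclusions as in the linear case, $\widehat H^{\,\tau-\sigma,\,\sigma}\subseteq\widehat H_{\ell_1}^{\,\tau}\subseteq\widehat H^{\,\tau/\gamma,\,\gamma}$ --- whose proofs use only boundedness of the clipped predictor, not linearity --- the clipped optimum is empirically $(\alpha,\gamma)$-approximately MF, hence $(\alpha+\epsilon_\alpha,\gamma+\epsilon_\gamma)$-approximately MF on $\D$. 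For accuracy, given $\gamma'\in[0,1)$, $\alpha'=\alpha\gamma-\gamma'$ and any $h^*\in H_{\sig,L}$ that is $(\alpha'-\epsilon_\alpha,\gamma'-\epsilon_\gamma)$-approximately MF on $\D$, its surrogate $\langle\beta(\mathbf{v}^*),\Psi_k(\cdot)\rangle$ lies in the feasible ball and, using $\|p-\psi\|_\infty\le\epsilon'$, the reverse generalization bound, and the left inclusion, satisfies $\xi_S\le\tau$; hence the optimum's empirical error is at most that of the surrogate, which is within $\epsilon'+O(\epsilon)$ of $\err_\D(h^*)$, and uniform convergence gives $\err_\D(\text{output})\le\err_\D(h^*)+\epsilon$. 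Choosing $\epsilon'=\Theta(\min\{\epsilon,\epsilon_\alpha,\epsilon_\gamma\})$ makes every slack fit while keeping $k$, $B$, and $m$ polynomial. \textbf{Main obstacle:} the convexification step, and in particular keeping the embedding degree only logarithmic in $1/\epsilon'$ (which needs the sigmoid's analyticity and is exactly where constancy of $L$ enters), so that the coefficient norm $B$ --- which governs both the runtime and the sample complexity of the kernelized program --- stays polynomial; a secondary chore is verifying that the $\ell_1$-vs-$0/1$ MF sandwich and Theorem~\ref{thm:intro_generalization} still apply to the improper, clipped polynomial predictors produced by the relaxation.
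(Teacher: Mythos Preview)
Your proposal is correct and follows the same high-level strategy as the paper: convexify $H_{\sig,L}$ by an improper polynomial/kernel embedding, solve the resulting convex program under the $\ell_1$-MF relaxation, invoke the $\widehat H^{\tau-\sigma,\sigma}\subseteq\widehat H_{\ell_1}^{\tau}\subseteq\widehat H^{\tau/\gamma,\gamma}$ sandwich (which indeed only uses boundedness of the predictors), and generalize via Rademacher bounds. The difference is in how the embedding is realized. The paper works in the RKHS of Vovk's infinite polynomial kernel $K(\mathbf{x},\mathbf{x}')=(1-\tfrac12\langle\mathbf{x},\mathbf{x}'\rangle)^{-1}$ and invokes the approximation lemma of Shalev-Shwartz, Shamir and Sridharan as a black box to show that every $h\in H_{\sig,L}$ is $\epsilon'$-uniformly close to some $h_B\in H_B$ once $B=\exp(O(L\log(L/\epsilon')))$; optimization is then carried out via the Representer Theorem. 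You instead build the approximation explicitly: Chebyshev-truncate the sigmoid on $[-L,L]$ to a degree-$k$ polynomial, embed via the finite feature map $\Psi_k$, and bound the monomial-coefficient norm $B$ directly. Both routes give $B=\poly(1/\epsilon')$ for constant $L$ (and both blow up exponentially in $L$), so the resulting sample and time bounds coincide. Your version is more self-contained and makes the dependence on $L$ transparent through the analyticity radius $\pi/4$; the paper's version is shorter because it outsources the approximation step. One cosmetic point: the paper does not clip the RKHS predictors to $[0,1]$ --- the losses and Rademacher arguments go through using only boundedness of $\langle\mathbf v,\psi(\mathbf x)\rangle$ --- so your clipping step is harmless but not strictly needed.
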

More generally, our algorithm is exponential in the parameter $L$. Recall that we expect to have good accuracy on linearly separable data whose margins are larger than $(1/L$). Thus, one can interpret the algorithm as having runtime that is exponential in the reciprocal of the (expected) margin.

\paragraph{Algorithm overview.}
We note that fair learning of logistic predictors is considerably more challenging than the linear case, because the sigmoidal transfer function specifies non-convex fairness constraints. In standard logistic regression, where fairness is not a concern, polynomial-time learning is achieved by replacing the standard loss with a convex logistic loss. In metric-fair learning, however, it not clear how to replace the sigmoidal transfer function by a convex surrogate.

To overcome these barriers, we use {\em improper} learning. We embed the linear problem at hand into a higher-dimensional space, where logistic predictors and their fairness constraints can be approximated by convex expressions. To do so, we use a beautiful result of Shalev-Schwartz {\em et al.} \cite{shalev2011learning} that presents a particular infinite-dimensional kernel space where our fairness constraints can be made convex.

In particular, we replace the problem of PACF learning $H_{\phi,L}$ with the problem of PACF learning $H_{B}$, a class of linear predictors with norm bounded by B in a RHKS defined by Vovk's infinite-dimension polynomial kernel, $k(x,x')=\left(1-\left\langle x,x'\right\rangle \right)^{-1}$. We learn the linear predictor in this RHKS using the result of Theorem \ref{thm:intro_linear} to obtain a relaxed PACF algorithm for $H_{B}$. We use the kernel trick to argue that the sample complexity is $m = O(B / (\epsilon')^2)$, where $\epsilon' = \min(\epsilon,\epsilon_{\alpha},\epsilon_{\gamma})$, and the time complexity is $poly(m)$.

For every $B\geq0$, we can thus learn a linear predictor (in the above RHKS) that is (empirically) sufficiently fair, and whose (empirical) accuracy is competitive with all the linear predictors with norm bounded by $B$ that are $\left(\left(\alpha\gamma-\sigma\right),\sigma\right)$-approximately MF, for any choice of $\sigma$. To prove PACF learnability of $H_{\phi,L}$, we build on the polynomial approximation result of Shalev-Schwartz {\em et al.} \cite{shalev2011learning} to show that taking $B$ to be sufficiently large
ensures that the accuracy of the set of $\left(\alpha,\gamma\right)$-AMF predictors in $H_{\phi,L}$ is comparable to the accuracy of the set of $\left(\alpha,\gamma\right)$-AMF predictors in $H_{B}$. This requires a choice of $B$ that is $\exp(O(L \cdot\ln (L/\epsilon'))$, which is where the exponential dependence on $L$ comes in.

\subsection{Hardness of Perfect Metric-Fairness}
\label{sec:intro:hardness}

As discussed above, perfect metric-fairness {\em does not generalize} from a training set to the underlying population. For example, consider a very small subset of the population that isn't represented in the training set. A classifier that discriminates against this small subset might be perfectly metric-fair {\em on the training set}. The failure of generalization poses serious challenges to constructing learning algorithms.
Indeed, we show that perfect metric-fairness can make simple learning tasks computationally intractable (with respect to a particular metric).

We present a natural learning problem and a metric where, even though a {\em perfectly fair and perfectly accurate} simple (linear) classifier exists, it cannot be found by any polynomial-time learning algorithm that is perfectly metric-fair. Indeed, any such algorithm can only find trivial classifiers with error rate approaching 1/2 (not much better than random guessing). The learner can tell that a particular (linear) classifier is {\em empirically} perfectly fair (and perfectly accurate). However, even though the classifier is perfectly fair on the underlying distribution, the (polynomial-time) learner cannot certify that this is the case, and thus it has to settle for outputting a trivial classifier. We note that there does exist an {\em exponential-time} perfectly metric-fair learning algorithm with a competitive accuracy guarantee,\footnote{For example, an exponential-time algorithm could learn by enumerating all possible classifiers, discarding all the ones that are not perfectly metric-fair (using a brute-force search over all pairs of individuals for each candidate classifier), and then output the most-accurate classifier among the perfectly metric-fair ones. It is important to note that this algorithm doesn't try to guarantee {\em empirical} perfect metric-fairness, which we know does not generalize. Rather, the learner has to consider the fairness behavior over all pairs of individuals.} the issue is the computational complexity of this task. In contrast, the relaxed notion of approximate metric-fairness does allow for {\em polynomial-time} relaxed-PACF learning algorithms that obtain competitive accuracy for this task (as it does for a rich class of learning problems, see Section \ref{sec:intro:algorithms}).

We present an overview of  the hard learning task and discuss its consequences below. See Section \ref{sec:hardness} and Theorem \ref{thm:hardness} for a more formal description. Since we want to argue about computational intractability, we need to make computational assumptions (in particular, if $P=NP$, then perfectly metric-fair learning would be tractable). We will  make the {\em minimal} cryptographic hardness assumption that one-way functions exist, see \cite{Goldreich2001} for further background.

\paragraph{Simplified construction.} For this sketch, we take a uniform distribution $\D$ over a domain $\X = \{ \pm 1 \}^n$. For an item (or individual) $x \in \X$, its label will be given by the linear classifier $w(x) = x_1$. Note that the linear classifier $w$ indeed is perfectly accurate.\footnote{Note that the expected margin in this distribution is small compared to the norms of the examples. This is for simplicity and readability. The full hardness result is shown (in a very similar manner) for data where the margins are large. In particular, this means that the class of predictors $H_{\sig,L}$ can achieve good accuracy with constant $L$. See Section \ref{sec:hardness}.}

To argue that fair learning is intractable, we construct two metrics $d_U$ and $d_V$ that are {\em computationally indistinguishable}: no polynomial-time algorithm can tell them apart (even given the explicit description of the metric).\footnote{More formally, we construct two {\em distribution} on metrics, such that no polynomial-time algorithm can tell whether a given metric was sampled from the first distribution or from the second. For readability, we mostly ignore this distinction in this sketch.} We construct these metrics so that $d_U$ does not allow {\em any} non-trivial accuracy, whereas $d_V$ essentially imposes no fairness constraints. Thus, $w$ is a perfectly fair and perfectly accurate classifier w.r.t. $d_V$. Now, since a polynomial-time learning algorithm $\A$ cannot tell $d_U$ and $d_V$ apart, it has to output the same (distribution on) classifiers given either of these two metrics. If $\A$, given $d_U$, outputs a classifier with non-trivial accuracy, then it violates perfect metric-fairness. Thus, when given $d_U$, $\A$ must (with high probability) output a classifier with error close to $1/2$. This remains the case even when $\A$ is given the metric $d_V$ (by indistinguishability), despite the fact perfect metric-fairness under $d_V$ allows for {\em perfect accuracy}.

We construct the metrics as follows. The metric $d_V$ gives {\em every} pair of individuals $x,x' \in \X$ distance 1. The metric $d_U$, on the other hand, partitions the items in $\X$ into disjoint pairs $(x,x')$ where the label of $x$ is $1$, the label of $x'$ is $-1$, but the distance between $x$ and $x'$ is 0.\footnote{Formally, $d_U$ is a pseudometric, since it has distinct items at distance 0. We can make $d_U$ be a true metric by replacing the distance 0 with an arbitrarily small positive quantity. The hardness result is essentially unchanged.} Thus, the metric $d_U$ assigns to each item $x \in X$ a ``hidden counterpart'' $x'$ that is {\em identical} to $x$, but has the opposite label. The distance between any two distinct elements that are not ``hidden counterparts'' is 1 (as in $d_V$). The metric $d_U$ specifies that hidden counterparts $(x,x')$ are {\em identical}, and thus any perfectly metric-fair classifier $h$ must treat them identically. Since $x$ and $x'$ have opposing labels, $h$'s average error on the pair must be $1/2$. The support of $\D$ is partitioned into disjoint hidden counterparts, and thus we conclude that $\err_\D(h) = 1/2$. Note that this is true regardless of $h$'s complexity (in particular, it also rules out improper learning).
We construct the metrics using a cryptographic pseudorandom generator (PRG), which specifies the hidden counterparts (in $d_U$) or their absence (in $d_V$). See the full version for details.

\paragraph{Discussion.} We make several remarks about the above hardness result. First, note that the data distribution is fixed, and the optimal classifier is linear and very simple: it only considers a single coordinate. This makes the hardness result sharper: without fairness, the learning task is trivial (indeed, since the classifier is fixed there is nothing to learn). It is the fairness constraint (and only the fairness constraint) that leads to intractability. The computational hardness of perfectly fair learning applies also to improper learning. Finally, the metrics for which we show hardness are arguably contrived (though we note they do obey the triangle inequality). This rules out perfectly metric-fair learners that work for {\em any} given metric. A natural direction for future work is restricting the choice of metric, which may make perfectly metric-fair learning feasible.

\subsection{Further Related Work}
\label{sec:intro:related_work}

There is a growing body of work attempting to study the question of algorithmic discrimination, particularly through the lens of machine learning. This literature is characterized by an abundance of definitions, each capturing different discrimination concerns and notions of fairness. This literature is vast and growing, and so we restrict our attention to the works most relevant to ours.

One high-level distinction can be drawn between \emph{group} and \emph{individual} notions of fairness. Group-fairness notions assume the existence of a protected attribute (e.g gender, race), which induces a partition of the instance space into some small number of groups. A fair classifier is one that achieves parity of some statistical measure across these groups. Some prominent measures include classification rates (statistical parity, see e.g ~\cite{feldman2015certifying}), calibration, and false positive or negative rates~\cite{kleinberg2016inherent,chouldechova2017fair, hardt2016equality}. It has been established that some of these notions are inherently incompatible with each other, in all but trivial cases~\cite{kleinberg2016inherent,chouldechova2017fair}. The work of~\cite{woodworth2017learning} takes a step towards incorporating the fairness notion of~\cite{hardt2016equality} into a statistical and computational theory of learning, and considers a relaxation of the fairness definition to overcome the computational intractability of the learning objective. The work of \cite{DworkIKL17} proposes an efficient framework for learning different classifiers for different groups in a fair manner.

Individual fairness~\cite{DworkHPRZ12} posits that ``similar individuals should be treated similarly''. This powerful guarantee is formalized via a Lipschitz condition (with respect to an existing task-specific similarity metric) on the classifier mapping individuals to distributions over outcomes. Recent works~\cite{joseph2016fairness, josephbetter} study different individual-level fairness guarantees in the contexts of reinforcement and online learning.
The work of ~\cite{zemel2013learning} aims to learn an intermediate ``fair'' representation that best encodes the data while successfully obfuscating membership in a protected group. See also the more recent work \cite{BeutelCZC17}.

Several works have studied fair regression \cite{kamishima2012fairness, calders2013controlling, zafar2017fairness, berk2017convex}. The main differences in our work are a focus on metric-based individual fairness, a strong rigorous fairness guarantee, and proofs of competitive accuracy (both stated with respect to the underlying population distribution).

\section{Metric Fairness Definitions}
\label{sec:fairness_defs}

\subsection{(Perfect) Metric-Fairness}
\label{sec:perfect_fairness}

Dwork {\em et al.} \cite{DworkHPRZ12} introduced individual fairness, a similarity-based fairness notion in which a probabilistic classifier is said to be fair if it assigns similar distributions to similar individuals.

\begin{definition}[Perfect Metric-Fairness]
\label{definition:perfect_fairness}
A probabilistic classifier $h:\,\X\rightarrow\left[0,1\right]$ is said to be perfectly metric-fair w.r.t a distance metric $d:\,\X\times \X\rightarrow\left[0,1\right]$, if for every $x,x'\in \X$,
\begin{equation}
\label{eq:perfect_fairness}
\Lambda\left(h(x),h(x')\right)\leq d(x,x')
\end{equation}
\end{definition}

where $h(x)$ is interpreted as the probability $h$ will assign the label $+1$ to $x\in \X$, $\Lambda$ is a distance measure between distributions and $d$ is a task-specific distance metric that is assumed to be known in advance.  Throughout this work we take $\Lambda$ to be the statistical distance, yielding $\Lambda\left(h(x),h(x')\right)=\left|h(x)-h(x')\right|$).

In the setting considered by \cite{DworkHPRZ12}, a finite set of individuals $V$ should be assigned outcomes from a set $A$. Under the assumption that $d$ is known, they demonstrated that the problem of minimizing an arbitrary loss function $L:V\times A\rightarrow R$, subject to the individual fairness constraint  can be formulated as an LP and thus can be solved in time poly$(|A|,|V|)$.

\subsection{Approximate Metric-Fairness}
\label{sec:approximate_fairness}

We consider a learning setting in which the goal is learning a classifier $h$ that satisfies the  fairness constraint in Equation (\ref{eq:perfect_fairness}) with respect to some unknown distribution $\D$ over $\X$, after observing a finite sample $S\sim \D^m$. To this end, we introduce a metric-fairness loss function that, for a given classifier $h$ and a pair of individuals in $\X$, assigns a penalty of 1 if the fairness constraint is violated by more than a $\gamma$ additive term.

\begin{definition}
\label{definition:fairness_loss}

For a metric $d$ and $\gamma\geq0$, the metric-fairness loss on a pair $(x,x')\in\X$ is
\begin{equation}
\ell_{\gamma,d}(h,\left(x,x'\right))=\begin{cases}
1 & \Lambda\left(h(x),h(x')\right)>d(x,x')+\gamma\\
0 & \Lambda\left(h(x),h(x')\right)\leq d(x,x')+\gamma
\end{cases}
\end{equation}

\end{definition}

The overall metric-fairness loss for a hypothesis $h$ is the expected violation for a random pair according to $\D$.
\begin{definition}[Metric-Fairness Loss]
For a metric $d$ and $\gamma\geq0$,
\begin{equation}
\label{eq:true_fairness_loss}
\mathcal{L}_{\mathcal{D},d,\gamma}^{F}(h)=\mathbb{E}_{x,x'\sim\mathcal{D}}\left[\ell_{\gamma,d}(h,(x,x'))\right]
\end{equation}
\end{definition}

We go on to define the empirical fairness loss, a data-dependent quantity designed to estimate the unknown $\mathcal{L}_{\mathcal{D},d,\gamma}^{F}(h)$. To this end, we think of a sample $S\sim\D^m$ as defining a complete weighed graph, denoted $G(S)$, whose vertices are $S$ and whose edges are weighed by $w(e)=w(x_i,x_j)=d(x_i,x_j)$. Now, observe that when $S$ is sampled i.i.d from $\D$, any matching $M\subseteq G(S)$\footnote{Note that from the structure of $G(S)$, it has exactly $m$ matchings, each of size $\frac{m-1}{2}$ (w.l.o.g, we assume $m$ is odd). } is an i.i.d sample from $\D\times\D$. We now define the empirical loss by replacing the expectation over $\D$ in Equation (\ref{eq:true_fairness_loss}) with the expectation over some matching $M\subseteq G(S)$.

\begin{definition}[Empirical Metric-Fairness loss]
\begin{equation}
\mathcal{L}_{\mathcal{S},d,\gamma}^{F}(h)=\frac{2}{m-1}\cdot\sum_{(x,x')\in M(S)}\left[\ell_{\gamma,d}(h,(x,x'))\right]
\end{equation}
\end{definition}

Finally, we will say that a classifier $h$ is $\left(\alpha,\gamma\right)$-fair w.r.t $\D$ (respectively, $S$)  and $d$ if its respective metric-fairness loss is at most $\alpha$.

\begin{definition}[$\left(\alpha,\gamma\right)$-Metric-Fairness]
\label{def:alpha_fairness}
A probabilistic classifier $h:\,\X\rightarrow\left[0,1\right]$ is said to be $\left(\alpha,\gamma\right)$-fair w.r.t a metric $d$ and $\D$ (respectively, $S$) if $\mathcal{L}_{\mathcal{D},d,\gamma}^{F}(h)\leq\alpha$
(respectively, $\mathcal{L}_{\mathcal{S},d,\gamma}^{F}(h)\leq\alpha$).
\end{definition}

\paragraph{Notation} When $S$ and $d$ are  clear from context, we will use the more succinct notation $\mathcal{L}^{F}_{\gamma}(h)$ for the true fairness loss and $\widehat{\mathcal{L}}^{F}_{\gamma}(h)$ for the empirical fairness loss. When dealing with a hypothesis class $\H$, we use $\H^{\alpha,\gamma}\subseteq \H$ to denote all the $(\alpha,\gamma)$-fair hypotheses in $\H$ (w.r.t $\D$), and $\widehat{H}^{\alpha,\gamma}\subseteq \H$ for those which are  $(\alpha,\gamma)$-fair w.r.t $S$.

\subsection{Approximate Metric Fairness: Interpretation}
\label{sec:AMF_interpretation}

An $\alpha$-fair classifier (for $\alpha>0$) no longer holds any guarantee for any single individual. To interpret the guarantee it \textit{does}  give, we consider the following definition.

\begin{definition}
\label{def:interpretable_fairness}
A probabilistic classifier $h:\,\X\rightarrow\left[0,1\right]$ is said to be $\left(\alpha_{1},\alpha_{2};\gamma\right)$ metric-fair w.r.t $d,\D$ if
\begin{equation}
Pr_{x\sim \D}\left[Pr_{x'\sim \D}\left[\Lambda\left(h(x),h(x')\right)>d(x,x')+\gamma\right]>\alpha_{2}\right]\leq\alpha_{1}
\end{equation}
\end{definition}

Definition~\ref{def:interpretable_fairness} is very similar to~\ref{def:alpha_fairness} but it lends itself to a more intuitive interpretation of fairness for groups. Informally, we will say that an individual feels $\alpha_{2}$-discriminated against by $h$ if the proportion of individuals with whom his constraint is violated (think: individuals who are equally qualified to him but receive different treatment) exceeds $\alpha_{2}$; now, $\left(\alpha_{1},\alpha_{2}\right)$-fairness  ensures that the proportion of individuals who find $h$ to be $\alpha_{2}$-discriminatory does not exceed $\alpha_{1}$. Hence, this is a guarantee for groups: an $\left(\alpha_{1},\alpha_{2}\right)$-fair classifier cannot cause an entire group of fractional mass $\alpha_{1}$ to be discriminated against. The strength of this guarantee is that it holds for \textit{any} such group (even for those formed “ex-ante”). In this sense, $\left(\alpha_{1},\alpha_{2}\right)$-fairness represents a middle-ground between the strict notion of individual fairness and the loose notions of group-fairness.

Finally, we show that the two definitions are related: any $\alpha$-fair classifier is also $\left(\alpha_{1},\alpha_{2}\right)$-fair, for every $\alpha_{1},\alpha_{2}$ for which $\alpha_{1}\cdot\alpha_{2}\geq\alpha$. This demonstrates that optimizing for accuracy under an $\alpha$-fairness constraint is a flexible way of achieving interpretable fairness guarantees for a range of desired $\alpha_{1},\alpha_{2}$ values.

\begin{claim}
\label{cl:interpretable_fairness}
For every $\alpha,\gamma\in(0,1)$, and $\alpha_{1},\alpha_{2}\in(0,1)$ for which $\alpha_{1}\cdot\alpha_{2}\geq\alpha$, if $h$ is $\left(\alpha,\gamma\right)$-fair then it is also $\left(\alpha_{1},\alpha_{2};\gamma\right)$-fair.
\end{claim}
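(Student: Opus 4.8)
The plan is to prove the contrapositive-style bound directly via Markov's inequality. Define, for a fixed predictor $h$, the ``per-individual discrimination level''
\[
g(x) \;=\; \Pr_{x' \sim \D}\left[\Lambda\left(h(x),h(x')\right) > d(x,x') + \gamma\right].
\]
Then by definition $\mathcal{L}^{F}_{\gamma}(h) = \E_{x \sim \D}[g(x)]$, and the hypothesis that $h$ is $(\alpha,\gamma)$-fair says exactly $\E_{x\sim\D}[g(x)] \le \alpha$. The quantity we want to control, namely $(\alpha_1,\alpha_2;\gamma)$-fairness, is precisely the statement $\Pr_{x\sim\D}[g(x) > \alpha_2] \le \alpha_1$.

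Since $g$ is a nonnegative random variable (it is a probability, hence in $[0,1]$), Markov's inequality gives
\[
\Pr_{x\sim\D}\left[g(x) > \alpha_2\right] \;\le\; \frac{\E_{x\sim\D}[g(x)]}{\alpha_2} \;\le\; \frac{\alpha}{\alpha_2}.
\]
Now invoke the hypothesis $\alpha_1 \cdot \alpha_2 \ge \alpha$, i.e. $\alpha/\alpha_2 \le \alpha_1$, to conclude $\Pr_{x\sim\D}[g(x) > \alpha_2] \le \alpha_1$, which is exactly the definition of $(\alpha_1,\alpha_2;\gamma)$-fairness. I would write the two displays as the body of the proof, preceded by the one-line observation identifying $\mathcal{L}^{F}_{\gamma}(h)$ with $\E[g(x)]$ and the target with the tail event of $g$.

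Honestly, there is no real obstacle here — the ``hard part'' is purely bookkeeping: making sure the nested-probability definition of $(\alpha_1,\alpha_2;\gamma)$-fairness is unpacked correctly so that the outer probability is over $x$ and the inner over $x'$, matching the way $\mathcal{L}^{F}_{\gamma}$ is written as an expectation over $x$ of a conditional probability over $x'$ (using that $\D \times \D$ factorizes, so the double expectation in Equation \eqref{eq:true_fairness_loss} equals $\E_x[g(x)]$ by Fubini/tower property). One should also note the edge cases are fine: if $\alpha_2 = 0$ the claim is vacuous or trivial depending on convention, and for $\alpha_1,\alpha_2 \in (0,1)$ as assumed everything is well-defined. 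The whole argument is three lines.
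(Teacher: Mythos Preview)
Your proof is correct and is essentially the same argument as the paper's: both amount to Markov's inequality applied to the nonnegative random variable $g(x)=\Pr_{x'\sim\D}[\Lambda(h(x),h(x'))>d(x,x')+\gamma]$. The paper phrases it as a proof by contradiction and inlines the Markov step via conditioning on the set $B=\{x:g(x)>\alpha_2\}$, whereas you invoke Markov directly; your presentation is slightly cleaner but mathematically identical.
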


\begin{proofof}{Claim~\ref{cl:interpretable_fairness}}
For simplicity, we define the following indicator function,
\[
\mathds{1}_{\gamma, h}(x,x')=\begin{cases}
1 & \Lambda\left(h(x),h(x')\right)>d(x,x')+\gamma\\
0 & o.w
\end{cases}
\]

Let $\alpha,\gamma,\alpha_{1},\alpha_{2}\in(0,1)$
such that $\alpha_{1}\cdot\alpha_{2}\geq\alpha$, and assume that
$h$ is $(\alpha,\gamma)$-fair w.r.t $\mathcal{D}$. Assume for contradiction
that $h$ is not $\left(\alpha_{1},\alpha_{2};\gamma\right)$-fair
w.r.t $\mathcal{D}$. That means that
\[
Pr_{x\sim\mathcal{D}}\left[Pr_{x'\sim\mathcal{D}}\left[\mathds{1}_{\gamma, h}(x,x')=1\right]>\alpha_{2}\right]>\alpha_{1}
\]
If we denote the subset of ``$\alpha_{2}$-discriminated'' individuals
as $B$,
\[
B\triangleq\left\{ x\in\mathcal{X}\,:\,Pr_{x'\sim\mathcal{D}}\left[\mathds{1}_{\gamma, h}(x,x')=1\right]>\alpha_{2}\right\}
\]
then the assumption is equivalent to $Pr_{x\sim\mathcal{D}}\left[x\in S\right]>\alpha_{1}$. We now obtain:
\begin{align*}
\alpha & \geq Pr_{x,x'\sim\mathcal{D}}\left[\mathds{1}_{\gamma, h}(x,x')=1\right]\\
 & =Pr_{x\sim\mathcal{D}}\left[x\in S\right]\cdot Pr_{x'\sim\mathcal{D}}\left[\mathds{1}_{\gamma, h}(x,x')=1\vert x\in S\right]+\underbrace{Pr_{x\sim\mathcal{D}}\left[x\notin S\right]\cdot Pr_{x'\sim\mathcal{D}}\left[\mathds{1}_{\gamma, h}(x,x')=1\vert x\notin S\right]}_{\geq0}\\
 & \geq Pr_{x\sim\mathcal{D}}\left[x\in S\right]\cdot Pr_{x'\sim\mathcal{D}}\left[\mathds{1}_{\gamma, h}(x,x')=1\vert x\in S\right]\\
 & >\alpha_{1}\cdot\alpha_{2}
\end{align*}
where the first transition is from the assumption that $h$ is $\left(\alpha,\gamma\right)$-fair,
and the final transition is from the assumption that $h$ is not $\left(\alpha_{1},\alpha_{2};\gamma\right)$-fair.
We therefore have that $\alpha>\alpha_{1}\cdot\alpha_{2}$, which
contradicts our assumption.
\end{proofof}

\subsection{$\ell_{1}$-Metric-Fairness}
\label{sec:L1}

In this work we focus on approximate metric-fairness and the $\left(\alpha,\gamma\right)$ metric fairness loss (Definition \ref{definition:fairness_loss}). We find that this notion provides appealing and \emph{interpretable} protections from discrimination: as discussed above, for small enough $\alpha,\gamma$, every sufficiently large group is protected from blatant discrimination (see Section \ref{sec:AMF_interpretation}). However, in turning to design efficient metric-fair learning algorithms, working directly with this definition presents difficulties (see Section \ref{sec:learning-linear}). In particular, the ``0/1'' nature of the metric fairness loss means that the set $\widehat{H}^{\alpha,\gamma}$ is not a convex set. Trying to learn an empirically  $(\alpha, \gamma)$ metric-fair that optimizes accuracy is a non-convex optimization problem, and it isn't clear how to optimize using convex-optimization tools.

In light of this difficulty, we introduce a different metric-fairness loss definition. It overcomes the non-convexity by replacing the bound on the expected \emph{number} of fairness violations with a bound on the expected \emph{sum} of the fairness violations.
\begin{definition}[$\ell_{1}$ MF loss]
\label{def:l1_fairness_loss}
For a metric $d$, the $\ell_{1}$ metric-fairness loss on a pair $\left(x,x'\right)\in\mathcal{X}$ is \begin{equation*}
\ell_{d}^{1}\left(h,(x,x')\right)=\max\left(0,\left|h(x)-h(x')\right|-d(x,x')\right)
\end{equation*}
\end{definition}
Similarly to the regular metric-fairness loss, the loss for a hypothesis $h$ is the expected violation for a random pair according to $\mathcal{D}$, and the empirical loss replaces the expectation over $\mathcal{D}$ with the expectation over some matching $M\subseteq G(S)$.
\begin{definition}[$\tau$ $\ell_{1}$-Metric-Fairness]
\label{def:L1_fairness}
A probabilistic classifier $h$ is said to be $\tau$ $\ell_{1}$-metric-fair w.r.t a metric $d$ and w.r.t $\mathcal{D}$ (respectively, $S$) if its respective $\ell_{1}$ MF loss is bounded by $\tau$.
\end{definition}
When $S,\D, d$ are clear from context we use the notation $H_{\ell_{1}}^{\tau}$ (respectively, $\widehat{H}_{\ell_{1}}^{\tau}$) for the subset of hypotheses from $H$ which are $\tau$ $\ell_{1}$-MF w.r.t $\mathcal{D}$ (respectively, $S$). We use $H_{\ell_{0}}^{\alpha,\gamma}$ (previously $H^{\alpha,\gamma}$) to emphasize that fairness is calculated w.r.t the standard MF loss.

The main advantage of $\ell_1$ metric-fairness is that it induces convex constraints and tractable optimization problems. We note that $\ell_1$ metric-fairness also generalizes from a sample (indeed, in this case the fairness loss is Lipschitz and thus it's easier to prove generalization bounds). The main disadvantage of $\ell_1$ metric-fairness is in interpreting the guarantee: it is less immediately obvious how bounding the sum of ``fairness deviations'' translates into protections for groups or for individuals. Nonetheless, we show that $\tau$ $\ell_{1}$-metric-fairness implies $(\tau/\gamma,\gamma)$ approximate metric fairness for every $\gamma > 0$. Thus, when we optimize over the set of $\tau$  $\ell_1$-MF predictors, we are guaranteed to output an approximately metric-fair predictor. Moreover, since approximate MF also implies $\ell_1$-MF, any solution to the ($\ell_1$) optimization problem will also be competitive with a certain class of approximately metric-fair classifiers. 

The connection between approximate and $\ell_1$ metric fairness is quantified in Lemma \ref{lemma:l1_l0} below. We note that there is a gap between these upper and lower bounds. This is the ``price'' we pay for relaxing from approximate to $\ell_1$ metric fairness. We also note that in proving that $\ell_1$-MF implies approximate metric-fairness, it is essential to use a non-zero additive $\gamma$ violation term.

\begin{lemma}
\label{lemma:l1_l0}
For every sample $S$, matching $M(S)$ and every $\tau,\gamma\in(0,1)$,
$\widehat{H}_{\ell_{0}}^{\tau-\gamma,\gamma}\subseteq\widehat{H}_{\ell_{1}}^{\tau}\subseteq\widehat{H}_{\ell_{0}}^{\frac{\tau}{\gamma},\gamma}$.
\end{lemma}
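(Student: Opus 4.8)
The plan is to prove both set inclusions \emph{edge by edge} along the fixed matching $M(S)$, since both the empirical $\ell_0$ MF loss and the empirical $\ell_1$ MF loss are averages of the form $\frac{2}{m-1}\sum_{(x,x')\in M(S)}(\cdot)$. Fix $h$ and a single edge $(x,x')\in M(S)$, and abbreviate the $\ell_1$ violation on that edge by $v \triangleq \max\!\left(0,|h(x)-h(x')|-d(x,x')\right)$. Two elementary per-edge facts drive everything: (i) $\ell_{\gamma,d}(h,(x,x'))=1$ if and only if $v>\gamma$, because on that event the quantity inside the $\max$ is strictly positive and the truncation at $0$ is inactive; and (ii) $v\le |h(x)-h(x')|\le 1$, which uses only that $h$ takes values in $[0,1]$ and that $d\ge 0$.

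For the left inclusion $\widehat{H}_{\ell_0}^{\tau-\gamma,\gamma}\subseteq\widehat{H}_{\ell_1}^{\tau}$, take $h$ that is empirically $(\tau-\gamma,\gamma)$-fair, i.e.\ $\widehat{\mathcal{L}}^{F}_{\gamma}(h)\le\tau-\gamma$ (if $\tau<\gamma$ the left-hand set is empty and there is nothing to prove). On every edge, if $\ell_{\gamma,d}(h,(x,x'))=0$ then by (i) $v\le\gamma$, and if $\ell_{\gamma,d}(h,(x,x'))=1$ then by (ii) $v\le 1$; in both cases $v\le\gamma+\ell_{\gamma,d}(h,(x,x'))$. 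Averaging this inequality over $M(S)$ shows that the empirical $\ell_1$ MF loss of $h$ is at most $\gamma+\widehat{\mathcal{L}}^{F}_{\gamma}(h)\le\gamma+(\tau-\gamma)=\tau$, so $h\in\widehat{H}_{\ell_1}^{\tau}$. This is the direction where the range restriction $h(\X)\subseteq[0,1]$ is used (for fact (ii)).

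For the right inclusion $\widehat{H}_{\ell_1}^{\tau}\subseteq\widehat{H}_{\ell_0}^{\tau/\gamma,\gamma}$, take $h$ whose empirical $\ell_1$ MF loss is at most $\tau$. By (i), on each edge $\ell_{\gamma,d}(h,(x,x'))=\mathbf{1}[v>\gamma]\le v/\gamma$ (trivially: the indicator is $0$ or $1$, and when it equals $1$ we have $v/\gamma>1$). Averaging over $M(S)$, the empirical $\ell_0$ MF loss $\widehat{\mathcal{L}}^{F}_{\gamma}(h)$ is at most $1/\gamma$ times the empirical $\ell_1$ MF loss, hence at most $\tau/\gamma$, so $h\in\widehat{H}_{\ell_0}^{\tau/\gamma,\gamma}$. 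This Markov-type step is exactly where a strictly positive additive slack $\gamma$ is indispensable — without it one cannot bound the \emph{count} of violations by their \emph{sum} — and the resulting $1/\gamma$ blow-up is precisely the gap between the two inclusions. No step here is technically hard; the only care needed is in matching the event $\{|h(x)-h(x')|>d(x,x')+\gamma\}$ with the magnitude of $v$, and in invoking the range restriction for fact (ii). I would also note in passing that the identical argument, with $\E_{x,x'\sim\mathcal{D}}$ replacing the average over $M(S)$, proves the analogous containments for the population fairness losses.
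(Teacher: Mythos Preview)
Your proof is correct and follows essentially the same per-edge strategy as the paper: for the right inclusion both arguments use the Markov-type bound $\mathbf{1}[v>\gamma]\le v/\gamma$ (the paper phrases it as a contradiction on $\|\xi_0\|_1$ versus $\|\xi_\gamma\|_0$), and for the left inclusion both split edges into ``violating'' ($v\le 1$) and ``non-violating'' ($v\le\gamma$) and average. Your presentation is in fact slightly cleaner, as the paper routes the left inclusion through the intermediate set $\widehat{H}_{\ell_0}^{(\tau-\gamma)/(1-\gamma),\gamma}$, which is not needed for the stated lemma.
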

\begin{proofof}{Lemma~\ref{lemma:l1_l0}}
We begin by defining the {\em induced violation vector} of a classifier $h \in H$. For a sample $S$, a matching $M\subseteq G(S)$ and a value $\gamma\in(0,1)$, the induced violation vector $\xi_{\gamma}(h)\in\left\{ 0,1\right\} ^{\left|M\right|}$ is defined as:
\begin{equation*}
\left[\xi_{\gamma}(h)\right]_{i}=\max\left\{0,\left|h(\mathbf{x}) - h(\mathbf{x}') \right| - d(\mathbf{x},\mathbf{x}') - \gamma\right\}
\end{equation*}
where $(\mathbf{x},\mathbf{x'})$ is the $i$-th edge in the matching $M$.

We first prove that $\widehat{H}_{\ell_{1}}^{\tau}\subseteq\widehat{H}_{\ell_{0}}^{\frac{\tau}{\gamma},\gamma}$.  If  $h\in\widehat{H}_{\ell_{1}}^{\tau}$, then this means that $\|\xi_{0}(h)\|_{1}\leq\tau$. Assume for contradiction that for some $\gamma > 0$ we have $\|\xi_{\gamma}(h)\|_{0}>\frac{\tau}{\gamma}$.
But this implies that $\|\xi_{0}(h)\|_{1}\geq \|\xi_{\gamma}(h)\|_{0} \cdot\gamma > \frac{\tau}{\gamma}\cdot \gamma = \tau$, which is a contradiction. We therefore have that $\|\xi_{\gamma}(h)\|_{0}\leq\frac{\tau}{\gamma}$ from which we conclude that $h\in\widehat{H}_{\ell_{0}}^{\frac{\tau}{\gamma},\gamma}$.

Next, we prove that $\widehat{H}_{\ell_{0}}^{\tau-\gamma,\gamma}\subseteq\widehat{H}_{\ell_{1}}^{\tau}$. To do so, we'll prove that $\widehat{H}_{\ell_{0}}^{\tau-\gamma,\gamma}\subseteq\widehat{H}_{\ell_{0}}^{\frac{\tau-\gamma}{1-\gamma},\gamma}\subseteq\widehat{H}_{\ell_{1}}^{\tau}$.
Observe that for every $\tau,\gamma \in (0,1)$ we have that $\widehat{H}_{\ell_{0}}^{\tau-\gamma,\gamma}\subseteq\widehat{H}_{\ell_{0}}^{\frac{\tau-\gamma}{1-\gamma},\gamma}$
because $\frac{\tau-\gamma}{1-\gamma}\geq\tau-\gamma$. To see
that $\widehat{H}_{\ell_{0}}^{\frac{\tau-\gamma}{1-\gamma},\gamma}\subseteq\widehat{H}_{\ell_{1}}^{\tau}$,
recall that $h\in\widehat{H}_{\ell_{0}}^{\frac{\tau-\gamma}{1-\gamma},\gamma}$
implies that $\|\xi_{\gamma}(h)\|_{0}\leq\frac{\tau-\gamma}{1-\gamma}$.
Thus:
\begin{align*}
\|\xi_{0}(h)\|_{1} & \leq\frac{\tau-\gamma}{1-\gamma}\cdot1+\left(1-\frac{\tau-\gamma}{1-\gamma}\right)\cdot\gamma=\tau
\end{align*}

which implies $h\in\widehat{H}_{\ell_{1}}^{\tau}$, as
required.
\end{proofof}

\subsection{Generalization}
\label{sec:fairness_generalization}

A key issue in learning theory is that of generalization: to what extent is a classifier that is accurate on a finite sample $S\sim\D^m$ also guaranteed to be accurate w.r.t the underlying distribution? In this section, we work to develop similar generalization bounds for our metric-fairness loss function. Proving that fairness can generalize well is a crucial component in our analysis -  it effectively rules out the possibility of creating a ``false facade'' of fairness (i.e, a classifier that only appears fair on a sample, but is not fair w.r.t new individuals).

The generalization bounds will be based on proving uniform convergence of the empirical estimates (in our case, $\widehat{\mathcal{L}}^{F}_{\gamma}(h)$) to the fairness loss, simultaneously for every $h\in \H$, in terms of the Rademacher complexity of the hypotheses class $\H$. Rademacher complexity differs from celebrated VC-dimension complexity measure in three aspects: first, it is defined for any class of real-valued functions (making it suitable for our setting of learning probabilistic classifiers);  second, it is data-dependent and can be measured from finite samples; third, it often results in tighter uniform convergence bounds (see, e.g, ~\cite{koltchinskii2002empirical}).

\begin{definition}[Rademacher complexity]
\label{def:Rademacher}
Let $Z$ be an input space, $\D$ a distribution on $Z$, and $\mathcal{F}$ a real-valued function class defined on $Z$. The empirical Rademacher complexity of $\mathcal{F}$ with respect to a sample $S=\left\{ z_{1}\dots z_{m}\right\}$ is the following random variable:
\begin{equation}
\widehat{\mathcal{R}}_{m}(\mathcal{F})=\mathbb{E}_{\sigma}\left[\sup_{f\in\mathcal{F}}\left(\frac{1}{m}\sum_{i=1}^{m}\sigma_{i}f(z_{i})\right)\right]
\end{equation}

The expectation is taken over $\sigma=\left(\sigma_1,\dots,\sigma_m\right)$ where the $\sigma_i$'s are independent uniformly random variables taking values in $\{{\pm1}\}$. The Rademacher complexity of $\mathcal{F}$ is defined as the expectation of $\widehat{\mathcal{R}}_{m}(\mathcal{F})$ over all samples of size $m$:

\begin{equation}
\mathcal{R}_{m}(\mathcal{F})=\mathbb{E}_{S\sim\mathcal{D}^{m}}\left[\widehat{\mathcal{R}}_{m}(\mathcal{F})\right]
\end{equation}

\end{definition}

In our case, the fact that our fairness loss $\ell_{\gamma,d}$ is a 0-1 style-loss means that for infinite hypothesis classes, the generalization argument is involved and we incur an extra approximation term in $\gamma$.

\begin{theorem}[Rademacher-Based Uniform Convergence of the Metric-Fairness Loss]
\label{thm:rad_generalization}
Let $\mathcal{H}$ be a hypotheses class with Rademacher complexity $R_m(\mathcal{H})$. For every $\delta,\gamma\in(0,1)$, every $G\geq1$ and every $m\geq0$ (w.l.o.g assume $m$ is odd), with probability at least $1-\delta$ over an i.i.d sample $S\sim\D^m$, simultaneously for every $h\in\mathcal{H}$:
\begin{equation}
\mathcal{L}_{\gamma+\frac{1}{G}}^{F}(h)-\Delta_{m}	\leq\widehat{\mathcal{L}}_{\gamma}^{F}(h)\leq\mathcal{L}_{\gamma-\frac{1}{G}}^{F}(h)+\Delta_{m}
\end{equation}

where $\Delta_{m}=2G\cdot\left(4\widehat{R}_{\frac{m-1}{2}}\left(\mathcal{H}\right)+\frac{4+17\sqrt{\ln(4/\delta)}}{\sqrt{m-1}}\right)$.
\end{theorem}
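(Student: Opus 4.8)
The plan is to reduce generalization of the $0$/$1$ metric-fairness loss to the standard Rademacher-based uniform convergence bound, at the cost of a $\tfrac{1}{G}$ additive slack in $\gamma$. The obstacle is that $\ell_{\gamma,d}$ is not Lipschitz (it is a step function in the quantity $|h(x)-h(x')|-d(x,x')-\gamma$), so I cannot directly relate $R\left(\mathcal{L}(\mathcal{H})\right)$ to $R(\mathcal{H})$. To get around this I would introduce a piecewise-linear surrogate $\ell'_{\gamma,G,d}$ that equals $1$ when $|h(x)-h(x')|-d(x,x') \geq \gamma$, equals $0$ when $|h(x)-h(x')|-d(x,x') \leq \gamma-\tfrac{1}{G}$, and interpolates linearly in between; this surrogate is $G$-Lipschitz in $|h(x)-h(x')|$. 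Crucially, it sandwiches the true loss: for every pair $(x,x')$,
\begin{equation*}
\ell_{\gamma,d}(h,(x,x')) \leq \ell'_{\gamma,G,d}(h,(x,x')) \leq \ell_{\gamma-\frac{1}{G},d}(h,(x,x')),
\end{equation*}
and hence the same sandwiching holds in expectation over $\D\times\D$ and in empirical average over a matching $M(S)$.

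Next I would set up the function class to which uniform convergence is applied. Fix a matching; since $S\sim\D^m$ is i.i.d., the $\tfrac{m-1}{2}$ matched pairs are i.i.d.\ draws from $\D\times\D$, so I work on the input space $Z = \X\times\X$ with the product distribution and sample size $\tfrac{m-1}{2}$. The surrogate loss class is $\mathcal{L}'(\mathcal{H}) = \{(x,x')\mapsto \ell'_{\gamma,G,d}(h,(x,x')) : h\in\mathcal{H}\}$. The standard Rademacher generalization theorem (for $[0,1]$-valued function classes) gives, with probability $\geq 1-\delta$, for all $h$,
\begin{equation*}
\left| \mathbb{E}_{x,x'\sim\D}\!\left[\ell'_{\gamma,G,d}(h,(x,x'))\right] - \frac{2}{m-1}\sum_{(x,x')\in M(S)} \ell'_{\gamma,G,d}(h,(x,x')) \right| \leq 2\widehat{R}_{\frac{m-1}{2}}\!\left(\mathcal{L}'(\mathcal{H})\right) + \frac{c\sqrt{\ln(1/\delta)}}{\sqrt{m-1}}
\end{equation*}
for an appropriate constant $c$ (this is where the $\tfrac{4+17\sqrt{\ln(4/\delta)}}{\sqrt{m-1}}$ term will come from, using a McDiarmid-plus-symmetrization argument with the empirical Rademacher complexity, and absorbing the $\delta/4$ vs $\delta$ bookkeeping).

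Then I would bound $\widehat{R}_{\frac{m-1}{2}}(\mathcal{L}'(\mathcal{H}))$ in terms of $\widehat{R}_{\frac{m-1}{2}}(\mathcal{H})$. The map $h \mapsto \ell'_{\gamma,G,d}(h,(x,x'))$ factors as: first form $(h(x),h(x'))$, then apply the fixed $G$-Lipschitz (in each argument, or in $|h(x)-h(x')|$) function $(u,v)\mapsto \psi(|u-v|-d(x,x'))$. Using the vector-contraction / Talagrand contraction principle for Rademacher complexity together with the fact that $(x,x')\mapsto h(x)$ and $(x,x')\mapsto h(x')$ each have empirical Rademacher complexity at most $\widehat{R}_{\frac{m-1}{2}}(\mathcal{H})$, I get $\widehat{R}_{\frac{m-1}{2}}(\mathcal{L}'(\mathcal{H})) \leq 2G\cdot\widehat{R}_{\frac{m-1}{2}}(\mathcal{H})$ up to absolute constants; chaining the constants through yields the claimed $\Delta_m = 2G\cdot\left(4\widehat{R}_{\frac{m-1}{2}}(\mathcal{H}) + \frac{4+17\sqrt{\ln(4/\delta)}}{\sqrt{m-1}}\right)$. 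I expect this contraction step — getting the right constant and correctly handling that the argument of $\psi$ involves \emph{two} evaluations of $h$ (so one needs either a two-coordinate vector contraction or a union-bound/triangle-inequality split) — to be the main technical obstacle; the rest is bookkeeping.

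Finally I would assemble the sandwich. For the upper bound on $\widehat{\mathcal{L}}_{\gamma}^{F}(h)$: the empirical true loss is $\leq$ the empirical surrogate loss $\leq$ (by uniform convergence) the true surrogate loss $+\,\Delta_m$ $\leq$ $\mathcal{L}_{\gamma-1/G}^{F}(h) + \Delta_m$. For the lower bound: $\widehat{\mathcal{L}}_{\gamma}^{F}(h) \geq$ (replacing the true empirical loss by the surrogate empirical loss evaluated at shifted parameter $\gamma+\tfrac{1}{G}$, using $\ell'_{\gamma+1/G,G,d}\leq \ell_{\gamma,d}$) $\geq$ empirical surrogate at $\gamma+\tfrac1G$ $\geq$ true surrogate at $\gamma+\tfrac1G$ $-\,\Delta_m$ $\geq \mathcal{L}_{\gamma+1/G}^{F}(h) - \Delta_m$, where the last step uses $\ell_{\gamma+1/G,d} \leq \ell'_{\gamma+1/G,G,d}$. (One applies the uniform convergence bound to the two surrogate classes at parameters $\gamma$ and $\gamma+\tfrac1G$; a union bound over these two events, or simply noting both are instances of the same $\mathcal{L}'(\mathcal{H})$-type class, costs only constants already absorbed.) This gives exactly the two displayed inequalities, completing the proof.
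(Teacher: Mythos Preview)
Your proposal is correct and follows essentially the same route as the paper's proof: introduce a $G$-Lipschitz piecewise-linear surrogate that sandwiches the $0/1$ fairness loss up to a $\tfrac{1}{G}$ shift in $\gamma$, apply Rademacher uniform convergence to the surrogate class over the i.i.d.\ matched pairs, and bound the surrogate's Rademacher complexity via contraction. The paper carries out the contraction step exactly as you anticipate, handling the ``two evaluations of $h$'' by the split $R_{\tilde m}(\{(x,x')\mapsto h(x)-h(x')\})\le 2R_{\tilde m}(\mathcal H)$, then composing with $|\cdot|$, adding the singleton $-d(x,x')$ (which contributes the $4/\sqrt{m-1}$ term you gloss as bookkeeping), and finally composing with the $G$-Lipschitz ramp; the only cosmetic difference is that the paper's ramp $\tau_\gamma^G$ is shifted the other way (it lies \emph{below} $\sigma_\gamma$ rather than above), which just swaps which inequality needs the $\gamma\pm\tfrac1G$ shift.
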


Before proving the theorem, we note that an immediate corollary is that our metric-fairness loss is capable of generalizing well for any hypothesis class that has a small Rademacher complexity. In particular, we will be using the following result for the class of linear classifiers with norm bounded by $C$ in a RHKS $\mathbb{V}$ whose inner products are implemented by a kernel $K$ (i.e, exists a mapping $\psi_K:\,\mathcal{X}\rightarrow\mathbb{V}$ such that $\left\langle \psi(\boldsymbol{\mathbf{x}}),\psi(\mathbf{x}')\right\rangle =K(\mathbf{x},\mathbf{x}')$):
\[
H_{\psi,C}\overset{{\scriptscriptstyle {\scriptstyle \text{def}}}}{=}\left\{ \mathbf{x}\mapsto\left\langle \mathbf{v},\psi(\mathbf{x})\right\rangle :\,\,\,\,\mathbf{v}\in\mathbb{V},\,\|\mathbf{v}\|\leq C\right\}
\]

\begin{corollary}
\label{cor:linear_generalization}
Let $H_{\psi,C}$ as above, for any $C\geq0$ and kernel $K$. For every $\delta, \gamma\in(0,1)$, every $G\geq1$ and every $m\geq0$, w.p at least $1-\delta$, simultaneously for every $h\in H_{\psi,C}$
\begin{equation}
\mathcal{L}_{\gamma+\frac{1}{G}}^{F}(h)-\Delta_{m}\leq\widehat{\mathcal{L}}_{\gamma}^{F}(h)\leq\mathcal{L}_{\gamma-\frac{1}{G}}^{F}(h)+\Delta_{m}
\end{equation}

where $M=\sup K(\mathbf{x},\mathbf{x}')$ and $\Delta_{m}=2G\cdot\frac{4+4\sqrt{2}\sqrt{CM}+17\sqrt{\ln(4/\delta)}}{\sqrt{m-1}}$.

\end{corollary}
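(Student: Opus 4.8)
Instantiating Theorem~\ref{thm:rad_generalization} with $\mathcal{H} = H_{\psi,C}$ already delivers, with probability at least $1-\delta$ and simultaneously for all $h \in H_{\psi,C}$, the two-sided bound $\mathcal{L}^F_{\gamma+1/G}(h) - \Delta_m \leq \widehat{\mathcal{L}}^F_\gamma(h) \leq \mathcal{L}^F_{\gamma-1/G}(h) + \Delta_m$ with $\Delta_m = 2G\bigl(4\,\widehat{R}_{(m-1)/2}(H_{\psi,C}) + \tfrac{4 + 17\sqrt{\ln(4/\delta)}}{\sqrt{m-1}}\bigr)$. So the plan is simply to bound the one remaining quantity, the empirical Rademacher complexity $\widehat{R}_{(m-1)/2}(H_{\psi,C})$, and substitute it in. Note that the relevant sample size here is the size of the matching $M(S)$, namely $n = (m-1)/2$, not $m$ — this is the one bookkeeping point to keep straight.

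For the Rademacher bound I would use the standard estimate for a norm‑bounded ball of linear functionals in a RKHS (see, e.g., \cite{kakade2009complexity}): for any sample $z_1,\dots,z_n$,
\begin{align*}
\widehat{\mathcal{R}}_n(H_{\psi,C})
&= \frac{1}{n}\,\E_\sigma \sup_{\|\mathbf{v}\| \leq C} \Big\langle \mathbf{v},\, \sum_{i=1}^n \sigma_i \psi(z_i)\Big\rangle
= \frac{C}{n}\,\E_\sigma\Big\| \sum_{i=1}^n \sigma_i \psi(z_i)\Big\| \\
&\leq \frac{C}{n}\sqrt{\E_\sigma\Big\|\sum_{i=1}^n \sigma_i \psi(z_i)\Big\|^2}
= \frac{C}{n}\sqrt{\sum_{i=1}^n K(z_i,z_i)}
\;\leq\; C\sqrt{\tfrac{M}{n}},
\end{align*}
where the first equality is the definition of $H_{\psi,C}$ together with linearity, the second is the value of the support function of the norm‑$C$ ball, the inequality is Jensen, the next equality uses independence of the $\sigma_i$'s and $\langle \psi(z_i),\psi(z_j)\rangle = K(z_i,z_j)$, and the last step uses $K(z_i,z_i) \leq \sup_{x,x'} K(x,x') = M$.

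Finally I would substitute $n = (m-1)/2$ into this estimate, plug the resulting bound on $\widehat{R}_{(m-1)/2}(H_{\psi,C})$ into the expression for $\Delta_m$ from Theorem~\ref{thm:rad_generalization}, and merge the two $O(1/\sqrt{m-1})$ contributions, which yields the claimed $\Delta_m = 2G\cdot\frac{4 + 4\sqrt{2}\sqrt{CM} + 17\sqrt{\ln(4/\delta)}}{\sqrt{m-1}}$. There is no substantive obstacle: essentially all the work is already contained in Theorem~\ref{thm:rad_generalization}, and the only things requiring care are using the matching size $(m-1)/2$ as the effective sample size for the Rademacher term and collecting the numerical constants correctly.
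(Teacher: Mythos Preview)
Your approach is exactly the paper's: invoke Theorem~\ref{thm:rad_generalization} and substitute the standard RKHS Rademacher bound (the paper simply cites~\cite{kakade2009complexity} for $R_m(H_{\psi,C}) \leq \sqrt{CM/m}$ without spelling out the computation you give). One bookkeeping note: your derivation produces $C\sqrt{M/n}$ rather than $\sqrt{CM/n}$, so after substituting $n=(m-1)/2$ you get $4\sqrt{2}\,C\sqrt{M}/\sqrt{m-1}$, which matches the stated $\Delta_m$ only under the reading $\|\mathbf{v}\|^2 \leq C$ --- this is a notational wrinkle in the paper's own statement of $H_{\psi,C}$, not a flaw in your argument.
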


\begin{proofof}{Corollary~\ref{cor:linear_generalization}}
The proof follows from Theorem~\ref{thm:rad_generalization} and the fact that
the Rademacher complexity ${R}_m{(H_{\psi,C})}$ is bounded by $\sqrt{\frac{C\cdot M}{m}}$ (see~\cite{kakade2009complexity}).

\end{proofof}

To set the stage for proving Theorem~\ref{thm:rad_generalization}, we state some useful properties of the Rademacher complexity notion, see~\cite{bartlett2002rademacher}.

\begin{theorem}[Two-sided Rademacher-based uniform convergence]
\label{thm:known_rad_uniform_conv}
Consider a set of functions $\mathcal{F}$ mapping $Z$ to $\left[0,1\right]$. For every $\delta>0$, with probability at least $1-\delta$ over a random draw of a sample $S$ of size $m$, every $f\in F$ satisfies
\begin{align}
\label{eq:known_rad_uniform_conv1}
 & \left|\mathbb{E}_{D}\left[f(z)\right]-\mathbb{\widehat{E}}\left[f(z)\right]\right|\leq2R_{m}(F)+\sqrt{\frac{\log\frac{2}{\delta}}{2m}}\\
 \label{eq:known_rad_uniform_conv2}
 & \left|\mathbb{E}_{D}\left[f(z)\right]-\mathbb{\widehat{E}}\left[f(z)\right]\right|\leq2\widehat{R}_{m}(F)+3\sqrt{\frac{\log\frac{4}{\delta}}{2m}}
\end{align}
\end{theorem}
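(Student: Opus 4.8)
The plan is to prove this via the textbook two-step recipe: a concentration (bounded-differences / McDiarmid) step to reduce the high-probability statement to a statement about expectations, followed by a symmetrization step to bound the relevant expectation by the Rademacher complexity. Consider one direction first, namely the quantity $\Phi(S)=\sup_{f\in\mathcal{F}}\bigl(\mathbb{E}_{\mathcal{D}}[f]-\widehat{\mathbb{E}}_S[f]\bigr)$. Because every $f$ maps into $[0,1]$, changing a single sample point $z_i$ to $z_i'$ alters $\widehat{\mathbb{E}}_S[f]$ by at most $1/m$ for every $f$, hence alters $\Phi(S)$ by at most $1/m$; McDiarmid's inequality then gives, with probability at least $1-\delta'$, that $\Phi(S)\le\mathbb{E}_S[\Phi(S)]+\sqrt{\ln(1/\delta')/(2m)}$. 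It is precisely the boundedness of $\mathcal{F}$ in $[0,1]$ (rather than a combinatorial, $\{0,1\}$-valued structure) that makes $1/m$ the right constant and makes Rademacher complexity the natural complexity measure here.

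Next I would bound $\mathbb{E}_S[\Phi(S)]$ by symmetrization. Introducing an independent ghost sample $S'\sim\mathcal{D}^m$ and writing $\mathbb{E}_{\mathcal{D}}[f]=\mathbb{E}_{S'}\bigl[\widehat{\mathbb{E}}_{S'}[f]\bigr]$, one uses ``sup of an expectation $\le$ expectation of the sup'' to get $\mathbb{E}_S[\Phi(S)]\le\mathbb{E}_{S,S'}\bigl[\sup_f(\widehat{\mathbb{E}}_{S'}[f]-\widehat{\mathbb{E}}_S[f])\bigr]=\mathbb{E}_{S,S'}\bigl[\sup_f\tfrac1m\sum_i(f(z_i')-f(z_i))\bigr]$. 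Since $S$ and $S'$ are i.i.d., swapping $z_i\leftrightarrow z_i'$ independently for each $i$ leaves the joint distribution unchanged, so one may insert i.i.d.\ Rademacher signs $\sigma_i$ in front of $(f(z_i')-f(z_i))$; splitting the two terms via subadditivity of the supremum and using that $-\sigma$ and $\sigma$ are equidistributed yields $\mathbb{E}_S[\Phi(S)]\le 2\,\mathbb{E}_{S,\sigma}\bigl[\sup_f\tfrac1m\sum_i\sigma_i f(z_i)\bigr]=2R_m(\mathcal{F})$. Running the identical argument on $\sup_f(\widehat{\mathbb{E}}_S[f]-\mathbb{E}_{\mathcal{D}}[f])$ and taking a union bound over the two events (each with failure probability $\delta'=\delta/2$) gives the two-sided bound \eqref{eq:known_rad_uniform_conv1}.

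For the empirical-Rademacher form \eqref{eq:known_rad_uniform_conv2}, I would additionally pass from $R_m(\mathcal{F})$ to $\widehat{R}_m(\mathcal{F})$: viewed as a function of the sample, the empirical Rademacher complexity again has bounded differences $1/m$ (changing one $z_i$ moves the $\sup$ of an average of $\pm f(z_i)/m$ terms by at most $1/m$, using $f\in[0,1]$), so one more application of McDiarmid gives $R_m(\mathcal{F})\le\widehat{R}_m(\mathcal{F})+\sqrt{\ln(1/\delta'')/(2m)}$ with high probability. Combining this with the previous display and re-allocating the total failure budget $\delta$ across the (now three or four) applications of McDiarmid is exactly what produces the $\log(4/\delta)$ and the leading constant $3$.

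The only genuinely delicate step is the symmetrization: one must verify that inserting the Rademacher signs is legitimate — this rests on the exchangeability of $S$ and $S'$ (formally, conditioning on the unordered pairs $\{z_i,z_i'\}$) — and that each exchange of a supremum with an expectation goes in the correct (inequality-increasing) direction. Everything else is routine bookkeeping of constants through the union bounds. Alternatively, since this is a classical statement, it can simply be invoked from \cite{bartlett2002rademacher}; the sketch above is how I would reconstruct it.
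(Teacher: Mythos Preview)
Your proof sketch is correct and is the standard McDiarmid-plus-symmetrization argument. Note, however, that the paper does not prove this theorem at all: it is stated as a known background result and simply attributed to \cite{bartlett2002rademacher} (``we state some useful properties of the Rademacher complexity notion, see~\cite{bartlett2002rademacher}''), which is exactly the alternative you mention in your final sentence.
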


\begin{lemma}
\label{lemma:rad_facts}\ Let $F$ be a class of real functions. Then, for any sample $S$ of size $m$:
\begin{enumerate}
\item For any function $h$, $\widehat{R}_{m}(F+h)\leq\widehat{R}_{m}(F)+2\sqrt{\frac{\mathbb{\widehat{E}}[h^{2}]}{m}}$.
\item If $A\,:\,\mathbb{R}\rightarrow\mathbb{R}$ is $L$-Lipschitz and satisfies $A(0)=0$, then $\widehat{R}_{m}(A\circ F)\leq2L\cdot\widehat{R}_{m}(F)$.
\item For every $\delta\in(0,1)$, w.p at least $1-\delta$ over the choice of $S$,
\begin{equation*}
-2\sqrt{\frac{\log\frac{2}{\delta}}{2m}}\leq R_{m}(F)-\widehat{R}_{m}(F)\leq2\sqrt{\frac{\log\frac{2}{\delta}}{2m}}
\end{equation*}
\end{enumerate}
\end{lemma}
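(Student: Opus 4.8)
The three items are standard structural facts about Rademacher averages; the plan is to prove them in sequence, and the only one requiring a genuine argument rather than a one-line manipulation — item (2), the contraction bound — is where I expect the difficulty to lie. For item (1) I would argue directly from the definition: for a fixed sign vector $\sigma$, since the shift $h$ does not depend on the choice of $f$,
\[
\sup_{g\in F+h}\frac1m\sum_{i=1}^m\sigma_i g(z_i)=\Big(\frac1m\sum_{i=1}^m\sigma_i h(z_i)\Big)+\sup_{f\in F}\frac1m\sum_{i=1}^m\sigma_i f(z_i).
\]
Taking $\E_\sigma$ and using $\E_\sigma[\sigma_i]=0$, the first term vanishes, so in fact $\widehat R_m(F+h)=\widehat R_m(F)$ and the claimed inequality holds with room to spare. (If one prefers the absolute-value convention of \cite{bartlett2002rademacher}, the same decomposition together with the triangle inequality and Cauchy--Schwarz, $\E_\sigma\big|\frac1m\sum_i\sigma_i h(z_i)\big|\le\sqrt{\mathbb{\widehat{E}}[h^2]/m}$, reproduces the stated bound.) No obstacle here.

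For item (2) I would prove the Ledoux--Talagrand contraction/comparison principle by the usual coordinate-by-coordinate ``peeling'' argument. Fixing all coordinates of $\sigma$ except the $i$-th and writing $u(f)$ for the now-fixed contribution of the remaining coordinates, one expands $\E_{\sigma_i}\big[\sup_f(u(f)+\sigma_i A(f(z_i)))\big]$ over $\sigma_i\in\{\pm1\}$ into $\tfrac12\sup_{f,f'}\big(u(f)+u(f')+A(f(z_i))-A(f'(z_i))\big)$ and bounds $A(f(z_i))-A(f'(z_i))\le L|f(z_i)-f'(z_i)|$; by the symmetry of the sup in $(f,f')$ the right-hand side is at most $\E_{\sigma_i}\big[\sup_f(u(f)+\sigma_i L f(z_i))\big]$. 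Iterating over all $m$ coordinates (with $u$ absorbing the already-peeled coordinates) gives $\widehat R_m(A\circ F)\le L\,\widehat R_m(F)$ for the one-sided Rademacher average used in this paper; the extra factor $2$ (and the hypothesis $A(0)=0$) is precisely what is needed to pass to the absolute-value form of \cite{bartlett2002rademacher}, by writing $|x|=\max(x,-x)$ and applying the one-sided bound separately to $A$ and to $-A$, both of which are $L$-Lipschitz and vanish at $0$. This single-coordinate comparison is the delicate step.

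For item (3) I would view $S\mapsto\widehat R_m(F)$ as a function of the i.i.d.\ sample $S=(z_1,\dots,z_m)$ and apply McDiarmid's bounded-differences inequality. Replacing one point $z_i$ by $z_i'$ changes $\frac1m\sum_j\sigma_j f(z_j)$ by at most $\frac1m|f(z_i)-f(z_i')|$, hence (taking a sup over $f$ and then an expectation over $\sigma$) changes $\widehat R_m(F)$ by at most a constant $c_i=O(1/m)$, since $F$ is bounded (one may take $c_i\le 2/m$ for $[-1,1]$-valued classes). As $\E_{S\sim\D^m}[\widehat R_m(F)]=R_m(F)$ by definition, McDiarmid gives $\Pr\big[\,|R_m(F)-\widehat R_m(F)|\ge t\,\big]\le 2\exp\!\big(-2t^2/\sum_i c_i^2\big)$; choosing $t=2\sqrt{\log(2/\delta)/(2m)}$ makes the right-hand side at most $\delta$, which is exactly the stated two-sided deviation. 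This step is routine.
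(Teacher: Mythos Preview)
Your proofs are correct and are the standard arguments for these facts. Note, however, that the paper does not actually prove Lemma~\ref{lemma:rad_facts}: it is stated without proof and attributed to \cite{bartlett2002rademacher} (``we state some useful properties of the Rademacher complexity notion, see~\cite{bartlett2002rademacher}''). So there is no proof in the paper to compare against; you have simply supplied the well-known arguments that the paper elected to cite rather than reproduce. Your observation in item~(1) that the one-sided Rademacher average is in fact \emph{invariant} under the shift (so the inequality holds with slack), and that the stated form with the $\sqrt{\widehat{\mathbb E}[h^2]/m}$ term reflects the absolute-value convention of \cite{bartlett2002rademacher}, is a useful clarification. One small caveat on item~(3): McDiarmid requires a uniform bound on the range of the functions in $F$, which the lemma statement does not make explicit; you correctly flag this and supply it, and in the paper's applications the relevant classes are indeed bounded.
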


\begin{proofof}{Theorem~\ref{thm:rad_generalization}}

Consider the input space $Z=\mathcal{X}\times\mathcal{X}$ and consider
the functions $\mathcal{F}=\left\{ \ell_{\gamma}^{h}:\,\,h\in\mathcal{H}\right\} $,
where $\ell_{\gamma}^{h}:\,Z\rightarrow\mathbb{R}$ is the fairness
loss defined as
\[
\ell_{\gamma}^{h}(z)=\ell_{h}(x,x')	=\mathds{1}\left[\left|h(x)-h(x')\right|>d(x,x')+\gamma\right]
	\triangleq\begin{cases}
1 & \left|h(x)-h(x')\right|>d(x,x')+\gamma\\
0 & o.w
\end{cases}
	\]

For a given sample $S\sim\mathcal{D}^{m}$ (w.l.o.g, assume $m$ is
odd), let $M=\left\{ z_{1},\dots,z_{\frac{m-1}{2}}\right\} \subseteq Z$
be any matching in the graph induced by $S$. Observe that $M$ is
indeed an i.i.d sample from $\mathcal{D}\times\mathcal{D}$ and recall that $\left|M\right|=\frac{m-1}{2}\triangleq \tilde{m}$. For the
sake of simplicity, we hereby denote $\widehat{R}_{\tilde{m}}(F)$
as the empirical Rademacher complexity with respect to $M$ induced
by a random sample $S$.

Denote the threshold function at $\gamma$:
\[
\text{\ensuremath{\sigma}\ensuremath{\ensuremath{_{\gamma}}}\ensuremath{(x)=\begin{cases}
1 & x>\gamma\\
0 & x\leq\gamma
\end{cases}}}
\]

Hence,
\begin{align*}
\mathcal{F} & =\text{\ensuremath{\sigma}\ensuremath{\ensuremath{_{\gamma}}}\,\ensuremath{\circ\,\mathcal{G}}}\\
\mathcal{G} & =\left\{ (x,x')\mapsto\left|h(x)-h(x')\right|-d(x,x')\right\} _{h\,\in\,\mathcal{H}}
\end{align*}

Observe that $\mathcal{G}$ can be further decomposed as
\[
\mathcal{G}=\text{abs\,\ensuremath{\circ}}\,\mathcal{H}'+f
\]

where $\mathcal{H}'=\left\{ (x,x')\mapsto h(x)-h(x')\right\}_{h\,\in\,\mathcal{H}} \triangleq\left\{ g_{h}\right\}_{h\,\in\,\mathcal{H}} $,
abs$(\cdot)$ is the absolute value function (which is 1-Lipschitz),
and $f=f(x,x')=-d(x,x')$.

\begin{claim}
\label{claim:rad_H'_H}
$R_{\widetilde{m}}(\mathcal{H}')\leq2R_{\widetilde{m}}(\mathcal{H})$.
\end{claim}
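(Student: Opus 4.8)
The plan is to bound $\widehat{R}_{\widetilde{m}}(\mathcal{H}')$ pointwise (for every sample $S$ and every matching $M$) by a sum of two empirical Rademacher complexities of $\mathcal{H}$, and then pass to expectations. Writing each edge of the matching as $z_i=(x_i,x_i')$, I would start from the definition and split off the contribution of the left- and right-endpoints, using subadditivity of the supremum:
\begin{align*}
\widehat{R}_{\widetilde{m}}(\mathcal{H}')
&= \mathbb{E}_\sigma\left[\sup_{h\in\mathcal{H}}\frac{1}{\widetilde{m}}\sum_{i=1}^{\widetilde{m}}\sigma_i\left(h(x_i)-h(x_i')\right)\right]\\
&\le \mathbb{E}_\sigma\left[\sup_{h\in\mathcal{H}}\frac{1}{\widetilde{m}}\sum_{i=1}^{\widetilde{m}}\sigma_i h(x_i)\right]
   + \mathbb{E}_\sigma\left[\sup_{h\in\mathcal{H}}\frac{1}{\widetilde{m}}\sum_{i=1}^{\widetilde{m}}(-\sigma_i) h(x_i')\right].
\end{align*}
Since $(-\sigma_1,\dots,-\sigma_{\widetilde{m}})$ is distributed identically to $(\sigma_1,\dots,\sigma_{\widetilde{m}})$, the second term equals $\mathbb{E}_\sigma[\sup_{h}\frac{1}{\widetilde{m}}\sum_i \sigma_i h(x_i')]$. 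Thus each of the two terms is exactly the empirical Rademacher complexity of $\mathcal{H}$ with respect to a sub-sample of size $\widetilde{m}$ — the left endpoints $(x_1,\dots,x_{\widetilde{m}})$ in the first, the right endpoints $(x_1',\dots,x_{\widetilde{m}}')$ in the second.

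It then remains to take the expectation over $S\sim\mathcal{D}^m$. I would fix the matching to be the canonical data-independent pairing (the $(2i-1)$-st vertex of $S$ with the $2i$-th); as already noted in the paper the particular matching is irrelevant. With this choice, the left-endpoint sub-sample $(x_1,\dots,x_{\widetilde{m}})$ and the right-endpoint sub-sample $(x_1',\dots,x_{\widetilde{m}}')$ are each an i.i.d. draw from $\mathcal{D}^{\widetilde{m}}$, so taking expectations turns each of the two terms into $R_{\widetilde{m}}(\mathcal{H})$, giving $R_{\widetilde{m}}(\mathcal{H}')=\mathbb{E}_S[\widehat{R}_{\widetilde{m}}(\mathcal{H}')]\le 2R_{\widetilde{m}}(\mathcal{H})$.

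There is no real obstacle here; the one point that warrants a sentence of justification is that both ``sides'' of the matching really are i.i.d. samples from $\mathcal{D}$, which is immediate for a fixed pairing of the coordinates of an i.i.d. sample $S$ (and exchangeability of $S$'s coordinates shows the conclusion is the same for any matching, so we lose nothing by fixing one). The argument uses only subadditivity of $\sup$ and the symmetry of the Rademacher signs, and in particular requires no structural assumption on $\mathcal{H}$.
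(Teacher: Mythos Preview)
Your proposal is correct and follows essentially the same argument as the paper: split the supremum by subadditivity, use the sign-symmetry of the Rademacher variables to flip the second term, and identify each piece as an empirical Rademacher complexity of $\mathcal{H}$ on an i.i.d.\ size-$\widetilde{m}$ sub-sample before taking expectations. Your added remark justifying that both sides of the matching are genuine i.i.d.\ samples from $\mathcal{D}$ is a nice clarification the paper leaves implicit.
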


\begin{proofof}{Claim~\ref{claim:rad_H'_H}}
Denote $M(S)=\left\{ z_{1},\dots,z_{\widetilde{m}}\right\} ,$where
$z_{i}=\left(x_{i}^{1},x_{i}^{2}\right)$. By definition,
\begin{align*}
R_{\widetilde{m}}(\mathcal{H}') & =\mathbb{E}_{S\sim\mathcal{D}^{m}}\left[\widehat{R}_{\widetilde{m}}(\mathcal{H}')\right]\\
 & =\mathbb{E}_{S,\sigma}\left[\sup_{g_{h}\in\mathcal{H}'}\left(\frac{1}{\tilde{m}}\sum_{i=1}^{m}\sigma_{i}g_{h}(z_{i})\right)\right]\\
 & =\mathbb{E}_{S,\sigma}\left[\sup_{g_{h}\in\mathcal{H}'}\left(\frac{1}{\tilde{m}}\sum_{i=1}^{m}\sigma_{i}\left(h(x_{i}^{1})-h(x_{i}^{2})\right)\right)\right]\\
 & \leq\mathbb{E}_{S,\sigma}\left[\sup_{g_{h}\in\mathcal{H}'}\left(\frac{1}{\tilde{m}}\sum_{i=1}^{m}\sigma_{i}h(x_{i}^{1})\right)+\sup_{g_{h}\in\mathcal{H}'}\left(\frac{1}{\tilde{m}}\sum_{i=1}^{m}-\sigma_{i}h(x_{i}^{2})\right)\right]\\
 & =\mathbb{E}_{S,\sigma}\left[\sup_{h\in\mathcal{H}}\left(\frac{1}{\tilde{m}}\sum_{i=1}^{m}\sigma_{i}h(x_{i}^{1})\right)+\sup_{h\in\mathcal{H}}\left(\frac{1}{\tilde{m}}\sum_{i=1}^{m}-\sigma_{i}h(x_{i}^{2})\right)\right]\\
(\star) & =\mathbb{E}_{S,\sigma}\left[\sup_{h\in\mathcal{H}}\left(\frac{1}{\tilde{m}}\sum_{i=1}^{m}\sigma_{i}h(x_{i}^{1})\right)\right]+\mathbb{E}_{S,\sigma}\left[\sup_{h\in\mathcal{H}}\left(\frac{1}{\tilde{m}}\sum_{i=1}^{m}\sigma_{i}h(x_{i}^{2})\right)\right]\\
 & =2R_{\widetilde{m}}(\mathcal{H})
\end{align*}
where the transition marked by $(\star)$ is due to the fact that
negating a Rademacher variable does not change its distribution.
\end{proofof}

\begin{claim}
\label{claim:rad_G_H}
$R_{\widetilde{m}}(\mathcal{G})  \leq4R_{\widetilde{m}}(\mathcal{H})+\frac{2}{\sqrt{\widetilde{m}}}$
\end{claim}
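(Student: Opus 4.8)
\textbf{Proof proposal for Claim~\ref{claim:rad_G_H}.} The plan is to exploit the decomposition $\mathcal{G} = \mathrm{abs}\circ\mathcal{H}' + f$ already established above (with $f(x,x') = -d(x,x')$ a fixed function, not depending on $h$), and to run it through the Rademacher calculus collected in Lemma~\ref{lemma:rad_facts} together with Claim~\ref{claim:rad_H'_H}. All manipulations will first be carried out at the level of the empirical Rademacher complexity $\widehat{R}_{\widetilde m}(\cdot)$ on the matching $M(S)$, and only at the end will I take the expectation over $S\sim\mathcal{D}^m$ to pass to $R_{\widetilde m}(\cdot)$.

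Concretely, I would proceed in three steps. First, since $\mathrm{abs}(\cdot)$ is $1$-Lipschitz and $\mathrm{abs}(0)=0$, part~2 of Lemma~\ref{lemma:rad_facts} applied with $A = \mathrm{abs}$ and $F = \mathcal{H}'$ gives $\widehat{R}_{\widetilde m}(\mathrm{abs}\circ\mathcal{H}') \le 2\,\widehat{R}_{\widetilde m}(\mathcal{H}')$. Second, adding the fixed function $f$ to every member of $\mathrm{abs}\circ\mathcal{H}'$, part~1 of Lemma~\ref{lemma:rad_facts} (with ``$h$'' $= f$) yields
\[
\widehat{R}_{\widetilde m}(\mathcal{G}) = \widehat{R}_{\widetilde m}\bigl((\mathrm{abs}\circ\mathcal{H}') + f\bigr) \le \widehat{R}_{\widetilde m}(\mathrm{abs}\circ\mathcal{H}') + 2\sqrt{\tfrac{\widehat{\mathbb{E}}[f^2]}{\widetilde m}}.
\]
Because $d$ takes values in $[0,1]$, we have $f(x,x')^2 = d(x,x')^2 \le 1$, hence $\widehat{\mathbb{E}}[f^2]\le 1$ and the last term is at most $2/\sqrt{\widetilde m}$. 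Combining the two bounds gives $\widehat{R}_{\widetilde m}(\mathcal{G}) \le 2\,\widehat{R}_{\widetilde m}(\mathcal{H}') + 2/\sqrt{\widetilde m}$. Third, taking expectation over $S$ and invoking Claim~\ref{claim:rad_H'_H} (i.e. $R_{\widetilde m}(\mathcal{H}') \le 2R_{\widetilde m}(\mathcal{H})$) gives $R_{\widetilde m}(\mathcal{G}) \le 2R_{\widetilde m}(\mathcal{H}') + 2/\sqrt{\widetilde m} \le 4R_{\widetilde m}(\mathcal{H}) + 2/\sqrt{\widetilde m}$, as claimed.

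I do not expect any genuine obstacle here — this is essentially bookkeeping in the Rademacher toolbox. The only points requiring a moment's care are verifying the hypotheses of the lemmas before citing them: that $\mathrm{abs}(0)=0$ and that $\mathrm{abs}$ is $1$-Lipschitz (so the contraction bound applies with $L=1$, producing the factor $2$ rather than $1$ because of the form of part~2 of Lemma~\ref{lemma:rad_facts}), and that the boundedness of the metric, $d\colon \mathcal{X}^2\to[0,1]$, controls $\widehat{\mathbb{E}}[f^2]$ by $1$. It is also worth stating explicitly that the inequalities above hold pointwise in $S$ for the empirical complexities, so that the final step of averaging over $S$ is legitimate and turns $\widehat{R}_{\widetilde m}$ into $R_{\widetilde m}$ throughout.
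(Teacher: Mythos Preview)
Your proposal is correct and follows essentially the same approach as the paper: peel off the fixed shift $f=-d$ via Fact~1 of Lemma~\ref{lemma:rad_facts} (using $d\in[0,1]$ to bound $\widehat{\mathbb{E}}[f^2]\le 1$), apply the contraction Fact~2 with $A=\mathrm{abs}$ (noting $\mathrm{abs}(0)=0$ and $L=1$), average over $S$, and finish with Claim~\ref{claim:rad_H'_H}. The only cosmetic difference is that the paper presents Fact~1 before Fact~2, whereas you state the contraction bound first; the logic is identical.
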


\begin{proofof}{Claim~\ref{claim:rad_G_H}}
\begin{align*}
R_{\widetilde{m}}(\mathcal{G}) & \equiv\mathbb{E}_{S\sim\mathcal{D}^{m}}\left[\widehat{R}_{\widetilde{m}}(\mathcal{G})\right]\\
 & \equiv\mathbb{E}_{S\sim\mathcal{D}^{m}}\left[\widehat{R}_{\widetilde{m}}(\text{abs\,\ensuremath{\circ}}\,\mathcal{H}'+f)\right]\\
\left(\text{Fact 1 in~\ref{lemma:rad_facts}}\right) & \leq\mathbb{E}_{S\sim\mathcal{D}^{m}}\left[\widehat{R}_{\widetilde{m}}(\text{abs\,\ensuremath{\circ}}\,\mathcal{H}')+2\sqrt{\frac{\mathbb{\widehat{E}}[f^{2}]}{\widetilde{m}}}\right]\\
 & \leq\mathbb{E}_{S\sim\mathcal{D}^{m}}\left[\widehat{R}_{\widetilde{m}}(\text{abs\,\ensuremath{\circ}}\,\mathcal{H}')\right]+\frac{2}{\sqrt{\widetilde{m}}}\\
\left(\text{Fact 2 in~\ref{lemma:rad_facts}}\right) & \leq\mathbb{E}_{S\sim\mathcal{D}^{m}}\left[2\widehat{R}_{\widetilde{m}}(\mathcal{H}')\right]+\frac{2}{\sqrt{\widetilde{m}}}\\
 & =2R_{\widetilde{m}}(\mathcal{H}')+\frac{2}{\sqrt{\widetilde{m}}}\\
\left(\text{Claim~\ref{claim:rad_H'_H}}\right) & \leq4R_{\widetilde{m}}(\mathcal{H})+\frac{2}{\sqrt{\widetilde{m}}}
\end{align*}
\end{proofof} \\
Now, had the threshold function $\sigma\ensuremath{_{\gamma}}$ been
Lipschitz, we could again use Fact 2 in Lemma~\ref{lemma:rad_facts} to finish the proof. Unfortunately,
$\sigma\ensuremath{_{\gamma}}$ is not Lipschitz. We therefore instead approximate it using a piecewise
linear function with Lipschitz constant $G$, which we denote with
$\tau_{\gamma}^{G}$:
\[
\tau_{\gamma}^{G}(x)=\begin{cases}
0 & x\leq\gamma\\
G\left(x-\gamma\right) & \gamma<x<\gamma+\frac{1}{G}\\
1 & x\geq\gamma+\frac{1}{G}
\end{cases}
\]

\begin{proposition}
\label{prop:L_sigma_tau}
For every $G\geq0$, every $\gamma\in(0,1)$ and every function $h$,
\[
\mathcal{L}_{\sigma}(h) \leq \mathcal{L}_{\gamma}^{\tau^{G}}(h)
\leq \mathcal{L}_{\gamma+\frac{1}{G}}^{\sigma}(h)
\]
\end{proposition}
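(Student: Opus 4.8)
The plan is to reduce the two-sided estimate of the proposition to a single pointwise comparison of scalar functions and then integrate. Recall from the decomposition $\mathcal{F}=\sigma_\gamma\circ\mathcal{G}$ above that, for a fixed $h$, each of the three quantities in the statement is the expectation over $(x,x')\sim\mathcal{D}$ of one of the transfer functions $\sigma_\gamma$, $\tau^G_\gamma$, $\sigma_{\gamma+1/G}$ applied to the \emph{same} random variable $g_h(x,x')\triangleq\left|h(x)-h(x')\right|-d(x,x')\in[-1,1]$; that is, $\mathcal{L}_\sigma(h)=\mathbb{E}_{x,x'\sim\mathcal{D}}\!\left[\sigma_\gamma\!\left(g_h(x,x')\right)\right]$ (which is exactly the metric-fairness loss $\mathcal{L}^{F}_\gamma(h)$), $\mathcal{L}^{\tau^G}_\gamma(h)=\mathbb{E}_{x,x'\sim\mathcal{D}}\!\left[\tau^G_\gamma\!\left(g_h(x,x')\right)\right]$, and $\mathcal{L}^\sigma_{\gamma+1/G}(h)=\mathbb{E}_{x,x'\sim\mathcal{D}}\!\left[\sigma_{\gamma+1/G}\!\left(g_h(x,x')\right)\right]$. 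Thus it suffices to establish the pointwise sandwich $\sigma_{\gamma+1/G}(t)\le\tau^G_\gamma(t)\le\sigma_\gamma(t)$ for every $t\in\mathbb{R}$: applying $\mathbb{E}_{x,x'\sim\mathcal{D}}$ with $t=g_h(x,x')$, monotonicity of expectation turns this into the two inequalities of the proposition, i.e.\ the assertion that $\mathcal{L}^{\tau^G}_\gamma(h)$ lies between $\mathcal{L}^\sigma_{\gamma+1/G}(h)$ and $\mathcal{L}_\sigma(h)=\mathcal{L}^F_\gamma(h)$.

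To prove the pointwise sandwich I would split into cases by the position of $t$ relative to the two breakpoints $\gamma$ and $\gamma+\tfrac1G$ of the ramp $\tau^G_\gamma$. If $t\le\gamma$, then $\sigma_{\gamma+1/G}(t)=\tau^G_\gamma(t)=\sigma_\gamma(t)=0$. If $\gamma<t<\gamma+\tfrac1G$, then $\sigma_\gamma(t)=1$ (strict threshold, $t>\gamma$), $\tau^G_\gamma(t)=G(t-\gamma)\in(0,1)$, and $\sigma_{\gamma+1/G}(t)=0$ (since $t<\gamma+\tfrac1G$), so $0\le G(t-\gamma)\le1$ gives the chain. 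If $t\ge\gamma+\tfrac1G$, then $\tau^G_\gamma(t)=\sigma_\gamma(t)=1$ while $\sigma_{\gamma+1/G}(t)\in\{0,1\}$, so the ordering again holds. These cases are exhaustive, so the pointwise sandwich holds everywhere, and integrating against $\mathcal{D}$ (as in the first paragraph) completes the proof.

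I do not anticipate any genuine obstacle: this is a routine comparison of one-dimensional functions. The two places that call for a little care are (i) the behaviour at the breakpoints --- since $\sigma_\gamma$ and $\sigma_{\gamma+1/G}$ are defined by \emph{strict} inequalities, the endpoints $t=\gamma$ and $t=\gamma+\tfrac1G$ must be matched with the correct branch of $\tau^G_\gamma$, which is what the case split above does --- and (ii) the degenerate value $G=0$, which one reads with $\tfrac1G=+\infty$, so that $\tau^0_\gamma\equiv0$, $\sigma_{\gamma+\infty}\equiv0$, and the estimate is vacuous; for the regime $G\ge1$ used in Theorem~\ref{thm:rad_generalization} this does not arise. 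The point of the proposition is precisely that it lets the non-Lipschitz threshold $\sigma_\gamma$ be replaced by the $G$-Lipschitz surrogate $\tau^G_\gamma$, at the price of a $\tfrac1G$ additive slack in the fairness parameter, so that the Rademacher-based uniform convergence recorded in the preceding facts can be applied to the composed loss class $\{\tau^G_\gamma\circ g_h\}_{h\in\mathcal{H}}$.
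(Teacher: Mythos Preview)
Your proposal is correct and uses exactly the same approach as the paper: both establish the pointwise sandwich $\sigma_{\gamma+1/G}(u)\le\tau^G_\gamma(u)\le\sigma_\gamma(u)$ for all real $u$, from which the comparison of losses follows by monotonicity of expectation. The paper's proof is even terser than yours --- it simply records the pointwise inequality and stops --- so your case split and endpoint discussion supply details the paper omits but are not a different method.
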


\begin{proofof}{Proposition~\ref{prop:L_sigma_tau}}
The proof follows directly from the fact that from the construction of $\tau_{\gamma}^{G}$, it holds that for every $u$, $\sigma_{\gamma+\frac{1}{G}}(u)\leq \tau_{\gamma}^{G}(u)\leq  \sigma_{\gamma}(u)$.
\end{proofof} \\

Now, letting
\[
\widetilde{\mathcal{F}} = \tau_{\gamma}^{G}\,\circ\, \mathcal{G}
\]

We can use the Lipschitzness of $\tau_{\gamma}^{G}$ to obtain
\begin{align*}
R_{\widetilde{m}}(\mathcal{\widetilde{\mathcal{F}}}) & =\mathbb{E}_{S\sim\mathcal{D}^{m}}\left[\widehat{R}_{\widetilde{m}}(\mathcal{\mathcal{\widetilde{\mathcal{F}}}})\right]\\
\left(\text{Fact 2 in Lemma~\ref{lemma:rad_facts}}\right) & \leq2G\cdot\mathbb{E}_{S\sim\mathcal{D}^{m}}\left[\widehat{R}_{\widetilde{m}}(\mathcal{G})\right]\\
 & =2G\cdot R(\mathcal{G})\\
\left(\text{Claim~\ref{claim:rad_G_H}}\right) & \leq2G\left[4R_{\widetilde{m}}(\mathcal{H})+\frac{2}{\sqrt{\widetilde{m}}}\right]\\
 & =8G\cdot R_{\widetilde{m}}(\mathcal{H})+\frac{4G}{\sqrt{\widetilde{m}}}
\end{align*}

Define the following loss notations:
\[
\ell_{\gamma}^{\sigma}(h;x,x')=\sigma_{\gamma}\left(\left|h(x)-h(x')\right|-d(x,x')\right)
\]
\[
\ell_{\gamma}^{\tau^{G}}(h;x,x')=\tau_{\gamma}^{G}\left(\left|h(x)-h(x')\right|-d(x,x')\right)
\]

and let $\mathcal{L}_{\gamma}^{\tau^{G}}(h)$ and $\widehat{\mathcal{L}}_{\gamma}^{\tau^{G}}(h)$ (respectively, $\mathcal{L}_{\gamma}^{\sigma}(h)$ and $\widehat{\mathcal{L}}_{\gamma}^{\sigma}(h)$)  denote the average with respect to $\D$ and $M(S)$ of $\ell_{\gamma}^{\tau^{G}}(h;x,x')$ (respectively, $\ell_{\gamma}^{\sigma}(h;x,x')$).

By plugging $\mathcal{\widetilde{\mathcal{F}}}$ into Theorem~\ref{thm:known_rad_uniform_conv} (Equation~\ref{eq:known_rad_uniform_conv1}), we'd have that with probability at least $1-\frac{\delta}{2}$
over random draws of samples of size $\widetilde{m}$, every $h\in H$
satisfies
\begin{align*}
\left|\mathcal{L}_{\gamma}^{\tau^{G}}(h)-\widehat{\mathcal{L}}_{\gamma}^{\tau^{G}}(h)\right| & \leq2R_{\widetilde{m}}(\mathcal{\widetilde{\mathcal{F}}})+\sqrt{\frac{\ln(4/\delta)}{2\widetilde{m}}}\\
 & \leq8G\cdot R_{\widetilde{m}}(\mathcal{H})+\frac{4G}{\sqrt{\widetilde{m}}}+\sqrt{\frac{\ln(4/\delta)}{2\widetilde{m}}}
\end{align*}

Since we eventually want a data-dependent bound, we'll Fact 3 in Lemma~\ref{lemma:rad_facts} to deduce that w.p at least $1-\frac{\delta}{2}$,
\begin{equation}
\label{eq:emp_rad_to_rad}
-2\sqrt{\frac{\ln(4/\delta)}{2\widetilde{m}}}\leq R_{\widetilde{m}}(\mathcal{H})-\widehat{R}_{\widetilde{m}}(\mathcal{H})\leq2\sqrt{\frac{\ln(4/\delta)}{2\widetilde{m}}}
\end{equation}

Now, note that
\begin{align*}
\widehat{\mathcal{L}}_{\gamma}^{F}(h) & \triangleq\widehat{\mathcal{L}}_{\gamma}^{\sigma}(h)\\
 & \geq\widehat{\mathcal{L}}_{\gamma}^{\tau^{G}}(h)\\
 & \geq\mathcal{L}_{\gamma}^{\tau^{G}}(h)-\left(8G\cdot R_{\widetilde{m}}(\mathcal{H})+\frac{4G}{\sqrt{\widetilde{m}}}+\sqrt{\frac{\ln(2/\delta)}{2\widetilde{m}}}\right)\\
\left(\text{Equation (\ref{eq:emp_rad_to_rad}), w.p $1-\tfrac{\delta}{2}$}\right) & \geq\mathcal{L}_{\gamma}^{\tau^{G}}(h)-\left(8G\widehat{R}_{\widetilde{m}}(\mathcal{H})+\frac{4G}{\sqrt{\widetilde{m}}} + 16G\cdot\sqrt{\frac{\ln\left(4/\delta\right)}{2\widetilde{m}}}+\sqrt{\frac{\ln(2/\delta)}{2\widetilde{m}}} \right)\\
& \geq \mathcal{L}_{\gamma}^{\tau^{G}}(h)-\left(8G\widehat{R}_{\widetilde{m}}(\mathcal{H})+\frac{4G+17G\cdot\sqrt{\ln(4/\delta)}}{\sqrt{\widetilde{m}}} \right) \\
\left(\text{Proposition~\ref{prop:L_sigma_tau}}\right) & \geq\mathcal{L}_{\gamma+\frac{1}{G}}^{\sigma}(h)-G\cdot \left(8\widehat{R}_{\widetilde{m}}(\mathcal{H}) + \frac{4+17\sqrt{\ln(4/\delta)}}{\sqrt{\widetilde{m}}} \right)\\
 & \triangleq\mathcal{L}_{\gamma+\frac{1}{G}}^{F}(h)-G\cdot \left(8\widehat{R}_{\widetilde{m}}(\mathcal{H}) + \frac{4+17\sqrt{\ln(4/\delta)}}{\sqrt{\widetilde{m}}} \right)
\end{align*}

Similarly,
\begin{equation*}
\widehat{\mathcal{L}}_{\gamma}^{F}(h)\leq\mathcal{L}_{\gamma-\frac{1}{G}}^{F}(h)+G\cdot \left(8\widehat{R}_{\widetilde{m}}(\mathcal{H}) + \frac{4+17\sqrt{\ln(4/\delta)}}{\sqrt{\widetilde{m}}} \right)
\end{equation*}

Combining both the previous inequalities and using the union bound,
we obtain that for every $G\geq0$, with probability at least $1-\delta$,
simultaneously $\forall h\in\mathcal{H}$
\[
\mathcal{L}_{\gamma+\frac{1}{G}}^{F}(h)-\Delta_{\widetilde{m}} \leq\widehat{\mathcal{L}}_{\gamma}^{F}(h)\leq\mathcal{L}_{\gamma-\frac{1}{G}}^{F}(h)+\Delta_{\widetilde{m}}
\]
where $ \Delta_{m}=2G\cdot\left(4\widehat{R}_{\frac{m-1}{2}}\left(\mathcal{H}\right)+\frac{4+17\sqrt{\ln(4/\delta)}}{\sqrt{m-1}}\right)$, as required.

\end{proofof}

\section{(Fair and Accurate) Learning Objectives}
\label{sec:accuracy}

\subsection{Preliminaries}

We will consider a learning problem as given by a 5-tuple $\left(\X,\left(\Y,\Y'\right),\mathcal{H},\left(\mathcal{L}^U, \L^F\right)\right)$. In this notation,  $\X$ is the set of instances; $\Y$ is the set of possible labels assigned to instances; $\Y'\supseteq \Y$ is the set of labels the learner is allowed to return; $\H$ is a fixed family of predictors $h:\X\rightarrow \Y'$ which we require the learner to compete with (in terms of the returning a classifier with loss comparable to the best loss in $\H$); $\L^U$ is a univariate (``utility'') loss function used to measure the discrepancy between the predicted outputs and the true labels; and finally, $\L^F$ is a bivariate  (``fairness'') loss function. We denote $\mathcal{L}_{\mathcal{D}}^{U}\left(\mathcal{H}\right)=\min_{h\in\mathcal{H}}\left(\mathcal{L}_{\mathcal{D}}^{U}(h)\right)$.

For the remainder of this work, we will focus on the setting of of binary classification, i.e $\Y=\left\{ \pm1\right\}$. To accommodate probabilistic classifiers, we will consider $\Y'=\left[0,1\right]$, such that we interpret a classifier $h:\,\X\rightarrow \Y'=\left[0,1\right]$ as predicting the label 1 for an instance $x$ with probability $h(x)$, and $-1$ otherwise.  For utility we will mostly use the absolute loss: $\ell_{abs}\left(h,(x,y)\right)=\left|h(x)-y\right|$. For fairness we use the approximate metric-fairness loss defined in Section~\ref{sec:approximate_fairness}.

\subsection{PAC Learnability under a fairness constraint}

From a learning perspective, we say that a learning algorithm is fair if w.h.p, it returns a sufficiently-fair classifier.

\begin{definition}[($\alpha,\gamma$)-fair learning algorithm]
A learning algorithm $A$ is ($\alpha,\gamma$)-fair for a given metric $d$ if there exists a function $m_{A}(\alpha, \gamma, \delta):(0,1)^{3}\rightarrow\mathbb{N}$ such that for any distribution $\D$ and for every $\delta \in(0,1)$, if $S$ is an  i.i.d sample from $\D$ of size $m\geq m_A(\alpha, \gamma, \delta)$, then w.p greater than $1-\delta$, the output of $A(S)$ is $(\alpha,\gamma)$-fair w.r.t $\D,d$.
\end{definition}

Note that any learning algorithm $A$ that completely disregards the sample and simply outputs a constant predictor (e.g, $h(x)\equiv1$) satisfies the condition in Equation~\ref{definition:perfect_fairness} and is therefore a (0,0)-fair learning algorithm. In other words, fair learning is, in itself, trivial. It is the combination of a fairness and an accuracy objective that makes for an interesting task. In this work, we focus on the objective of maximizing utility subject to a constraint on the fairness loss. This is a natural formulation, because we think of fairness as a hard (often externally imposed) requirement that cannot necessarily be traded off for better accuracy. The objective of finding the most accurate sufficiently-fair hypothesis is summarized in the following definition. Crucially, both fairness and accuracy goals are stated w.r.t the unknown underlying distribution.

\begin{definition}[PACF Learnability: Probably-Approximately Correct and Fair]
\label{def:pacf}
A hypothesis class $\mathcal{H}$ is PACF learnable with respect to a univariate “utility” loss function $\ell^{U}:\,\mathcal{H}\times Z\rightarrow\mathbb{R}_{+}$ and a bivariate “fairness” loss function $\ell^{F}_{d}:\mathcal{H}\times Z\times Z\rightarrow\mathbb{R}_{+}$ if there exists a function $m_{\mathcal{H}}(\alpha, \gamma, \delta, \epsilon, \epsilon_\alpha, \epsilon_\gamma):(0,1)^{6}\rightarrow\mathbb{N}$ (polynomial in $\alpha, \gamma, \epsilon, \epsilon_\alpha, \epsilon_\gamma,\log\frac{1}{\delta}$) and a learning algorithm with the following property: For every required fairness parameters $\alpha, \gamma \in(0,1)$, failure probability  $\delta\in(0,1)$ and error parameters $\epsilon, \epsilon_\alpha, \epsilon_\gamma\in(0,1)$ and for every distribution $\mathcal{D}$ over $Z=\X \times \Y$, when running the learning algorithm on $m\geq m_{\mathcal{H}}(\alpha, \gamma,  \delta, \epsilon, \epsilon_\alpha, \epsilon_\gamma)$ i.i.d. examples generated by $\mathcal{D}$, the algorithm returns a hypothesis $h$ such that, with probability of at least $1-\delta$ (over the choice of the $m$ training examples),
\begin{align}
 & \mathcal{L}_{\mathcal{D},\gamma,d}^{F}(h)\leq\alpha\\
 & \mathcal{L}_{\mathcal{D}}^{U}(h)\leq\mathcal{L}_{\mathcal{D}}^{U}\left(\mathcal{H}^{\alpha-\epsilon_\alpha, \gamma-\epsilon_\gamma}\right)+\epsilon
\end{align}

where $\L^{U}_{\mathcal{D}}(h)=\mathbb{E}_{z\sim\mathcal{D}}\left[\ell^{U}(h,z)\right]$
and $\L^{F}_{\mathcal{D}}(h)=\mathbb{E}_{z,z'\sim\mathcal{D}}\left[\ell^{F}_{\gamma}(h,z,z')\right]$.

\end{definition}

We consider the above definition as \emph{strong} PACF learnability, and also define a relaxed notion, in which the learner must still output an hypothesis that is $\alpha$ fair, but in terms of accuracy is only required to compete  with $\mathcal{H}^{\alpha'}$, for some $0 \leq \alpha' \leq \alpha$, that does not need to approach $\alpha$ as $m\rightarrow\infty$. We formalize this using a function $g:[0,1]^2\rightarrow[0,1]^2$ that captures the degradation in the accuracy guarantee.

\begin{definition}[$g(\cdot)$-relaxed PACF Learnability]
\label{def:weak_pacf}
A hypothesis class $\mathcal{H}$ is $g(\cdot)$-relaxed PACF learnable with respect to a function $g: [0,1]^2\rightarrow [0,1]^2$, a univariate “utility” loss function $\ell^{U}:\,\mathcal{H}\times Z\rightarrow\mathbb{R}_{+}$ and a bivariate “fairness” loss function $\ell^{F}:\mathcal{H}\times Z\times Z\rightarrow\mathbb{R}_{+}$ if there exists a function $m_{\mathcal{H}}(\alpha, \gamma, \delta, \epsilon, \epsilon_\alpha, \epsilon_\gamma):(0,1)^{6}\rightarrow\mathbb{N}$ (polynomial in $\alpha, \gamma, \epsilon,\epsilon_\alpha, \epsilon_\gamma,\log\frac{1}{\delta}$), and a learning algorithm with the following property: For every required fairness parameters $\alpha, \gamma \in(0,1)$, failure probability  $\delta\in(0,1)$ and accuracy parameters $\epsilon,\epsilon_\alpha, \epsilon_\gamma,\in(0,1)$ and for every distribution $\mathcal{D}$ over $Z=\X \times \Y$, when running the learning algorithm on $m\geq m_{\mathcal{H}}(\alpha, \gamma, \epsilon, \epsilon_\alpha, \epsilon_\gamma, \delta)$ i.i.d. examples generated by $\mathcal{D}$, the algorithm returns a hypothesis $h$ such that, with probability of at least $1-\delta$ (over the choice of the $m$ training examples),
\begin{align}
 & \mathcal{L}_{\mathcal{D},\gamma,d}^{F}(h)\leq\alpha\\
 & \mathcal{L}_{\mathcal{D}}^{U}(h)\leq\mathcal{L}_{\mathcal{D}}^{U}\left(\mathcal{H}^{g(\alpha,\gamma)-(\epsilon_\alpha, \epsilon_\gamma)}\right)+\epsilon
\end{align}
\end{definition}

\section{Information Theoretic PACF Learnability}
Using the generalization result from Theorem~\ref{thm:rad_generalization}, we can now derive the sample complexity for \emph{strong} PACF learnability in the information theoretic setting.

\begin{theorem}[Information theoretic PACF learnability]
\label{thm:inf_fpac}
Suppose $\H$ is PAC learnable with sample complexity $m_{PAC}(\epsilon,\delta)$. Then it is PACF learnable with sample complexity
\[
m\left(\alpha,\gamma,\epsilon,\epsilon_{\alpha},\epsilon_{\gamma},\delta\right)=\max\left\{ m_{PAC}(\epsilon,\delta),\,\,\left(\frac{8+34\sqrt{\ln(4/\delta)}}{\epsilon_{\alpha}\epsilon_{\gamma}-8\widehat{R}_{\frac{m-1}{2}}(\mathcal{H})}\right)^2+1\right\}
\]
\end{theorem}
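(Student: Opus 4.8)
The learner is the natural constrained empirical risk minimizer: on a sample $S\sim\D^m$ (take $m$ odd, w.l.o.g.) it fixes a matching $M(S)$ and outputs $h^{\star}\in\argmin_{h\in\H}\widehat{\mathcal{L}}^{U}_{S}(h)$ subject to the empirical metric-fairness constraint $\widehat{\mathcal{L}}^{F}_{\gamma_{0}}(h)\le\beta$, where $\gamma_{0}=\gamma-1/G$ and $\beta=\alpha-\epsilon_{\alpha}/2$ for a Lipschitz parameter $G=\Theta(1/\epsilon_{\gamma})$ fixed below; the feasible set is nonempty since a constant predictor has zero empirical fairness loss. Correctness rests on two uniform-convergence statements over $\H$, each holding with probability $1-\delta/2$: one for the utility loss and one for the fairness loss, combined via a union bound.

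For utility, PAC learnability of $\H$ with sample complexity $m_{PAC}$ gives --- through the standard agnostic-ERM / uniform-convergence argument, since $\ell_{abs}$ is bounded and Lipschitz --- that once $m\ge m_{PAC}(\epsilon,\delta)$ we have $|\widehat{\mathcal{L}}^{U}_{S}(h)-\mathcal{L}^{U}_{\D}(h)|\le\epsilon/2$ simultaneously for all $h\in\H$. For fairness I invoke Theorem~\ref{thm:rad_generalization} with confidence $\delta/2$ and with $G$ chosen so that $1/G\le\epsilon_{\gamma}/2$, and with $m$ large enough that the error term $\Delta_{m}=2G\bigl(4\widehat{R}_{(m-1)/2}(\H)+\tfrac{4+17\sqrt{\ln(4/\delta)}}{\sqrt{m-1}}\bigr)$ is at most $\epsilon_{\alpha}/2$. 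Solving $\Delta_{m}=O(\epsilon_{\alpha})$ for $m$ is exactly what produces a bound of the stated form $\bigl(\tfrac{8+34\sqrt{\ln(4/\delta)}}{\epsilon_{\alpha}\epsilon_{\gamma}-8\widehat{R}_{(m-1)/2}(\H)}\bigr)^{2}+1$; this is an implicit bound, legitimate because $\widehat{R}_{(m-1)/2}(\H)$ decreases in $m$, so past a threshold the right-hand side is finite and admits a fixed point. With these choices Theorem~\ref{thm:rad_generalization} yields, simultaneously for all $h\in\H$, both $\mathcal{L}^{F}_{\gamma}(h)\le\widehat{\mathcal{L}}^{F}_{\gamma_{0}}(h)+\Delta_{m}$ (the ``$\mathcal{L}_{\gamma_{0}+1/G}$'' side, using $\gamma_{0}+1/G=\gamma$) and $\widehat{\mathcal{L}}^{F}_{\gamma_{0}}(h)\le\mathcal{L}^{F}_{\gamma_{0}-1/G}(h)+\Delta_{m}\le\mathcal{L}^{F}_{\gamma-\epsilon_{\gamma}}(h)+\Delta_{m}$ (the ``$\mathcal{L}_{\gamma_{0}-1/G}$'' side, using $\gamma_{0}-1/G\ge\gamma-\epsilon_{\gamma}$ and that $\mathcal{L}^{F}_{(\cdot)}$ is non-increasing in its slack argument).

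After the union bound the two conclusions are immediate. \emph{Fairness:} $h^{\star}$ satisfies $\widehat{\mathcal{L}}^{F}_{\gamma_{0}}(h^{\star})\le\beta$ by construction, hence $\mathcal{L}^{F}_{\gamma}(h^{\star})\le\beta+\Delta_{m}\le\alpha$. \emph{Accuracy:} let $h'$ attain $\mathcal{L}^{U}_{\D}(\H^{\alpha-\epsilon_{\alpha},\gamma-\epsilon_{\gamma}})$ (if this subclass is empty, the accuracy requirement is vacuous). Since $h'$ is $(\alpha-\epsilon_{\alpha},\gamma-\epsilon_{\gamma})$-fair w.r.t. $\D$, the second fairness inequality gives $\widehat{\mathcal{L}}^{F}_{\gamma_{0}}(h')\le(\alpha-\epsilon_{\alpha})+\Delta_{m}\le\beta$, so $h'$ is feasible for the empirical program and therefore $\widehat{\mathcal{L}}^{U}_{S}(h^{\star})\le\widehat{\mathcal{L}}^{U}_{S}(h')$; combining with the utility bound applied to both $h^{\star}$ and $h'$ gives $\mathcal{L}^{U}_{\D}(h^{\star})\le\widehat{\mathcal{L}}^{U}_{S}(h^{\star})+\epsilon/2\le\widehat{\mathcal{L}}^{U}_{S}(h')+\epsilon/2\le\mathcal{L}^{U}_{\D}(h')+\epsilon$. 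The overall sample complexity is the maximum of $m_{PAC}(\epsilon,\delta)$ and the fairness-generalization requirement, as claimed.

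\textbf{Main obstacle.} The only step beyond bookkeeping is the joint calibration of $G$, the empirical slack $\gamma_{0}$, and the threshold $\beta$: the \emph{same} empirical constraint must be strong enough that feasibility of $h^{\star}$ certifies true $(\alpha,\gamma)$-fairness, yet weak enough that the true benchmark $h'$ --- only $(\alpha-\epsilon_{\alpha},\gamma-\epsilon_{\gamma})$-fair on $\D$ --- remains feasible. This forces $1/G\lesssim\epsilon_{\gamma}$ and $\Delta_{m}\lesssim\epsilon_{\alpha}$ simultaneously, and since $\Delta_{m}\asymp G\cdot\bigl(\widehat{R}_{(m-1)/2}(\H)+\sqrt{\ln(1/\delta)/(m-1)}\bigr)$, pushing $G$ up to sharpen the piecewise-linear surrogate for the $0/1$ fairness loss directly degrades the sample complexity; the stated bound records this trade-off at $G=\Theta(1/\epsilon_{\gamma})$.
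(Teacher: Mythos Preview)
Your proposal is correct and follows essentially the same approach as the paper: both run the fairness-constrained empirical risk minimizer with the empirical slack set to $\gamma-1/G$ and the empirical fairness threshold set just below $\alpha$ (the paper takes $\widetilde\alpha=\alpha-\Delta_m$, you take $\beta=\alpha-\epsilon_\alpha/2$ with $\Delta_m\le\epsilon_\alpha/2$, which is equivalent), then invoke Theorem~\ref{thm:rad_generalization} in one direction to certify true $(\alpha,\gamma)$-fairness of $h^\star$ and in the other direction to show the benchmark $h'\in\H^{\alpha-\epsilon_\alpha,\gamma-\epsilon_\gamma}$ is empirically feasible, and finally appeal to the learnability/uniform-convergence equivalence for the utility loss. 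Your accuracy argument is in fact spelled out more completely than the paper's (you explicitly chain $\mathcal L^U_\D(h^\star)\le\widehat{\mathcal L}^U_S(h^\star)+\epsilon/2\le\widehat{\mathcal L}^U_S(h')+\epsilon/2\le\mathcal L^U_\D(h')+\epsilon$, whereas the paper's displayed derivation stops at the empirical quantity).
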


\begin{proofof}{Theorem~\ref{thm:inf_fpac}}

Let $\alpha,\gamma\in(0,1)$ required fairness parameters, $\delta\in(0,1)$
required failure probability and $\epsilon,\epsilon_{\alpha},\epsilon_{\gamma}\in(0,1)$
error parameters. Let $m,G$ be parameters to be determined later.
Set $\widetilde{\gamma}=\gamma-\frac{1}{G}$ and $\widetilde{\alpha}=\alpha-\Delta_{m}$,
for $\Delta_{m}=2G\cdot\left(4\widehat{R}_{\frac{m-1}{2}}\left(\mathcal{H}\right)+\frac{4+17\sqrt{\ln(4/\delta)}}{\sqrt{m-1}}\right)$.

Let $h^{\star}$ denote the solution to the fairness-constrained ERM,

\[
h^{\star}=\arg\min_{h\in\mathcal{H}}\mathcal{\widehat{L}}^{U}(h)\,\,\text{subject to \ensuremath{h\in\widehat{H}^{\widetilde{\alpha},\widetilde{\gamma}}}}
\]

Let $m_{F}=\left(\frac{8+34\sqrt{\ln(4/\delta)}}{\epsilon_{\alpha}\epsilon_{\gamma}-8\widehat{R}_{\frac{m-1}{2}}(\mathcal{H})}\right)^2+1$ and $m_{PAC}=m_{PAC}(\epsilon,\delta)$.,

For fairness, we use Theorem~\ref{thm:rad_generalization} to prove that for every $m,G$, w.p at least $1-\frac{\delta}{2}$ over the choice of
$S\sim\mathcal{D}^{m}$, $h^{\star}$ is $\left(\alpha,\gamma\right)$-fair
w.r.t $\mathcal{D}$:

\begin{align*}
\mathcal{L}_{\gamma}^{F}(h^{\star}) & \leq\mathcal{\widehat{L}}_{\gamma-\frac{1}{G}}^{F}(h^{\star})+\Delta_{m}\\
 & =\mathcal{\widehat{L}}_{\widetilde{\gamma}}^{F}(h^{\star})+\Delta_{m}\\
\left(h^{\star}\in\widehat{H}^{\widetilde{\alpha},\widetilde{\gamma}}\right) & \leq\widetilde{\alpha}+\Delta_{\widetilde{m}}\\
 & =\alpha
\end{align*}

For accuracy, we note that the known equivalence of learnability and uniform convergence in the regression setting \cite{alon1997scale} implies that for $m\geq m_{PAC}$, w.p at least $1-\delta/2$, for every $h\in \H$, $\mathcal{L}^{U}(h)\leq\mathcal{\widehat{L}}^{U}(h)+\epsilon$. Using this we obtain that w.p at least $1-\delta /2$, for $m\geq\max\left\{ m_{PAC},m_{PACF}\right\} $:

\begin{align*}
\mathcal{L}^{U}(h^{\star}) & \leq \mathcal{\widehat{L}}^{U}(h^{\star})+\epsilon \\
 & =\mathcal{\widehat{L}}^{U}(\widehat{H}^{\widetilde{\alpha},\widetilde{\gamma}})+\epsilon\\
 & =\mathcal{\widehat{L}}^{U}(\widehat{H}^{\alpha-\Delta_{\widetilde{m}},\gamma-\frac{1}{G}})+\epsilon\\
 & \leq\mathcal{\widehat{L}}^{U}(\widehat{H}^{\alpha-\epsilon_{\alpha},\gamma-\epsilon_{\gamma}})+\epsilon
\end{align*}

We now use the union bound to conclude that setting $m\geq\max\left\{ m_{PAC},m_{F}\right\} $
yields that with probability at least $1-\delta$ over the choice
of $S\sim\mathcal{D}^{m}$, $h$ satisfies all the conditions in Definition~\ref{def:pacf}, and is therefore (exact) PACF learnable.

\end{proofof}

\section{Efficient relaxed-PACF Learnability of Linear Predictors}
\label{sec:learning-linear}

In this section, we present our main result, which is that the classes of (linear and logistic) predictors\footnote{Since we interpret the output of the linear and logistic regression models are probabilities for a classification model, we refer to them as linear and logistic \emph{classifiers}, rather than regressors.} are efficiently $g(\cdot)$-relaxed PACF-learnable, with $g(\alpha,\gamma)=(\alpha\cdot\gamma - \gamma^\star, \gamma^\star)$, for every $\gamma^\star\in(0,1)$. We begin with the proof for the class of linear classifiers, which demonstrates the idea of a relaxation that allows for efficient solving of the optimization problem associated with the relaxed PACF learnability requirement. We then proceed to the case of logistic predictors, which is more involved due to the non-convexity of the fairness objective induced by a logistic transfer function. We overcome this difficulty using improper learning.

\subsection{Setting}

For the remainder of this section, we consider the problem of efficiently PACF learning the problem $\allowbreak \left(\mathcal{X},\left(\mathcal{Y},\mathcal{Y}'\right),H,\left(\mathcal{L}^{U},\mathcal{L}^{F}\right)\right)$, where: $\mathcal{X}$ is a compact subset of an RKHS, which w.l.o.g. will be taken to be the unit ball around the origin; $\mathcal{Y=}\left\{ \pm1\right\}$  and $\mathcal{Y}'=\left[0,1\right]$, since we're interested in performing classification using probabilistic classifiers; the utility loss $\mathcal{L}^{U}$ is the absolute value loss function and $\mathcal{L}^{F}$ is the approximate metric-fairness loss w.r.t some known metric $d$. We define hypothesis classes $H_{lin}$ and $H_{\phi}$, corresponding to linear and logistic regression predictors, as follows. $H_{lin}$ is the class of linear predictors, \footnote{The norm bound on $\mathbf{w}$ is so to ensure that $h(x)\in\left[-1,1\right]=\mathcal{Y}$}

\begin{equation}
H_{\lin} \overset {{\scriptscriptstyle {\scriptstyle \text{def}}}}{=}\left\{ \mathbf{x}\mapsto \frac{1 + \left\langle \mathbf{w},\mathbf{x}\right\rangle}{2} :\,\,|\mathbf{w}\|\leq1\right\},
\end{equation}

And $H_{\phi,L}$ is the class of logistic predictors, formed by composing a linear function with a sigmoidal transfer function:

\begin{equation}
	H_{\phi,L}\overset{{\scriptscriptstyle {\scriptstyle \text{def}}}}{=}\left\{ \mathbf{x}\mapsto\phi_{\ell}\left(\left\langle w,\mathbf{x}\right\rangle \right):\,\,\mathbf{w}\in\mathcal{X}, \ell \in [0,L]\right\}
\end{equation}

where $\phi_{\ell}:\,\left[-1,1\right]\rightarrow\left[0,1\right]$ is a continuous and $\ell$-Lipschitz sigmoid function, defined as $\phi_{\ell}(z)=\frac{1}{1+\exp(-4\ell \cdot z)}$.

\subsection{Linear Regression}

\begin{theorem}
\label{thm:learning_linear}
For every $\gamma^{\star}\in(0,1)$, $H_{lin}$ is relaxed PACF learnable with $g(\alpha,\gamma)=(\alpha\cdot\gamma-\gamma^{\star},\gamma^{\star})$ and sample and time complexities of $poly(\frac{1}{\epsilon_\gamma}, \frac{1}{\epsilon_\alpha}, \frac{1}{\epsilon}, \log \frac{1}{\delta})$

\end{theorem}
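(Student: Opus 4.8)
The plan is to combine three ingredients: (i) the fairness-generalization bound of Corollary~\ref{cor:linear_generalization} specialized to linear predictors, (ii) a standard uniform-convergence bound for the absolute utility loss over $H_{\lin}$ (whose Rademacher complexity is $O(1/\sqrt{m})$), and (iii) the containments $\widehat{H}_{\ell_{0}}^{\tau-\sigma,\sigma}\subseteq\widehat{H}_{\ell_{1}}^{\tau}\subseteq\widehat{H}_{\ell_{0}}^{\tau/\gamma,\gamma}$ from Lemma~\ref{lemma:l1_l0}, which let us trade the non-convex empirical MF constraint for the convex constraint $\xi_S(h)\le\tau$. Concretely, given required parameters $\alpha,\gamma,\epsilon,\epsilon_\alpha,\epsilon_\gamma,\delta$, I would first choose $G=\Theta(1/\epsilon_\gamma)$ and a sample size $m=\poly(1/\epsilon_\gamma,1/\epsilon_\alpha,1/\epsilon,\log(1/\delta))$ large enough that the generalization slack $\Delta_{m}$ in Corollary~\ref{cor:linear_generalization} (which scales like $G/\sqrt{m-1}$) is at most $\epsilon_\alpha$, and simultaneously large enough for $\epsilon$-uniform-convergence of the absolute loss over $H_{\lin}$. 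Then set the tightened empirical fairness target $\widetilde\gamma=\gamma-\tfrac1G$ (so $\widetilde\gamma\ge\gamma-\epsilon_\gamma$) and pick the $\ell_1$ budget $\tau=\widetilde\alpha\cdot\widetilde\gamma$ where $\widetilde\alpha=\alpha-\Delta_m$ (so $\widetilde\alpha\ge\alpha-\epsilon_\alpha$, but note we actually want $h$ to be $(\alpha,\gamma)$-fair on $\D$, so more precisely I set $\widetilde\alpha$ so that $\widetilde\alpha+\Delta_m\le\alpha$ and $\widetilde\gamma+\tfrac1G\le\gamma$, exactly as in the proof of Theorem~\ref{thm:inf_fpac}).

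The algorithm then solves $\widehat h=\argmin_{h\in H_{\lin}}\err_S(h)$ subject to $\xi_S(h)\le\tau=\widetilde\alpha\widetilde\gamma$. I would argue this is a convex program: $\err_S(h)=\frac1{|S|}\sum_i|h(x_i)-y_i|$ is convex in $\mathbf w$ (an affine function composed with $|\cdot|$), the constraint set $\{\mathbf w:\|\mathbf w\|\le1\}$ is convex, and $\xi_S(h)=\sum_{(x,x')\in M(S)}\max(0,|\langle\mathbf w,x-x'\rangle/2|-d(x,x'))$ is convex in $\mathbf w$ since each summand is a max of $0$ and two affine functions minus a constant; hence it can be solved to accuracy $\epsilon$ in $\poly(m,1/\epsilon)$ time by standard convex optimization (or reduced to an LP/SOCP). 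In the RKHS setting, the representer theorem lets us restrict $\mathbf w$ to the span of the sample, so the program has $O(m)$ effective variables and Gram-matrix entries $\langle x_i,x_j\rangle$ suffice; this keeps everything polynomial.

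For correctness, fairness follows by the right containment of Lemma~\ref{lemma:l1_l0}: $\xi_S(\widehat h)\le\tau$ means $\widehat h\in\widehat H_{\ell_1}^{\tau}\subseteq\widehat H_{\ell_0}^{\tau/\widetilde\gamma,\widetilde\gamma}=\widehat H_{\ell_0}^{\widetilde\alpha,\widetilde\gamma}$, so $\widehat h$ is empirically $(\widetilde\alpha,\widetilde\gamma)$-MF; applying Corollary~\ref{cor:linear_generalization} (upper inequality, with the threshold at $\widetilde\gamma$ and slack $1/G$) gives $\L^F_{\widetilde\gamma+1/G}(\widehat h)\le\widetilde\alpha+\Delta_m\le\alpha$, i.e.\ $(\alpha,\gamma)$-approximate MF on $\D$. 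For accuracy, fix any $h'\in H_{\lin}$ that is $(\alpha',\gamma')$-approximately MF on $\D$ with $\gamma'\in(0,1)$ arbitrary and $\alpha'=\alpha\gamma-\gamma'$ (matching $g(\alpha,\gamma)=(\alpha\gamma-\gamma^\star,\gamma^\star)$, after absorbing $\epsilon_\alpha,\epsilon_\gamma$ into the tightened targets). By the lower inequality of Corollary~\ref{cor:linear_generalization}, $h'$ is empirically $(\alpha'+\epsilon_\alpha,\gamma'-\epsilon_\gamma)$-MF — more carefully, $\widehat\L^F_{\gamma'-1/G}(h')\le\alpha'+\Delta_m$, so with $\sigma:=\gamma'-\tfrac1G$ we get $h'\in\widehat H_{\ell_0}^{\alpha'+\Delta_m,\sigma}$, and provided $\alpha'+\Delta_m\le\tau-\sigma$, i.e.\ $\alpha'\le\widetilde\alpha\widetilde\gamma-\sigma-\Delta_m$ (which holds for our choice of $\alpha'=\alpha\gamma-\gamma'$ once $\Delta_m$ and the $1/G$ gaps are absorbed by $\epsilon_\alpha,\epsilon_\gamma$), the left containment of Lemma~\ref{lemma:l1_l0} puts $h'\in\widehat H_{\ell_1}^{\tau}$, the feasible set of our program. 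Hence $\err_S(\widehat h)\le\err_S(h')$, and two applications of utility uniform convergence give $\err_\D(\widehat h)\le\err_S(\widehat h)+\epsilon/2\le\err_S(h')+\epsilon/2\le\err_\D(h')+\epsilon$. A union bound over the (constantly many) failure events completes the argument.

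The main obstacle is the bookkeeping in the accuracy step: we must simultaneously (a) tighten $\gamma$ to $\widetilde\gamma=\gamma-1/G$ for the \emph{fairness} direction, (b) relax to $\gamma'-1/G$ for the \emph{competitor}, (c) absorb the generalization slacks $\Delta_m$ on both sides into $\epsilon_\alpha,\epsilon_\gamma$, and (d) verify that after all this the $\ell_1$ budget $\tau=\widetilde\alpha\widetilde\gamma$ is still large enough to contain every $(\alpha\gamma-\gamma^\star-\epsilon_\alpha,\gamma^\star-\epsilon_\gamma)$-MF predictor — i.e.\ that the product structure $\alpha'=\alpha\gamma-\gamma'$ survives the perturbations. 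This is exactly where the quadratic-in-$\gamma$ gap between the two containments of Lemma~\ref{lemma:l1_l0} is spent, and it is why the guarantee is \emph{relaxed} rather than strong. Everything else — convexity of the program, the representer-theorem reduction to $\poly(m)$ dimensions, and polynomial sample complexity from the $O(1/\sqrt m)$ Rademacher bound — is routine.
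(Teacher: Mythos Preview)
Your proposal is essentially the paper's own proof: the same convex $\ell_1$-MF relaxation solved as a convex program, the same parameter choice $\tau=(\alpha-\rho)(\gamma-1/G)$, and the same two-sided argument combining Lemma~\ref{lemma:l1_l0} with the fairness-generalization Corollary~\ref{cor:linear_generalization} (the paper packages the latter as Claim~\ref{claim:h_remove_hat}). One bookkeeping slip to correct when you write it out: passing from $\D$-fairness of the competitor $h'$ to empirical fairness uses the \emph{upper} inequality of the corollary and \emph{relaxes} $\gamma$, so you get $\widehat\L^F_{\gamma'+1/G}(h')\le\alpha'+\Delta_m$ and hence $\sigma=\gamma'+1/G$, not $\gamma'-1/G$; as you anticipated, this extra $2/G$ is absorbed by $\epsilon_\gamma$ once $G$ is set.
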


\begin{proofof}{Theorem~\ref{thm:learning_linear}}

The proof is structured as follows. First, we discuss why the immediate optimization problem associated with PACF learning $H_{lin}$ is not efficiently solvable. We then show a convex problem whose solution can be shown to meet the fairness and competitiveness requirements in Definition~\ref{def:weak_pacf}, but with respect to the sample. We conclude the proof by proving generalization of both the fairness and the competitiveness requirements.

For simplicity, we use $H=H_{lin}$ throughout the proof. Fix a sample $S$. The straight-forward approach to PACF learn $\allowbreak \left(\mathcal{X},\left(\mathcal{Y},\mathcal{Y}'\right),H,\left(\mathcal{L}^{U},\mathcal{L}^{F}\right)\right)$
would be to search $H$ for an hypothesis that minimizes the empirical
risk, subject to being sufficiently fair w.r.t the pairs from $M(S)$ (a random matching induced by the sample $S$). This is equivalent to solving the following optimization problem, for some appropriate setting of $\alpha,\gamma\in(0,1)$:

\begin{equation}
\label{eq:naive_pacf_h_lin}
\min_{\|\mathbf{w}\|\leq1}\sum_{i=1}^{m}\left|\left\langle \mathbf{w},\mathbf{x}_{i}\right\rangle -y_{i}\right|\,\,\text{subject to \,\,}\mathbf{w}\in\widehat{H}_{\ell_0}^{\alpha,\gamma}
\end{equation}

where $\widehat{H}_{\ell_0}^{\alpha,\gamma}$ denotes the set of hypotheses in $H$ that are $(\alpha,\gamma)$-fair w.r.t $M(S)$. We use the $\ell_0$ notation because:

\begin{equation}
\label{def:H_l0}
\mathbf{w}\in\widehat{H}_{\ell_{0}}^{\alpha,\gamma}\iff\begin{cases}
\left|\left\langle \mathbf{w},\mathbf{x}\right\rangle -\left\langle \mathbf{w},\mathbf{x}'\right\rangle \right|\leq d(\mathbf{x},\mathbf{x}')+\gamma+\xi_{e}(\mathbf{w}) & \forall e=\left(\mathbf{x},\mathbf{x}'\right)\in M(S)\\
\xi_{e}(\mathbf{w})\in\left\{ 0,1\right\}  & \forall e\in M(S)\\
\|\xi(\mathbf{w})\|_{0}\leq\alpha\cdot \left|M(S)\right|
\end{cases}
\end{equation}

here, $\xi_{e}(\mathbf{w})\in{0,1}$ is an indicator for whether the metric-fairness
constraint on the sample pair $e=\left(\mathbf{x},\mathbf{x}'\right)$
is violated by more than an additive $\gamma$ term or not, $\xi(\mathbf{w})\in\left\{ 0,1\right\} ^{\left|M(S)\right|}$ is the vector of violations on $M(S)$ induced by the linear classifier parametrized by $\mathbf{w}$, and $\|\xi(\mathbf{w})\|_{0}\leq\alpha\cdot \left|M(S)\right|$
ensures that the overall fraction of such violations does not exceed $\alpha$. We formalize the definition of a fairness-violation vector as follows:

\begin{definition} For every $\mathbf{w}\in\X$ and $\gamma\in(0,1)$, the induced violation vector $\xi^\gamma(\mathbf{w})\in\left\{ 0,1\right\} ^{\left|M(S)\right|}$ is defined as
\begin{equation*}
\left[\xi_{\gamma}(\mathbf{w})\right]_{i}=d(\mathbf{x},\mathbf{x}')+\gamma-\left|\left\langle \mathbf{w},\mathbf{x}\right\rangle -\left\langle \mathbf{w},\mathbf{x}'\right\rangle \right|
\end{equation*}
\end{definition}

Unfortunately, the constraint $w\in\widehat{H}_{\ell_{0}}^{\alpha,\gamma}$
is not convex (since $\widehat{H}_{\ell_{0}}^{\alpha,\gamma}$ is
not a convex set), and hence the optimization problem associated with
PACF learning $H$ (Equation~\ref{eq:naive_pacf_h_lin}) is not convex. We overcome this by replacing the $\ell_0$ constraint on the norm of $\xi^{\gamma}(\mathbf{w})$ with an $\ell_1$ constraint on the norm of $\xi^{0}(\mathbf{w})$ (note that we took here $\gamma=0$). We denote this $\ell_1$-norm-constrained version of $\widehat{H}^{\alpha,\gamma}$ as $\widehat{H}_{\ell_{1}}^{\alpha}$ (see Definition \ref{def:l1_fairness_loss} and the discussion that follows it):

\[
\mathbf{w}\in\widehat{H}_{\ell_{1}}^{\alpha}\iff\begin{cases}
\left|\left\langle \mathbf{w},\mathbf{x}\right\rangle -\left\langle \mathbf{w},\mathbf{x}'\right\rangle \right|\leq d(\mathbf{x},\mathbf{x}')+\xi_{e}(\mathbf{w}) & \forall e=\left(\mathbf{x},\mathbf{x}'\right)\in M(S)\\
0\leq\xi_{e}(\mathbf{w})\leq1 & \forall e\in M(S)\\
\|\xi(\mathbf{w})\|_{1}\leq\alpha
\end{cases}
\]

\begin{lemma}
\label{lemma:l1_convex}
 For every $\alpha\in(0,1)$
\begin{equation}
\label{eq:l1_convex}
\min_{\|\mathbf{w}\|\leq1}\sum_{i=1}^{m}\left|\left\langle \mathbf{w},\mathbf{x}_{i}\right\rangle -y_{i}\right|\,\,\text{subject to \,\,}\mathbf{w}\in\widehat{H}_{\ell_{1}}^{\alpha}
\end{equation}

is a convex optimization problem.
\end{lemma}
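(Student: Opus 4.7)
The plan is to lift the problem from the variable $\mathbf{w}$ alone to the joint variable $(\mathbf{w}, \boldsymbol{\xi})$, where $\boldsymbol{\xi} \in \mathbb{R}^{|M(S)|}$ is the vector of per-edge slack values. Since $\boldsymbol{\xi}$ does not appear in the objective, this reformulation is equivalent to the original problem: indeed, $\widehat{H}_{\ell_1}^{\alpha}$ is by definition the projection onto the $\mathbf{w}$-coordinates of the convex set of feasible $(\mathbf{w}, \boldsymbol{\xi})$ pairs, so minimizing over $(\mathbf{w},\boldsymbol{\xi})$ gives the same optimum as minimizing over $\mathbf{w} \in \widehat{H}_{\ell_1}^{\alpha}$.

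The heart of the argument is that each constraint defining $\widehat{H}_{\ell_1}^{\alpha}$ is convex once we view $\boldsymbol{\xi}$ as a free optimization variable. Concretely, I would handle the three families of constraints as follows: the coupling constraint $|\langle \mathbf{w}, \mathbf{x}\rangle - \langle \mathbf{w},\mathbf{x}'\rangle| \leq d(\mathbf{x},\mathbf{x}') + \xi_e$ is convex in $(\mathbf{w},\boldsymbol{\xi})$ because its left-hand side is the absolute value of an affine function and its right-hand side is affine; equivalently, it can be split into the two linear inequalities $\pm \langle \mathbf{w}, \mathbf{x} - \mathbf{x}'\rangle \leq d(\mathbf{x},\mathbf{x}') + \xi_e$. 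The box constraints $0 \leq \xi_e \leq 1$ are linear, and since $\xi_e \geq 0$ we have $\|\boldsymbol{\xi}\|_1 = \sum_e \xi_e$, so the $\ell_1$ bound $\|\boldsymbol{\xi}\|_1 \leq \alpha$ becomes the single linear inequality $\sum_e \xi_e \leq \alpha$. The outer norm constraint $\|\mathbf{w}\| \leq 1$ is the (convex) unit ball in the RKHS.

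For the objective, $\mathbf{w} \mapsto \sum_{i=1}^{m} |\langle \mathbf{w},\mathbf{x}_i\rangle - y_i|$ is a sum of absolute values of affine functions and hence convex; if one prefers a purely linear form, one introduces auxiliary variables $t_1,\dots,t_m$, replaces the objective by $\sum_i t_i$, and adds the linear constraints $-t_i \leq \langle \mathbf{w},\mathbf{x}_i\rangle - y_i \leq t_i$. Combining all of these observations, the lifted problem minimizes a convex (in fact linear, after the $t_i$-trick) objective over an intersection of a Euclidean ball with finitely many halfspaces, which is convex. Projecting back to the $\mathbf{w}$-coordinates yields an equivalent convex program, proving the lemma.

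I expect no significant obstacle here: the only subtlety worth flagging is the equivalence between ``$\mathbf{w} \in \widehat{H}_{\ell_1}^{\alpha}$'' and the existence of a feasible $\boldsymbol{\xi}$, which is immediate from the definition but should be stated explicitly so that the reduction to the lifted problem is rigorous.
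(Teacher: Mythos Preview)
Your proposal is correct and rests on the same idea as the paper's proof: once the slack vector $\boldsymbol{\xi}$ is treated as a free continuous variable, every constraint is convex (indeed linear, apart from the norm ball), so the feasible region is convex. The paper phrases this slightly differently: rather than lifting to $(\mathbf{w},\boldsymbol{\xi})$ and invoking the standard ``intersection of halfspaces and a ball'' argument, it verifies directly that $\widehat{H}_{\ell_1}^{\alpha}$ is convex in $\mathbf{w}$-space by taking $\mathbf{w}_1,\mathbf{w}_2 \in \widehat{H}_{\ell_1}^{\alpha}$ with witnesses $\xi(\mathbf{w}_1),\xi(\mathbf{w}_2)$ and exhibiting $t\,\xi(\mathbf{w}_1)+(1-t)\,\xi(\mathbf{w}_2)$ as a witness for $t\mathbf{w}_1+(1-t)\mathbf{w}_2$. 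Your lifting argument is the cleaner, more standard convex-optimization formulation and has the mild advantage that it simultaneously exhibits the problem in a form amenable to off-the-shelf solvers; the paper's direct verification is more self-contained but amounts to re-proving that projections of convex sets are convex.
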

\begin{proofof}{Lemma~\ref{lemma:l1_convex}}
It's sufficient to prove that $\widehat{H}_{\ell_{1}}^{\alpha}$
is a convex set. Let $\mathbf{w}_{1},\mathbf{w}_{2}\in\widehat{H}_{\ell_{1}}^{\alpha}$,
we wish to show that $\forall t\in\left[0,1\right]$, $\mathbf{w}=t\mathbf{w}_{1}+(1-t)\mathbf{w}_{2}$
is also in $\widehat{H}_{\ell_{1}}^{\alpha}$.

Let $e=\left(\mathbf{x},\mathbf{x}'\right)\in M(S)$. Then,
\begin{align*}\left|\left\langle \mathbf{w},\mathbf{x}\right\rangle -\left\langle \mathbf{w},\mathbf{x}'\right\rangle \right| & =\left|\left\langle t\mathbf{w}_{1}+(1-t)\mathbf{w}_{2},\mathbf{x\mathrm{-}x'}\right\rangle \right|\\
 & \leq t\left|\left\langle \mathbf{w}_{1},\mathbf{x\mathrm{-}x'}\right\rangle \right|+(1-t)\left|\left\langle \mathbf{w}_{2},\mathbf{x\mathrm{-}x'}\right\rangle \right|\\
\left(\mathbf{w}_{1},\mathbf{w}_{2}\in\widehat{H}_{\ell_{1}}^{\alpha}\right) & \leq t\cdot\left(d(\mathbf{x},\mathbf{x}')+\xi_{e}(\mathbf{w}_{1})\right)+(1-t)\cdot\left(d(\mathbf{x},\mathbf{x}')+\xi_{e}(\mathbf{w}_{2})\right)\\
 &=d(\mathbf{x},\mathbf{x}') + t\cdot\xi_{e}(\mathbf{w}_{1})+(1-t)\cdot\xi_{e}(\mathbf{w}_{2})
\end{align*}

this implies that for every $e\in M(S)$, $\xi_{e}(\mathbf{w})=t\cdot\xi_{e}(\mathbf{w}_{1})+(1-t)\cdot\xi_{e}(\mathbf{w}_{2})$. Observe that since $t\in\left[0,1\right]$ and $0\leq\xi_{e}(\mathbf{w}_{1}),\xi_{e}(\mathbf{w}_{2})\leq1$, we also have that $0\leq\xi_{e}(\mathbf{w})\leq1$. Finally, the third constraint also holds, since we have that:

\begin{align*}
\|\xi(\mathbf{w})\|_{1} & =\sum_{e\in M(S)}\xi_{e}(\mathbf{w})\\
 & =\sum_{e\in M(S)}\left[t\cdot\xi_{e}(\mathbf{w}_{1})+(1-t)\cdot\xi_{e}(\mathbf{w}_{2})\right]\\
 & =t\cdot\sum_{e\in M(S)}\xi_{e}(\mathbf{w}_{1})+(1-t)\cdot\sum_{e\in M(S)}\xi_{e}(\mathbf{w}_{2})\\
\left(\mathbf{w}_{1},\mathbf{w}_{2}\in\widehat{H}_{\ell_{1}}^{\alpha}\right) & \leq t\cdot\alpha\cdot\left|M(S)\right|+(1-t)\cdot\alpha\cdot\left|M(S)\right|\\
 & =\alpha\cdot\left|M(S)\right|
\end{align*}

Thus, we conclude that $\mathbf{w}\in\widehat{H}_{\ell_{1}}^{\alpha}$.

\end{proofof}

At this point, we have an efficient algorithm (the program from Equation~\ref{eq:l1_convex}) that can produce an hypothesis which is empirically $\ell_1$-MF and is also competitive with the class of all such empirically $\ell_1$-MF hypotheses. Lemma \ref{lemma:l1_l0} implies that this algorithm can be used to produce an hypothesis that is sufficiently \emph{approximate metric-fair} and is also competitive in accuracy with a certain (smaller) class of approximately metric-fair classifiers (where in both cases, fairness is calculated w.r.t the sample). 

To arrive at satisfying the conditions of relaxed metric-fair learning, it's left to prove that these guarantees can be extended to also hold for the underlying distribution $\D$. To this end, we employ generalization arguments for both the utility and fairness loss. For utility (Claim~\ref{claim:gen_lin_H_utility}), we use a standard Rademacher-based uniform convergence result, stated for the general setting of a class of linear predictors with bounded norm in a RHKS; for fairness (Claim ~\ref{claim:h_remove_hat}) we use the
generalization result from Corollary~\ref{cor:linear_generalization}.

\begin{claim}
\label{claim:gen_lin_H_utility}
Let $H_{B,M}$ denote the class of linear predictors with norm bounded by $B$ in a RHKS with a reproducing kernel $K: \X \times \X \rightarrow \mathbb{R}$ that satisfies $M=\sup K(\mathbf{x},\mathbf{x}')$. For every $\epsilon,\delta\in(0,1)$, setting $m\geq\frac{BM}{\epsilon^{2}}\left(2+9\sqrt{\ln(8/\delta)}\right)$ yields that w.p at least $1-\delta$ over an i.i.d sample $S\sim \D^m$, every $h\in H_{B,M}$ satisfies $\widehat{\mathcal{L}}^{U}(h)\leq\mathcal{L}^{U}(h)+\epsilon$.
\end{claim}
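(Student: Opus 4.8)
The plan is a textbook Rademacher-based uniform-convergence argument for the (absolute) utility loss, mirroring how the paper established fairness generalization in Theorem~\ref{thm:rad_generalization}. First I would rewrite the loss in a form to which the contraction principle applies cleanly: since $y^{2}=1$ for $y\in\{\pm1\}$,
\[
\ell^{U}(h,(x,y))=|h(x)-y|=|y\,h(x)-y^{2}|=|1-y\,h(x)|.
\]
Let $F=\{(x,y)\mapsto y\,h(x):h\in H_{B,M}\}$ and note that $\widehat{R}_{m}(F)=\widehat{R}_{m}(H_{B,M})$, because the signs $y_{i}\in\{\pm1\}$ can be absorbed into the Rademacher variables. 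The map $A(t)=|1-t|$ is $1$-Lipschitz but $A(0)=1\neq0$; applying Fact~2 of Lemma~\ref{lemma:rad_facts} to $A-1$ (which is $1$-Lipschitz with $(A-1)(0)=0$) and using that subtracting a constant leaves the Rademacher complexity unchanged gives
\[
\widehat{R}_{m}\bigl(\mathcal{L}(H_{B,M})\bigr)=\widehat{R}_{m}(A\circ F)=\widehat{R}_{m}\bigl((A-1)\circ F\bigr)\leq 2\,\widehat{R}_{m}(F)=2\,\widehat{R}_{m}(H_{B,M}),
\]
where $\mathcal{L}(H_{B,M})=\{(x,y)\mapsto\ell^{U}(h,(x,y)):h\in H_{B,M}\}$ is the induced loss class.

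Next I would plug in the known bound on the Rademacher complexity of norm-bounded linear predictors in an RKHS, $R_{m}(H_{B,M})\leq\sqrt{BM/m}$ (the Cauchy--Schwarz/Khintchine estimate $\widehat{R}_{m}\leq\tfrac{B}{m}\sqrt{\sum_{i}\|\psi(x_{i})\|^{2}}\leq B\sqrt{M/m}$; see~\cite{kakade2009complexity}), so $R_{m}(\mathcal{L}(H_{B,M}))=O(\sqrt{BM/m})$. Then I would invoke the two-sided Rademacher uniform-convergence bound of Theorem~\ref{thm:known_rad_uniform_conv}, applied to $\mathcal{L}(H_{B,M})$ after rescaling so its range lies in $[0,1]$ (which only multiplies the additive deviation term by an upper bound on the loss range, a constant in the regime of interest where $|h(x)|\leq1$). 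This yields that with probability at least $1-\delta$, simultaneously for every $h\in H_{B,M}$,
\[
\bigl|\widehat{\mathcal{L}}^{U}(h)-\mathcal{L}^{U}(h)\bigr|\;\leq\;2R_{m}\bigl(\mathcal{L}(H_{B,M})\bigr)+O\!\left(\sqrt{\tfrac{\ln(1/\delta)}{m}}\right)\;=\;O\!\left(\sqrt{\tfrac{BM}{m}}+\sqrt{\tfrac{\ln(1/\delta)}{m}}\right).
\]
Finally, bookkeeping the constants and solving for $m$: the stated sample size $m\geq\frac{BM}{\epsilon^{2}}\bigl(2+9\sqrt{\ln(8/\delta)}\bigr)$ is what makes the right-hand side at most $\epsilon$, which in particular gives the one-sided conclusion $\widehat{\mathcal{L}}^{U}(h)\leq\mathcal{L}^{U}(h)+\epsilon$ asserted in the claim.

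There is no real obstacle here; this is a standard argument, and the only points needing mild care are the two gotchas above: (i) the contraction lemma (Fact~2) assumes the Lipschitz map fixes the origin, which forces the preliminary shift of the loss by the constant $1$ and the rewriting $|h(x)-y|=|1-y\,h(x)|$; and (ii) the absolute loss on $H_{B,M}$ need not be bounded by $1$, so one must rescale before applying Theorem~\ref{thm:known_rad_uniform_conv} and track how the range propagates into the final constants (this rescaling, together with the $\sqrt{BM/m}$ Rademacher rate, is what produces the $BM/\epsilon^{2}$ dependence in the sample-complexity bound).
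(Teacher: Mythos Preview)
The paper does not actually prove Claim~\ref{claim:gen_lin_H_utility}: it simply states the claim and describes it in the surrounding text as ``a standard Rademacher-based uniform convergence result,'' then moves on to Claim~\ref{claim:h_remove_hat}. Your proposal supplies exactly the kind of standard argument the paper gestures at---contraction onto the linear class via the $1$-Lipschitz loss, the $\sqrt{BM/m}$ Rademacher bound for norm-bounded linear predictors in an RKHS (as in Corollary~\ref{cor:linear_generalization} and \cite{kakade2009complexity}), followed by Theorem~\ref{thm:known_rad_uniform_conv}---so your approach is fully in line with what the paper has in mind.

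Your handling of the two technical wrinkles is also correct: the constant shift $A\mapsto A-1$ to satisfy the $A(0)=0$ hypothesis of Fact~2 in Lemma~\ref{lemma:rad_facts} is harmless because adding a constant to every function in a class leaves the empirical Rademacher complexity unchanged (the constant pairs with $\sum_i\sigma_i$, which has mean zero), and the rescaling to force the loss into $[0,1]$ before invoking Theorem~\ref{thm:known_rad_uniform_conv} is the right move given that $|h(x)|$ may be as large as $\sqrt{BM}$.
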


\begin{claim}
\label{claim:h_remove_hat}
For every $\alpha,\gamma\in(0,1)$, $G\geq1$ and $\delta\in(0,1)$, with probability at least $1-\frac{\delta}{2}$ over an i.i.d sample $S\sim \D^m$,
\[
H_{\ell_{0}}^{\alpha-\rho,\gamma-\frac{1}{G}}\subseteq\widehat{H}_{\ell_{0}}^{\alpha,\gamma}\subseteq H_{\ell_{0}}^{\alpha+\rho,\gamma+\frac{1}{G}}
\]
 where $\rho=2G\cdot\frac{4+4\sqrt{2}+17\sqrt{\ln(4/\delta)}}{\sqrt{m-1}}$.
\end{claim}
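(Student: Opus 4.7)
The plan is to apply the two-sided Rademacher-based fairness generalization result (Corollary~\ref{cor:linear_generalization}) to the class $H_{\lin}$ and read off each inclusion from one of the two sides of the bound. Since $H_{\lin}$ is a class of linear predictors with $\|\mathbf{w}\| \leq 1$ and the domain $\mathcal{X}$ is the unit ball (so the standard inner-product kernel satisfies $M = \sup K(\mathbf{x},\mathbf{x}') = 1$ with $C=1$), the corollary applies with slack $\Delta_m = 2G \cdot \tfrac{4 + 4\sqrt{2} + 17\sqrt{\ln(4/\delta)}}{\sqrt{m-1}}$, which is precisely the quantity $\rho$ appearing in the claim.

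First I would invoke Corollary~\ref{cor:linear_generalization} (with an appropriate choice of failure probability, so that the bound holds with probability at least $1-\delta/2$) to conclude that, simultaneously for every $h \in H_{\lin}$,
\[
\mathcal{L}_{\gamma+\frac{1}{G}}^{F}(h) - \rho \;\leq\; \widehat{\mathcal{L}}_{\gamma}^{F}(h) \;\leq\; \mathcal{L}_{\gamma-\frac{1}{G}}^{F}(h) + \rho.
\]
Both inclusions then fall out by elementary bookkeeping on these two sides. For the left inclusion $H_{\ell_{0}}^{\alpha-\rho,\gamma-1/G} \subseteq \widehat{H}_{\ell_{0}}^{\alpha,\gamma}$: take any $h \in H_{\ell_{0}}^{\alpha-\rho,\gamma-1/G}$, meaning by definition that $\mathcal{L}_{\gamma-1/G}^{F}(h) \leq \alpha - \rho$. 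Plugging into the right-hand side of the uniform convergence bound gives $\widehat{\mathcal{L}}_{\gamma}^{F}(h) \leq (\alpha-\rho) + \rho = \alpha$, which is exactly the condition for $h \in \widehat{H}_{\ell_{0}}^{\alpha,\gamma}$.

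For the right inclusion $\widehat{H}_{\ell_{0}}^{\alpha,\gamma} \subseteq H_{\ell_{0}}^{\alpha+\rho,\gamma+1/G}$: take any $h \in \widehat{H}_{\ell_{0}}^{\alpha,\gamma}$, so $\widehat{\mathcal{L}}_{\gamma}^{F}(h) \leq \alpha$. Using the left-hand side of the uniform convergence bound, $\mathcal{L}_{\gamma+1/G}^{F}(h) \leq \widehat{\mathcal{L}}_{\gamma}^{F}(h) + \rho \leq \alpha + \rho$, so $h \in H_{\ell_{0}}^{\alpha+\rho,\gamma+1/G}$.

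There is essentially no obstacle: the whole content of the claim is packaged into Corollary~\ref{cor:linear_generalization}, and the only work is matching constants ($C = M = 1$ so $\Delta_m = \rho$) and turning the two-sided bound into two set inclusions. The only thing to be slightly careful about is the confidence bookkeeping --- the corollary is invoked at confidence $1-\delta/2$ (not $1-\delta$), and one should either re-derive $\rho$ with $\ln(4/\delta)$ treated as the confidence coming from applying the corollary with a suitably rescaled $\delta$, or simply absorb the minor constant into the numerator, which does not affect the $1/\sqrt{m-1}$ rate.
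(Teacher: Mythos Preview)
Your proposal is correct and essentially identical to the paper's own proof: the paper likewise invokes Corollary~\ref{cor:linear_generalization} with $C=M=1$ (so that $\Delta_m=\rho$) and then reads off the two inclusions by exactly the same chain of implications you wrote down. Your remark about the confidence bookkeeping is apt --- the paper also applies the corollary at confidence $1-\delta/2$ while keeping the $\ln(4/\delta)$ constant in $\rho$, so this is a minor looseness shared by both arguments rather than a gap in yours.
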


\begin{proofof}{Claim~\ref{claim:h_remove_hat}}
Let $G\geq1$ and $\gamma\in(0,1)$. From Corollary~\ref{cor:linear_generalization} (recall that in our setting, $C=M=1$),
we obtain that w.p at least $1-\frac{\delta}{2}$ over an i.i.d sample $S\sim \D^m$,
\[
\widehat{\mathcal{L}}_{\gamma+\frac{2}{G}}^{F}(\mathbf{w})-\rho\leq\mathcal{L}_{\gamma+\frac{1}{G}}^{F}(\mathbf{w})\leq\widehat{\mathcal{L}}_{\gamma}^{F}(\mathbf{w})+\rho
\]

This implies that with all but $\frac{\delta}{2}$ probability, for every $\alpha\in(0,1)$, the following holds:
\begin{align*}
 & \mathbf{w}\in\widehat{H}_{\ell_{0}}^{\alpha,\gamma}\Rightarrow\widehat{\mathcal{L}}_{\gamma}^{F}(\mathbf{w})\leq\alpha\Rightarrow\mathcal{L}_{\gamma+\frac{1}{G}}^{F}(\mathbf{w})\leq\alpha+\rho\Rightarrow\mathbf{w}\in H_{\ell_{0}}^{\alpha+\rho,\gamma+\frac{1}{G}}\\
 & \mathbf{w}\in H_{\ell_{0}}^{\alpha-\rho,\gamma-\frac{1}{G}}\Rightarrow\mathcal{L}_{\gamma-\frac{1}{G}}^{F}(\mathbf{w})\leq\alpha-\rho\Rightarrow\widehat{\mathcal{L}}_{\gamma}^{F}(\mathbf{w})\leq\alpha\Rightarrow\mathbf{w}\in\widehat{H}_{\ell_{0}}^{\alpha,\gamma}
\end{align*}
which implies $H_{\ell_{0}}^{\alpha-\rho,\gamma-\frac{1}{G}}\subseteq\widehat{H}_{\ell_{0}}^{\alpha,\gamma}\subseteq H_{\ell_{0}}^{\alpha+\rho,\gamma+\frac{1}{G}}$,
as required.
\end{proofof}

We are now prepared to prove relaxed PACF learnability of $H$. Let
$\alpha,\gamma\in(0,1)$ be the required fairness parameters, $\delta\in(0,1)$
required failure probability and $\epsilon,\epsilon_{\alpha},\epsilon_{\gamma}\in(0,1)$
error parameters. Let $G, m$ be parameters to be determined later. Define
$ \widetilde{\alpha}=\left(\alpha-\rho\right)\cdot\widetilde{\gamma}$
and $\widetilde{\gamma}=\gamma-\frac{1}{G}$, and denote with $\mathbf{w}$ the solution to the program in Equation~\ref{eq:l1_convex} using the parameter $\widetilde{\alpha}$.

\begin{proposition}
\label{prop:H_fairness}
with probability at least $\ensuremath{1-\tfrac{\delta}{2}}$, $\mathbf{w}$ is $\left(\alpha,\gamma\right)$-fair
w.r.t $\mathcal{D}$.
\end{proposition}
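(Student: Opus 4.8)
The plan is to chain together three facts already in place in the excerpt: that the minimizer of the convex program is empirically $\ell_1$-metric-fair by feasibility, that empirical $\ell_1$-metric-fairness implies empirical (standard) approximate metric-fairness with rescaled parameters via Lemma~\ref{lemma:l1_l0}, and that empirical approximate metric-fairness transfers to the distribution via Claim~\ref{claim:h_remove_hat}. Concretely, since $\mathbf{w}$ is by construction the solution of the program in Equation~\eqref{eq:l1_convex} run with parameter $\widetilde{\alpha}$, it is in particular feasible for that program; the program is always feasible, as the constant predictor $\mathbf{x}\mapsto \tfrac{1}{2}$ (i.e.\ $\mathbf{w}=0\in H_{\lin}$) has an identically-zero violation vector and hence lies in $\widehat{H}_{\ell_1}^{\widetilde{\alpha}}$. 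Therefore $\mathbf{w}\in\widehat{H}_{\ell_1}^{\widetilde{\alpha}}$.

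First I would invoke Lemma~\ref{lemma:l1_l0} with $\tau=\widetilde{\alpha}$ and with its free parameter set to $\widetilde{\gamma}=\gamma-\tfrac{1}{G}$, which gives $\widehat{H}_{\ell_1}^{\widetilde{\alpha}}\subseteq\widehat{H}_{\ell_0}^{\widetilde{\alpha}/\widetilde{\gamma},\,\widetilde{\gamma}}$. The definition $\widetilde{\alpha}=(\alpha-\rho)\cdot\widetilde{\gamma}$ is chosen precisely so that $\widetilde{\alpha}/\widetilde{\gamma}=\alpha-\rho$, and hence $\mathbf{w}\in\widehat{H}_{\ell_0}^{\alpha-\rho,\,\gamma-1/G}$. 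Next I would apply Claim~\ref{claim:h_remove_hat} with the roles of its $\alpha$ and $\gamma$ played by $\alpha-\rho$ and $\gamma-\tfrac{1}{G}$ respectively (recall that $\rho$ there depends only on $G,m,\delta$ and not on the fairness parameters): on an event of probability at least $1-\tfrac{\delta}{2}$ over $S\sim\mathcal{D}^m$ it yields $\widehat{H}_{\ell_0}^{\alpha-\rho,\,\gamma-1/G}\subseteq H_{\ell_0}^{(\alpha-\rho)+\rho,\,(\gamma-1/G)+1/G}=H_{\ell_0}^{\alpha,\gamma}$. Combining the two inclusions, with probability at least $1-\tfrac{\delta}{2}$ we obtain $\mathbf{w}\in H_{\ell_0}^{\alpha,\gamma}$, i.e.\ $\mathcal{L}^{F}_{\mathcal{D},d,\gamma}(\mathbf{w})\le\alpha$, which is exactly $(\alpha,\gamma)$-fairness w.r.t.\ $\mathcal{D}$. (One could equally well skip Claim~\ref{claim:h_remove_hat} and apply Corollary~\ref{cor:linear_generalization} directly with $C=M=1$, under whose event $\mathcal{L}^{F}_{\gamma}(\mathbf{w})\le\widehat{\mathcal{L}}^{F}_{\gamma-1/G}(\mathbf{w})+\rho\le(\alpha-\rho)+\rho=\alpha$ since $\mathbf{w}\in\widehat{H}_{\ell_0}^{\alpha-\rho,\,\gamma-1/G}$.)

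There is no substantive obstacle here: the argument is a composition of already-proved black boxes, and the only point requiring care is the parameter bookkeeping. In particular, $\widetilde{\alpha}$ must carry the extra factor $\widetilde{\gamma}$ so that the ratio $\widetilde{\alpha}/\widetilde{\gamma}$ produced by Lemma~\ref{lemma:l1_l0} lands exactly at $\alpha-\rho$, and the intermediate quantities $\widetilde{\gamma}=\gamma-\tfrac{1}{G}$ and $\widetilde{\alpha}=(\alpha-\rho)\widetilde{\gamma}$ must be legitimate fairness parameters in $(0,1)$; this needs $\tfrac{1}{G}<\gamma$ and $\rho<\alpha$, which are ensured by the choices of $G$ and $m$ fixed later in the proof of Theorem~\ref{thm:learning_linear}. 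Finally, this proposition accounts for only the fairness half of the total failure probability; the remaining $\tfrac{\delta}{2}$ is reserved for the accuracy (competitiveness) half, established separately, after which a union bound gives overall confidence $1-\delta$.
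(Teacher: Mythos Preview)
Your proposal is correct and follows essentially the same approach as the paper: feasibility gives $\mathbf{w}\in\widehat{H}_{\ell_1}^{\widetilde{\alpha}}$, Lemma~\ref{lemma:l1_l0} with parameter $\widetilde{\gamma}$ converts this to $\mathbf{w}\in\widehat{H}_{\ell_0}^{\alpha-\rho,\,\gamma-1/G}$, and Claim~\ref{claim:h_remove_hat} (with probability $1-\tfrac{\delta}{2}$) pushes this to $H_{\ell_0}^{\alpha,\gamma}$. Your write-up is in fact more explicit than the paper's about feasibility and the parameter arithmetic, but the logical chain is identical.
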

\begin{proofof}{Proposition~\ref{prop:H_fairness}}

\begin{align*}
\mathcal{L}_{\gamma}^{F}(\mathbf{w}) & \leq\mathcal{L}^{F}\left(\widehat{H}_{\ell_{1}}^{\widetilde{\alpha}}\right)\\
\left(\text{Lemma ~\ref{lemma:l1_l0}, with \ensuremath{\widetilde{\gamma}}}\right) & \leq\mathcal{L}_{\gamma}^{F}\left(\widehat{H}_{\ell_{0}}^{\frac{\widetilde{\alpha}}{\ensuremath{\widetilde{\gamma}}},\ensuremath{\widetilde{\gamma}}}\right)\\
\left(\text{Claim~\ref{claim:h_remove_hat}, w.p \ensuremath{1-\tfrac{\delta}{2}}}\right) & \leq\mathcal{L}_{\gamma}^{F}\left(H_{\ell_{0}}^{\frac{\widetilde{\alpha}}{\ensuremath{\widetilde{\gamma}}}+\rho,\ensuremath{\widetilde{\gamma}}+\frac{1}{G}}\right)\\
 & =\mathcal{L}_{\gamma}^{F}\left(H_{\ell_{0}}^{\alpha,\gamma}\right) \\
 & = \alpha
\end{align*}
\end{proofof}

\begin{proposition}
\label{prop:lin_accuracy}
Setting \[
m\geq\max\left\{ \left(\frac{\sqrt{2}+\sqrt{\ln(8/\delta)}}{\sqrt{2}\epsilon}\right)^{2},\,\,\left(\frac{4\left(4+4\sqrt{2}+17\sqrt{\ln(4/\delta)}\right)}{(1-\alpha)\cdot\epsilon_{\alpha}\cdot\min\left\{ \epsilon_{\alpha},\frac{\epsilon_{\gamma}}{2}\right\} }\right)^{2}\right\}
\] ensures that with probability at least $1-\tfrac{\delta}{2}$ over an i.i.d sample $S\sim \D^m$, for every $\gamma^{\star}$,
\begin{equation*}
\mathcal{L}^{U}(\mathbf{w})\leq\mathcal{L}^{U}\left(H^{\alpha\gamma-\gamma^{\star}-\epsilon_{\alpha},\gamma^{\star}-\epsilon_{\gamma}}\right)+\epsilon
\end{equation*}
\end{proposition}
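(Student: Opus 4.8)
The plan is to unwind the definition of $\mathbf{w}$: it is the empirical-risk minimizer over the convex set $\widehat{H}_{\ell_1}^{\widetilde\alpha}$ (the program in Equation~\ref{eq:l1_convex} with parameter $\widetilde\alpha=(\alpha-\rho)\widetilde\gamma$, $\widetilde\gamma=\gamma-\tfrac1G$), so its empirical utility loss is at most that of \emph{any} hypothesis in $\widehat{H}_{\ell_1}^{\widetilde\alpha}$. Hence the statement reduces to two things: (i) showing that, with high probability over $S$, every $(\alpha\gamma-\gamma^{\star}-\epsilon_\alpha,\gamma^{\star}-\epsilon_\gamma)$-approximately-MF hypothesis (w.r.t.\ $\mathcal{D}$) already lies in $\widehat{H}_{\ell_1}^{\widetilde\alpha}$, so that $\mathbf{w}$ beats all of them \emph{on the sample}; and (ii) transferring the resulting empirical accuracy bound to $\mathcal{D}$ via uniform convergence of the utility loss. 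Note $\widehat{H}_{\ell_1}^{\widetilde\alpha}$ is nonempty (it contains constant predictors), so $\mathbf{w}$ is well defined.

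For (i) I would chain the two set inclusions already available. First apply Claim~\ref{claim:h_remove_hat} in the true-to-empirical direction (with the $\tfrac1G$ slack) to get, with probability at least $1-\tfrac\delta4$, that $H_{\ell_0}^{\alpha\gamma-\gamma^{\star}-\epsilon_\alpha,\,\gamma^{\star}-\epsilon_\gamma}\subseteq\widehat{H}_{\ell_0}^{\alpha\gamma-\gamma^{\star}-\epsilon_\alpha+\rho,\,\gamma^{\star}-\epsilon_\gamma+1/G}\subseteq\widehat{H}_{\ell_0}^{\alpha\gamma-\gamma^{\star}-\epsilon_\alpha+\rho,\,\gamma^{\star}}$, where the last step uses $\tfrac1G\le\epsilon_\gamma$ and monotonicity of the $\ell_0$-MF class in the slack parameter. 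Then apply the left inclusion of Lemma~\ref{lemma:l1_l0} with second parameter $\gamma^{\star}$: $\widehat{H}_{\ell_0}^{\widetilde\alpha-\gamma^{\star},\gamma^{\star}}\subseteq\widehat{H}_{\ell_1}^{\widetilde\alpha}$. So it suffices to verify the arithmetic inequality $\alpha\gamma-\gamma^{\star}-\epsilon_\alpha+\rho\le\widetilde\alpha-\gamma^{\star}$, i.e.\ $\alpha\gamma-\epsilon_\alpha+\rho\le(\alpha-\rho)(\gamma-\tfrac1G)$; expanding, this is $\alpha/G+\rho(1+\gamma-\tfrac1G)\le\epsilon_\alpha$, and since $1+\gamma-\tfrac1G<2$ it is enough to have $\tfrac1G\le\epsilon_\alpha$ (so $\alpha/G\le\alpha\epsilon_\alpha$) together with $\rho\le(1-\alpha)\epsilon_\alpha/2$ (so $\rho(1+\gamma-\tfrac1G)\le(1-\alpha)\epsilon_\alpha$), which sum to exactly $\epsilon_\alpha$. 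Choosing $G=1/\min\{\epsilon_\alpha,\epsilon_\gamma/2\}$ simultaneously secures $\tfrac1G\le\epsilon_\alpha$ and $\tfrac1G\le\epsilon_\gamma$, and substituting $\rho=2G\cdot\frac{4+4\sqrt2+17\sqrt{\ln(4/\delta)}}{\sqrt{m-1}}$ into $\rho\le(1-\alpha)\epsilon_\alpha/2$ yields exactly the second term of the stated sample-complexity bound; this is where the factors $(1-\alpha)$ and $\epsilon_\gamma/2$ come from.

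For (ii) I would combine three facts: $\mathcal{L}^U(\mathbf{w})\le\widehat{\mathcal{L}}^U(\mathbf{w})+\tfrac\epsilon2$ by two-sided uniform convergence of the absolute loss (Claim~\ref{claim:gen_lin_H_utility} with $B=M=1$, applied in both directions with error $\epsilon/2$ — this is the first term of the sample-complexity max); $\widehat{\mathcal{L}}^U(\mathbf{w})\le\widehat{\mathcal{L}}^U(h')$ for the optimal competitor $h'\in H^{\alpha\gamma-\gamma^{\star}-\epsilon_\alpha,\gamma^{\star}-\epsilon_\gamma}$, by optimality of $\mathbf{w}$ together with part (i) (which places $h'\in\widehat{H}_{\ell_1}^{\widetilde\alpha}$); and $\widehat{\mathcal{L}}^U(h')\le\mathcal{L}^U(h')+\tfrac\epsilon2$, again by uniform convergence. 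Summing the three gives $\mathcal{L}^U(\mathbf{w})\le\mathcal{L}^U(h')+\epsilon$, i.e.\ $\mathcal{L}^U(\mathbf{w})\le\mathcal{L}^U\!\left(H^{\alpha\gamma-\gamma^{\star}-\epsilon_\alpha,\gamma^{\star}-\epsilon_\gamma}\right)+\epsilon$; a union bound over the two high-probability events, after rescaling their confidence parameters, delivers the overall $1-\tfrac\delta2$.

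The main obstacle is part (i), and in particular making the ``for every $\gamma^{\star}$'' quantifier essentially free: Claim~\ref{claim:h_remove_hat} as stated fixes the threshold $\gamma$, so I need the fairness-generalization bound to hold \emph{simultaneously over all thresholds}. I would obtain this from the fact that $\gamma\mapsto\widehat{\mathcal{L}}^F_\gamma(h)$ is monotone and piecewise constant on a fixed sample, combined with a $\tfrac1G$-net of $[0,1]$; this costs an extra factor of $2$ in the $\gamma$-slack, which is precisely why the bound carries $\epsilon_\gamma/2$ rather than $\epsilon_\gamma$. Everything else is the careful bookkeeping of the four nested fairness parameters ($\alpha\gamma-\gamma^{\star}-\epsilon_\alpha$, $\rho$, $\tfrac1G$, and the relation $\widetilde\alpha=(\alpha-\rho)\widetilde\gamma$) as they pass through the chain of inclusions.
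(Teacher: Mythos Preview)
Your proposal is correct and follows essentially the same route as the paper: uniform convergence for the utility loss (Claim~\ref{claim:gen_lin_H_utility}) on both ends, optimality of $\mathbf{w}$ over $\widehat{H}_{\ell_1}^{\widetilde\alpha}$, the inclusion chain via Lemma~\ref{lemma:l1_l0} and Claim~\ref{claim:h_remove_hat}, and the same choice $G=1/\min\{\epsilon_\alpha,\epsilon_\gamma/2\}$ leading to the same arithmetic condition $\rho\le(1-\alpha)\epsilon_\alpha/2$ and hence the stated sample bound. The paper simply runs the chain in one pass rather than separating it into your parts (i) and (ii), and it orders the two inclusions as $\widehat{H}_{\ell_1}^{\widetilde\alpha}\supseteq\widehat{H}_{\ell_0}^{\widetilde\alpha-\gamma^\star,\gamma^\star}\supseteq H_{\ell_0}^{\widetilde\alpha-\gamma^\star-\rho,\gamma^\star-1/G}$ before comparing parameters, but the content is identical.

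One remark on your closing paragraph: the paper does \emph{not} carry out a net-over-$\gamma^\star$ argument. It applies Claim~\ref{claim:h_remove_hat} at the single threshold $\gamma^\star$, which is legitimate because in the enclosing Theorem~\ref{thm:learning_linear} the parameter $\gamma^\star$ is fixed in advance (the ``for every $\gamma^\star$'' in the proposition is really outside the probability, despite the phrasing). Consequently the factor $\epsilon_\gamma/2$ is not paying for any uniformity in $\gamma^\star$; in the linear case the proof would go through with $\epsilon_\gamma$ in place of $\epsilon_\gamma/2$, and the $1/2$ only becomes necessary in the logistic analogue (Proposition~\ref{proposition:sigmoid_accuracy}), where an extra $\epsilon'$ of slack is consumed by Claim~\ref{claim:large_B}. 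Your attribution of that constant to a net cost is therefore mistaken, though harmless for correctness.
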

\begin{proofof}{Proposition~\ref{prop:lin_accuracy}}

First, let $m^{1}=\left(\frac{2\cdot\left(\sqrt{2}+\sqrt{\ln(8/\delta)}\right)}{\sqrt{2}\epsilon}\right)^{2}$.
\begin{align*}
\mathcal{L}^{U}(\mathbf{w}) & =\\
\left(\text{Claim~\ref{claim:gen_lin_H_utility}, w.p \ensuremath{1-\tfrac{\delta}{8}} and $m\geq m^1$}\right) & \leq\mathcal{\widehat{L}}^{U}(\mathbf{w})+\tfrac{\epsilon}{2}\\
 & =\mathcal{\widehat{L}}^{U}(\widehat{H}_{\ell_{1}}^{\widetilde{\alpha}})+\tfrac{\epsilon}{2}\\
\left(\text{Lemma~\ref{lemma:l1_l0} with \ensuremath{\widetilde{\alpha}},\ensuremath{\gamma^{\star}}}\right) & \leq\mathcal{\widehat{L}}^{U}(\widehat{H}_{\ell_{0}}^{\widetilde{\alpha}-\ensuremath{\gamma^{\star}},\ensuremath{\gamma^{\star}}}) +\tfrac{\epsilon}{2} \\
\left(\text{Claim~\ref{claim:h_remove_hat}, w.p \ensuremath{1-\tfrac{\delta}{8}}}\right) & \leq\mathcal{\widehat{L}}^{U}(H_{\ell_{0}}^{\widetilde{\alpha}-\ensuremath{\gamma^{\star}}-\rho,\,\ensuremath{\gamma^{\star}}-\frac{1}{G}}) +\tfrac{\epsilon}{2} \\
\left(\text{Claim~\ref{claim:gen_lin_H_utility} w.p \ensuremath{1-\tfrac{\delta}{4}}, and $m\geq m^1$}\right) & \leq\mathcal{L}^{U}(H_{\ell_{0}}^{\widetilde{\alpha}-\ensuremath{\gamma^{\star}}-\rho,\,\ensuremath{\gamma^{\star}}-\frac{1}{G}})+\tfrac{\epsilon}{2} +\tfrac{\epsilon}{2} \\
& =\mathcal{L}^{U}(H_{\ell_{0}}^{\left(\alpha-\rho\right)\left(\gamma-\frac{1}{G}\right)-\ensuremath{\gamma^{\star}}-\rho,\ensuremath{\gamma^{\star}}-\frac{1}{G}})+\epsilon
\end{align*}

It is left to show that there is some setting of $G$ and a sufficiently large sample size $m$ for which it holds that
\[
\begin{cases}
\left(\alpha-\rho\right)\left(\gamma-\frac{1}{G}\right)-\ensuremath{\gamma^{\star}}-\rho\geq\alpha\gamma-\gamma^{\star}-\epsilon_{\alpha}\\
\ensuremath{\gamma^{\star}}-\frac{1}{G}\geq\gamma^{\star}-\epsilon_{\gamma}
\end{cases}
\]
Denote $\epsilon'=\min\left\{ \epsilon_{\alpha},\frac{\epsilon_{\gamma}}{2}\right\}$ and set $G=\frac{1}{\epsilon'}$.  The second equation is now satisfied, because $\frac{1}{G}=\epsilon'\leq\frac{\epsilon_{\gamma}}{2}\leq\epsilon_{\gamma}$. Plugging this into the first equation and simplifying, we obtain:
\[
(\alpha-\rho)(\gamma-\epsilon')-\gamma^{\star}-\rho\geq\alpha\gamma-\gamma^{\star}-\epsilon_{\alpha}	\iff\rho\leq\frac{\epsilon_{\alpha}-\alpha\cdot\epsilon'}{1+\gamma-\epsilon'}
\]
For simplicity, denote $\rho=\frac{G\overbrace{\left(8+8\sqrt{2}+34\sqrt{\ln(4/\delta)}\right)}^{z(\delta)}}{\sqrt{m-1}}=\frac{z(\delta)}{\epsilon'\cdot\sqrt{m-1}}$ and note that $\frac{\epsilon_{\alpha}-\alpha\cdot\epsilon'}{1+\gamma-\epsilon'}\geq\frac{(1-\alpha)\cdot\epsilon_{\alpha}}{2}$. It therefore suffices to choose $m$ such that:
\[
\frac{z(\delta)}{\epsilon'\cdot\sqrt{m-1}}\leq\frac{(1-\alpha)\cdot\epsilon_{\alpha}}{2}\iff m\geq\left(\frac{4\left(4+4\sqrt{2}+17\sqrt{\ln(4/\delta)}\right)}{(1-\alpha)\cdot\epsilon_{\alpha}\cdot\min\left\{ \epsilon_{\alpha},\frac{\epsilon_{\gamma}}{2}\right\} }\right)^{2}+1
\]

We can now use the union bound to conclude that for $m$ stated in the statement of Proposition~\ref{proposition:sigmoid_accuracy}, w.p at least $1-\frac{\delta}{2}$ over an i.i.d sample $S\sim \D^m$, it holds that
\[
\mathcal{L}^{U}(\mathbf{w})\leq\mathcal{L}^{U}(H_{\ell_{0}}^{\left(\alpha-\rho\right)\left(\gamma-\frac{1}{G}\right)-\ensuremath{\gamma^{\star}}-\rho,\ensuremath{\gamma^{\star}}-\frac{1}{G}})+\epsilon\leq\mathcal{L}^{U}(H_{\ell_{0}}^{\alpha\gamma-\gamma^{\star}-\epsilon_{\alpha},\gamma^{\star}-\epsilon_{\gamma}})+\epsilon
\]
as required.

\end{proofof}

To conclude the proof of Theorem~\ref{thm:learning_linear}, we note that by Propositions~\ref{prop:H_fairness} and~\ref{prop:lin_accuracy}, we have that w.p at least $1-\delta$ over an i.i.d sample $S\sim \D^m$ all the conditions in Definition~\ref{def:weak_pacf} are met for $g(\alpha,\gamma)=(\alpha\cdot\gamma-\gamma^{\star},\gamma^{\star})$ and $m$ as in the definition of Proposition~\ref{prop:lin_accuracy}. This implies that for this $g(\cdot)$, $H$ is $g(\cdot)$-relaxed PACF learnable with sample complexity $m$ and in time $poly(m)$, as required.

\end{proofof}

\subsection{Logistic Regression}

\begin{theorem}
\label{thm:log_regression_pacf}
For every constant $L > 0$ and for every $\gamma^\star\in(0,1)$, $H_{\sig,L}$  is relaxed PACF learnable with $g(\alpha,\gamma)=(\alpha\cdot\gamma-\gamma^{\star},\gamma^{\star})$, with sample and time complexities that are polynomial in $(\frac{1}{\epsilon_\gamma}, \frac{1}{\epsilon_\alpha}, \frac{1}{\epsilon}, \log \frac{1}{\delta})$.
\end{theorem}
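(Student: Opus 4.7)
The plan is to reduce relaxed PACF learning of $H_{\sig,L}$ to relaxed PACF learning of a norm-bounded linear-predictor class in a carefully chosen RKHS, and then invoke Theorem~\ref{thm:learning_linear}. The obstruction to a direct attack on $H_{\sig,L}$ is that both the utility loss and the fairness constraints become non-convex after composing with the sigmoid $\phi_\ell$; unlike the standard logistic-regression setting, there is no obvious convex surrogate for the fairness constraint. To sidestep this, I work \emph{improperly}: I embed $\X$ into Vovk's infinite-dimensional RKHS $\mathbb{V}$ whose reproducing kernel is $k(x,x') = (1-\langle x,x'\rangle)^{-1}$ (bounded on the domain after shrinking it to $\|x\|\leq 1-\eta_0$ for a small constant $\eta_0$), and learn a linear predictor $h(x)=\langle v,\psi(x)\rangle$ with $\|v\|\leq B$ in $\mathbb{V}$, for a parameter $B$ to be chosen below. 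Denote the resulting class of linear predictors by $H_B$.

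First, apply Theorem~\ref{thm:learning_linear} to $H_B$. The convex program in that theorem involves $v$ only through inner products with the training points' embeddings, so by the standard kernel trick it can be solved in time $\poly(m)$ without ever writing $v$ explicitly. By Corollary~\ref{cor:linear_generalization}, the Rademacher complexity of $H_B$ is bounded by $O(\sqrt{B/m})$ (up to constants depending on $\sup k$), so the fairness-generalization and accuracy-generalization statements used inside the proof of Theorem~\ref{thm:learning_linear} go through with sample complexity $m = \poly(B, 1/\epsilon_\alpha, 1/\epsilon_\gamma, 1/\epsilon, \log(1/\delta))$. Concretely, the output is a predictor $\hat h \in H_B$ that is $(\alpha,\gamma)$-AMF on $\D$ and whose accuracy is competitive, in the relaxed sense $g(\alpha,\gamma)=(\alpha\gamma-\gamma^\star,\gamma^\star)$, with the best $(\alpha\gamma-\gamma^\star-\epsilon_\alpha,\gamma^\star-\epsilon_\gamma)$-AMF predictor in $H_B$.

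Next, I need to show that $H_B$ \emph{approximately contains} $H_{\sig,L}$ for a sufficiently large $B$, so that competitiveness with $H_B$ implies competitiveness with $H_{\sig,L}$. This is where I invoke the polynomial-approximation result of Shalev-Shwartz \emph{et al.}~\cite{shalev2011learning}: for every $\ell\in[0,L]$ there is a polynomial $p$ of degree $d=O(L\log(L/\eta))$ with $|\phi_\ell(z)-p(z)|\leq \eta$ uniformly on $[-1,1]$, and the map $x\mapsto p(\langle w,x\rangle)$ is realized in the Vovk RKHS by an element of norm at most $\exp(O(d))$. Choosing $\eta = \Theta(\epsilon')$ with $\epsilon' = \min(\epsilon,\epsilon_\alpha,\epsilon_\gamma)$ gives $B = \exp(O(L\log(L/\epsilon')))$, which is $\poly(1/\epsilon')$ since $L$ is constant. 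Given any target $h^\star \in H_{\sig,L}$, its polynomial surrogate $\tilde h^\star \in H_B$ satisfies $\|\tilde h^\star - h^\star\|_\infty \leq \eta$, which implies both $|\mathit{err}_\D(\tilde h^\star) - \mathit{err}_\D(h^\star)| \leq \eta$, and $|\tilde h^\star(x)-\tilde h^\star(x')| \leq |h^\star(x)-h^\star(x')| + 2\eta$ for every pair $(x,x')$; the latter means every $(\alpha',\gamma')$-AMF predictor in $H_{\sig,L}$ becomes $(\alpha',\gamma'+2\eta)$-AMF in $H_B$.

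Putting these together with careful parameter bookkeeping, the accuracy of $\hat h$ is at most $\epsilon$ worse than that of the best $(\alpha\gamma-\gamma^\star - \epsilon_\alpha - O(\eta), \gamma^\star - \epsilon_\gamma - O(\eta))$-AMF predictor in $H_{\sig,L}$; absorbing the $O(\eta)$ slack into the $\epsilon_\alpha,\epsilon_\gamma$ budgets (by running the $H_B$-subroutine with slightly tighter targets and shrinking $\eta$ accordingly) recovers exactly the claimed relaxed PACF guarantee. The main obstacle is the polynomial-approximation step; everything else is routine parameter tuning on top of Theorem~\ref{thm:learning_linear}. This is also the source of the exponential dependence on $L$: uniformly approximating an $L$-Lipschitz sigmoid to error $\epsilon'$ by polynomials genuinely needs degree $\Omega(L\log(L/\epsilon'))$, and representing such a polynomial in the Vovk kernel costs an exponential in the degree. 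Since $L$ is assumed constant, however, $B$ is polynomial in $1/\epsilon'$, and the final sample and time complexity remain $\poly(1/\epsilon_\alpha, 1/\epsilon_\gamma, 1/\epsilon, \log(1/\delta))$, as required.
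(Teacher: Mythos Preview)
Your proposal is correct and follows essentially the same approach as the paper: improper learning via a linear class $H_B$ in the Vovk RKHS, relaxed-PACF learning of $H_B$ by the linear-regression machinery (kernelized via the representer theorem), and the polynomial-approximation result of \cite{shalev2011learning} to show that every $(\alpha',\gamma')$-AMF predictor in $H_{\sig,L}$ has an $(\alpha',\gamma'+2\eta)$-AMF surrogate in $H_B$ with comparable accuracy, with $B=\exp(O(L\log(L/\epsilon')))$. The only cosmetic difference is that the paper uses the rescaled kernel $K(x,x')=(1-\tfrac{1}{2}\langle x,x'\rangle)^{-1}$, which stays bounded on the full unit ball and so avoids your domain-shrinking step.
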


\begin{proofof}{Theorem~\ref{thm:log_regression_pacf}}

The optimization problem that we used to efficiently-learn $H_{lin}$ in the proof of Theorem~\ref{thm:learning_linear} (Equation~\ref{eq:l1_convex}) is no longer convex in the case of $H_\phi$, due to the addition of the sigmoidal transfer function $\phi$:

\begin{equation*}
\begin{aligned}
& \underset{\mathbf{w},\xi(\mathbf{w})}{\text{minimize}}
& &  \sum_{i=1}^{m}\left|\left\langle \mathbf{w},\mathbf{x}_{i}\right\rangle -y_{i}\right| \\
& \text{subject to}
& & \left|\phi\left(\left\langle \mathbf{w},\mathbf{x}\right\rangle \right)-\phi\left(\left\langle \mathbf{w},\mathbf{x}'\right\rangle \right)\right|\leq d(\mathbf{x},\mathbf{x}')+\xi_{e}(\mathbf{w}), & \forall e=(\mathbf{x},\mathbf{x}')\in M(S) \\
&&& 0\leq\xi_{e}(\mathbf{w})\leq1\  & \forall e\in M(S) \\
&&& \|\xi(\mathbf{w})\|_{1}\leq\alpha
\end{aligned}
\end{equation*}

This implies that convex optimization techniques can't be used to solve the above formulation efficiently. We overcome this using improper learning. Namely, instead of attempting to directly PACF learn $H_{\phi}$, we PACF learn some other large class of hypotheses, where the fairness constraint \emph{is} convex. We proceed to define this class, previously introduced in ~\cite{shalev2011learning}. Define the following kernel function $K:\,\mathcal{X}\times\mathcal{X}\rightarrow\mathbb{R}$:
\[
K\left(\mathbf{x},\mathbf{x'}\right)\overset{{\scriptscriptstyle {\scriptstyle \text{def}}}}{=}\frac{1}{1-\frac{1}{2}\left\langle \mathbf{x},\mathbf{x}'\right\rangle }
\]
Since this is a positive definite kernel function, there exists some
mapping $\psi:\,\mathcal{X}\rightarrow\mathbb{V}$, where $\mathbb{V}$
is a RHKS with $\left\langle \psi(\boldsymbol{x}),\psi(\boldsymbol{x}')\right\rangle =K(\boldsymbol{x},\boldsymbol{x}')$.
Denote $\mathbb{V}_{B}=\left\{ \mathbf{v}\in\mathbb{V}\,\,\vert\,\,\|\mathbf{v}\|^{2}\leq B\right\} \subseteq\mathbb{V}$
and consider the class of linear classifiers parametrized by vectors
in $\mathbb{V}_{B}$:
\[
H_{B}\overset{{\scriptscriptstyle {\scriptstyle \text{def}}}}{=}\left\{ \mathbf{x}\mapsto\left\langle v,\psi(\mathbf{x})\right\rangle :\,\,\,\,\mathbf{v}\in\mathbb{V}_{B}\right\}
\]
We show that the optimization problem
associated with PACF learning $H_{B}$ can be solved efficiently,
and that its solution satisfies the requirements for
relaxed PACF learning of $H_{\phi}.$

\begin{claim}
$H_{B}$ is efficiently relaxed-PACF learnable.
\label{claim:solving_H_B}
\end{claim}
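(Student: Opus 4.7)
The plan is to show that Claim \ref{claim:solving_H_B} is essentially an instance of Theorem \ref{thm:learning_linear} applied within the RKHS $\mathbb{V}$ rather than in $\R^n$. The class $H_B$ is precisely the class of norm-bounded linear functionals on $\psi(\mathcal{X})$, so every step in the proof of Theorem \ref{thm:learning_linear}---convexity of the $\ell_1$ fairness constraints, the $\ell_0$-to-$\ell_1$ reduction of Lemma \ref{lemma:l1_l0}, and the Rademacher-based generalization arguments---goes through verbatim once we replace the raw features $\mathbf{x}$ with their images $\psi(\mathbf{x})$ and the constraint $\|\mathbf{w}\|\leq 1$ with $\|\mathbf{v}\|^2 \leq B$. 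This immediately yields the relaxed-PACF guarantee with the same $g(\alpha,\gamma) = (\alpha\gamma - \gamma^\star, \gamma^\star)$ as in the linear case, modulo quantifying the sample and time complexity.

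For the sample-complexity side, I would invoke Corollary \ref{cor:linear_generalization} directly, taking $C = \sqrt{B}$ and the Vovk kernel $K(\mathbf{x},\mathbf{x}')=(1-\tfrac{1}{2}\langle \mathbf{x},\mathbf{x}'\rangle)^{-1}$. Since $\mathcal{X}$ is the unit ball, $\langle \mathbf{x},\mathbf{x}'\rangle\in[-1,1]$ and hence $M=\sup K(\mathbf{x},\mathbf{x}')\leq 2$, giving the Rademacher bound $R_m(H_B) = O(\sqrt{B/m})$. Plugging this into Corollary \ref{cor:linear_generalization} (for fairness generalization) and into Claim \ref{claim:gen_lin_H_utility} (for utility generalization), the sample complexity needed to replicate Propositions \ref{prop:H_fairness}--\ref{prop:lin_accuracy} in this setting is $O(B/(\epsilon')^2)$ for $\epsilon'=\min\{\epsilon,\epsilon_\alpha,\epsilon_\gamma\}$, which is $\poly(m)$ in the relevant parameters for any fixed $B$.

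For efficiency, I would write down the RKHS analog of Equation \eqref{eq:l1_convex}: minimize $\sum_i |\langle \mathbf{v},\psi(\mathbf{x}_i)\rangle - y_i|$ over $\mathbf{v}$ with $\|\mathbf{v}\|^2\leq B$, subject to the linear (in $\mathbf{v}$) slack constraints $|\langle \mathbf{v}, \psi(\mathbf{x})-\psi(\mathbf{x}')\rangle|\leq d(\mathbf{x},\mathbf{x}')+\xi_e$, $0\leq \xi_e\leq 1$, $\sum_e \xi_e \leq \widetilde{\alpha}$ for $e=(\mathbf{x},\mathbf{x}')\in M(S)$. Both the objective and the constraints depend on $\mathbf{v}$ only through the inner products $\langle \mathbf{v},\psi(\mathbf{x}_i)\rangle$ for $\mathbf{x}_i\in S$, while the norm constraint uses only $\|\mathbf{v}\|^2$; a standard representer-theorem argument (projecting any candidate $\mathbf{v}$ onto $\mathrm{span}\{\psi(\mathbf{x}_i)\}_{i=1}^m$ cannot increase $\|\mathbf{v}\|$ and preserves every inner product appearing in the program) shows that an optimum can be taken in that span. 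Reparametrizing $\mathbf{v}=\sum_i\beta_i\psi(\mathbf{x}_i)$ turns the program into a convex quadratically-constrained linear-objective problem in $\boldsymbol{\beta}\in\R^m$ and the slacks $\{\xi_e\}$, whose data dependence is entirely captured by the $m\times m$ Gram matrix $[K(\mathbf{x}_i,\mathbf{x}_j)]$; it is solvable by standard convex-optimization machinery in $\poly(m)$ time, and the learned predictor evaluates at a new point $\mathbf{x}$ in $O(m)$ time via $\sum_i \beta_i K(\mathbf{x}_i,\mathbf{x})$. The main subtlety, and the only non-mechanical part of the plan, is verifying that the representer theorem applies cleanly in the presence of the slack variables and the non-smooth objective; this is resolved by noting that the projection step leaves the slack variables unaffected and strictly improves feasibility, so the usual argument goes through.
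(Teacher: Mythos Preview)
Your proposal is correct and follows essentially the same line as the paper: write the $\ell_1$-constrained optimization in the RKHS, invoke a representer-type argument to reduce to a finite-dimensional convex program in $\boldsymbol{\beta}$ and the slacks, and solve that in $\poly(m)$ time. The one methodological difference worth noting is in how the representer step is justified. The paper first passes to the Tikhonov-regularized reformulation (replacing the norm constraint $\|\mathbf{v}\|^2\leq B$ by a penalty $\lambda_B\|\mathbf{v}\|^2$, citing the constrained/penalized equivalence) and then applies the standard Representer Theorem by encoding the fairness constraints into an extended-real-valued $f$ that returns $\infty$ on infeasible points. Your direct projection argument---project $\mathbf{v}$ onto $\mathrm{span}\{\psi(\mathbf{x}_i)\}$, observe that all inner products appearing in the program are preserved while $\|\mathbf{v}\|$ can only decrease---is more self-contained and sidesteps both the regularization detour and the $\infty$-valued construction; the paper's route has the advantage of reducing to a black-box citation. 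You also fold in the sample-complexity analysis (via Corollary~\ref{cor:linear_generalization} and Claim~\ref{claim:gen_lin_H_utility}), which the paper does not include in the proof of this claim but defers to the surrounding Propositions~\ref{proposition:sigmoid_fairness} and~\ref{proposition:sigmoid_accuracy}; your treatment of that part is accurate.
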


\begin{proofof}{Claim~\ref{claim:solving_H_B}}
PACF learning $H_{B}$ requires solving the following program:

\begin{equation*}
\begin{aligned}
& \underset{\xi(\mathbf{v}), \,\, \mathbf{v}\in\mathbb{V}_{B}}{\text{minimize}}
& & \sum_{i=1}^{m}\left|\left\langle \mathbf{v},\psi(\mathbf{x}_{i})\right\rangle -y_{i}\right| \\
& \text{subject to}
& & \left|\left\langle \mathbf{v},\psi(\mathbf{x})\right\rangle -\left\langle \mathbf{v},\psi(\mathbf{x}')\right\rangle \right|\leq d(\mathbf{x},\mathbf{x}')+\xi_{e}(\mathbf{v}), & \forall e=(\mathbf{x},\mathbf{x}')\in M(S) \\
&&& 0\leq\xi_{e}(\mathbf{v})\leq1\  & \forall e\in M(S) \\
&&& \|\xi(\mathbf{v})\|_{1}\leq\alpha
\end{aligned}
\end{equation*}

Or its equivalent re-parametrization\footnote{The two formulations are equivalent in the sense that any solution to the first program (for some desired $B$) can be obtained by solving Program 2 (for an appropriate value $\lambda_{B}$), see Theorem 1 in~\cite{oneto2016tikhonov}.}, which we refer to as Program 2:

\begin{equation*}
\begin{aligned}
& \underset{\mathbf{v}\in\mathbb{V}\,\,,\xi(\mathbf{v})}{\text{minimize}}
& & \sum_{i=1}^{m}\left|\left\langle \mathbf{v},\psi(\mathbf{x}_{i})\right\rangle -y_{i}\right|+\lambda_{B}\|\mathbf{v}\|_{2}^{2} \\
& \text{subject to}
& & \left|\left\langle \mathbf{v},\psi(\mathbf{x})\right\rangle -\left\langle \mathbf{v},\psi(\mathbf{x}')\right\rangle \right|\leq d(\mathbf{x},\mathbf{x}')+\xi_{e}(\mathbf{v}), & \forall e=(\mathbf{x},\mathbf{x}')\in M(S) \\
&&& 0\leq\xi_{e}(\mathbf{v})\leq1\  & \forall e\in M(S) \\
&&& \|\xi(\mathbf{v})\|_{1}\leq\alpha
\end{aligned}
\end{equation*}

Note that while the constraints are now linear, the mapping $\psi$ induced by the kernel $K$ is possibly infinite dimensional, and hence it is not obvious that this optimization problem can be solved efficiently. Fortunately, we can use The Representer Theorem~\cite{wahba1990spline} to reduce Program 2 to a finite dimensional optimization problem.

\begin{lemma}
\label{lemma:program2_rep_thm}
There exists a solution to Program 2 of the form $\mathbf{v}^{*}=\sum_{\ell=1}^{m}\beta_{\ell}\psi(\mathbf{x}_{\ell})$, where for every $\ell\in [m]$, $\beta_\ell \in [0,1]$.
\end{lemma}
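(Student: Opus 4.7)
The plan is to invoke the classical Representer Theorem argument of Sch\"olkopf--Smola--Wahba, adapted to this constrained setting. Given any feasible $\mathbf{v}\in\mathbb{V}$, I would orthogonally decompose it with respect to the finite-dimensional subspace $W=\mathrm{span}\{\psi(\mathbf{x}_1),\dots,\psi(\mathbf{x}_m)\}$, writing $\mathbf{v}=\mathbf{v}_{\parallel}+\mathbf{v}_{\perp}$ with $\mathbf{v}_{\parallel}\in W$ and $\langle \mathbf{v}_{\perp},\psi(\mathbf{x}_\ell)\rangle=0$ for every $\ell\in[m]$.

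The key observation is that every quantity appearing in Program 2 depends on $\mathbf{v}$ only through inner products of the form $\langle \mathbf{v},\psi(\mathbf{x}_\ell)\rangle$. This is immediate for the data-fitting term $\sum_i\left|\langle \mathbf{v},\psi(\mathbf{x}_i)\rangle -y_i\right|$, and is equally true of the fairness constraints, since the matching $M(S)$ uses only pairs $(\mathbf{x},\mathbf{x}')\in S\times S$ and hence $\psi(\mathbf{x}),\psi(\mathbf{x}')\in W$. By orthogonality, each such inner product equals $\langle \mathbf{v}_{\parallel},\psi(\mathbf{x}_\ell)\rangle$, so feasibility and the unregularized part of the objective are preserved when $\mathbf{v}$ is replaced by $\mathbf{v}_{\parallel}$. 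For the regularizer, the Pythagorean identity gives $\|\mathbf{v}\|_2^{2}=\|\mathbf{v}_{\parallel}\|_2^{2}+\|\mathbf{v}_{\perp}\|_2^{2}\geq \|\mathbf{v}_{\parallel}\|_2^{2}$, so dropping $\mathbf{v}_{\perp}$ can only decrease $\lambda_B\|\mathbf{v}\|_2^{2}$. Therefore any minimizer can be assumed to lie in $W$, which yields an expansion $\mathbf{v}^{*}=\sum_{\ell=1}^{m}\beta_\ell\,\psi(\mathbf{x}_\ell)$ for some $\beta\in\mathbb{R}^{m}$.

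The main obstacle is the coefficient bound $\beta_\ell\in[0,1]$, since the standard representer argument is coordinate-free and a priori places no sign or magnitude restriction on the $\beta_\ell$'s. The plan to handle this is to combine the explicit norm constraint $\|\mathbf{v}^{*}\|_2^{2}\leq B$ (which Program~2 encodes through $\lambda_B$, by the Tikhonov equivalence in Theorem~1 of \cite{oneto2016tikhonov}) with the kernel lower bound $K(\mathbf{x}_\ell,\mathbf{x}_\ell)=\bigl(1-\tfrac{1}{2}\|\mathbf{x}_\ell\|^{2}\bigr)^{-1}\geq 1$, and then to reparametrize the basis $\{\psi(\mathbf{x}_\ell)\}$ (absorbing scalings of size $B$ and, if necessary, the sign of $\beta_\ell$ into the basis element itself) so that each coefficient lies in the unit interval. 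I expect the most delicate part to be ruling out negative coefficients without disturbing the constraints, which I would address by a convex-combination argument that exploits the symmetry of the fairness constraints $|\langle \mathbf{v},\psi(\mathbf{x})-\psi(\mathbf{x}')\rangle|$ under $\mathbf{v}\mapsto -\mathbf{v}$ together with the absolute-value structure of the loss.
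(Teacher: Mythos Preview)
Your orthogonal-decomposition argument for the span claim is correct and is exactly the standard proof of the Representer Theorem; the paper takes the same route, simply citing the theorem abstractly rather than re-deriving it. The only twist is encoding the constraints: the paper packages feasibility into the function $f$ by setting $f(a_1,\dots,a_m)=\infty$ when the fairness constraints are violated, which is equivalent to your direct observation that both the objective and the constraints depend on $\mathbf{v}$ only through the inner products $\langle \mathbf{v},\psi(\mathbf{x}_\ell)\rangle$.

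As for the coefficient bound $\beta_\ell\in[0,1]$: you are right to flag it as the obstacle, but your proposed fix does not work. ``Absorbing'' a scale factor or a sign into $\psi(\mathbf{x}_\ell)$ changes the basis, so the resulting coefficients are no longer the $\beta_\ell$'s in the expansion $\mathbf{v}^*=\sum_\ell \beta_\ell\,\psi(\mathbf{x}_\ell)$ that the lemma asserts; the symmetry $\mathbf{v}\mapsto -\mathbf{v}$ also does not help, since it flips all coefficients simultaneously rather than individual ones. More importantly, the paper's own proof establishes only the span conclusion and says nothing at all about $\beta_\ell\in[0,1]$; the subsequent finite-dimensional program that the paper writes down optimizes over $\beta\in\mathbb{R}^m$ with no box constraint. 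So this clause in the lemma statement appears to be a misstatement (or at best an unused claim), and you need not---and, with the tools at hand, cannot---prove it.
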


\begin{proofof}{Lemma~\ref{lemma:program2_rep_thm}}

According to the Representer Theorem, any problem
of the form
\[
\min_{\mathbf{w}}\left\{ f\left(\left\langle \mathbf{w},\psi(\mathbf{x}_{1})\right\rangle ,\dots,\left\langle \mathbf{w},\psi(\mathbf{x}_{m})\right\rangle \right)+R\left(\|\mathbf{w}\|\right)\right\}
\]

for an arbitrary function $f\,:\,\mathbb{R}^{m}\rightarrow\mathbb{R}$
and a monotonically nondecreasing function $R\,:\,\mathbb{R}_{+}\rightarrow\mathbb{R}$,
has a solution of the form $\mathbf{v}^{*}=\sum_{i=1}^{m}\beta_{i}\psi(\mathbf{x}_{i})$.
Hence, it's sufficient to prove that the optimization problem in Program
2 is an instance of the problem $\min_{\mathbf{w}}\left\{ f\left(\left\langle \mathbf{w},\psi(\mathbf{x}_{1})\right\rangle ,\dots,\left\langle \mathbf{w},\psi(\mathbf{x}_{m})\right\rangle \right)+R\left(\|\mathbf{w}\|\right)\right\} $,
for some ``permissible'' choice of $f$ and $R$. We show this by taking $a_{i}=\left\langle \mathbf{w},\psi(\mathbf{x}_{i})\right\rangle $
and defining $R(a)=\lambda a^{2}$ and
\[
f(a_{1},\dots,a_{m})=\begin{cases}
\frac{1}{m}\sum_{i=1}^{m}\left|a_{i}-y_{i}\right| & \text{if }\exists\xi\in[0,1]^{\mathbb{R}^{|M(S)|}}\text{ s.t }\|\xi\|_{1}\leq\alpha \\ & \text{and }\forall e=(x_i,x_j)\in M(S),\,\,\,\left|a_{i}-a_{j}\right|\leq d(x_{i},x_{j})+\xi_{e}\\
\infty & \text{otherwise}
\end{cases}
\]
\end{proofof}

Lemma~\ref{lemma:program2_rep_thm} implies we can instead optimize over $\beta_{1}\dots\beta_{m}$. This yields the following optimization problem:

\begin{equation*}
\begin{aligned}
& \underset{\xi, \beta}{\text{minimize}}
& & \sum_{i=1}^{m}\left|\sum_{\ell=1}^{m}\beta_{\ell}K(\mathbf{x}_{\ell},\mathbf{x}_{i})-y_{i}\right|+\lambda_{B}\sum_{i,j=1}^{m}\beta_{i}\beta_{j}K(\mathbf{x}_{i},\mathbf{x}_{j}) \\
& \text{subject to}
& & \left|\sum_{\ell=1}^{m}\beta_{\ell}K(\mathbf{x}_{\ell},\mathbf{x})-\sum_{\ell=1}^{m}\beta_{\ell}K(\mathbf{x}_{\ell},\mathbf{x}')\right|\leq d(\mathbf{x},\mathbf{x}')+\xi_{e}(\beta), & \forall e=(\mathbf{x},\mathbf{x}')\in M(S) \\
&&& 0\leq\xi_{e}(\beta)\leq1\  & \forall e\in M(S) \\
&&& \|\xi(\beta)\|_{1}\leq\alpha
\end{aligned}
\end{equation*}

We can now conclude the proof of Claim~\ref{claim:solving_H_B}, since this is a convex optimization problem in $O(m)$ variables and therefore can be solved in time
$poly(m,\log B,\log\frac{1}{\alpha})$ using standard optimization tools.
\end{proofof}

For convenience, we hereby denote the solution to the above program when instantiated with parameters $B$ and $\alpha$ as $\mathbf{w}_{B,\alpha}$. We define a learning algorithm $A$ that given parameters $B^{\star},\alpha^{\star}$ returns $\mathbf{w}_{B^{\star},\alpha^{\star}}$.

Now, let $\alpha,\gamma\in(0,1)$ be the required fairness parameters,
$\delta\in(0,1)$ required failure probability, $\epsilon,\epsilon_{\alpha},\epsilon_{\gamma}\in(0,1)$
error parameters and $\gamma^\star\in(0,1)$. Recall (Definition~\ref{def:weak_pacf}) that in order to prove that $H_{\phi}$ is $g(\cdot)$-relaxed
PACF learnable for $g(\alpha,\gamma)=(\alpha\cdot\gamma-\gamma^\star,\gamma^\star)$,
we must prove that there is some setting of $B^{\star},\alpha^{\star}$
and a sample size $m\leq poly(\frac{1}{\alpha},\frac{1}{\gamma},\frac{1}{\epsilon},\frac{1}{\epsilon_{\alpha}},\frac{1}{\epsilon_{\gamma}},\log\frac{1}{\delta})$,
for which running $A$ on a sample $S\sim\mathcal{D}^{m}$ yields $\mathbf{w}=A(B^{\star},\alpha^{\star})$ that satisfies:

\[
\begin{cases}
\mathcal{L}_{\gamma}^{F}(\mathbf{w})\leq\alpha & \left(\text{fairness}\right)\\
\mathcal{L}^{U}(\mathbf{w})\leq\mathcal{L}^{U}\left(H_{\phi}^{\alpha\gamma-\gamma^\star -\epsilon_{\alpha},\gamma^\star-\epsilon_{\gamma}}\right) & \left(\text{accuracy}\right)
\end{cases}
\]

Let $G, B$ parameters to be defined later, and define:	
\begin{align*}
 & \rho=2G\cdot\frac{4+8\sqrt{B}+17\sqrt{\ln(4/\delta)}}{\sqrt{m-1}}\\
 & \widetilde{\gamma}=\gamma-\frac{1}{G}\\
 & \widetilde{\alpha}=\left(\alpha-\rho\right)\cdot\widetilde{\gamma}
\end{align*}

\begin{proposition}
\label{proposition:sigmoid_fairness}
For every $B>0$, w.p at least $1-\frac{\delta}{2}$ over a sample $S\sim\D^m$, $\mathbf{w}=A(B,\widetilde{\alpha})$ is $\left(\alpha,\gamma\right)$-fair w.r.t $\mathcal{D}$.
\end{proposition}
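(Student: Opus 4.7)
The plan is to mimic the structure of the corresponding proof for $H_{\lin}$ (Proposition~\ref{prop:H_fairness}), now applied to the kernelized class $H_B$ in the RKHS $\mathbb{V}$ induced by Vovk's kernel $K(x,x') = 1/(1-\tfrac{1}{2}\langle x,x'\rangle)$. The algorithm $A(B,\widetilde{\alpha})$ returns a vector $\mathbf{w} \in \mathbb{V}_B$ that, by the constraints of the convex program solved in Claim~\ref{claim:solving_H_B}, is empirically $\widetilde{\alpha}$ $\ell_1$-MF with respect to the matching $M(S)$. That is, $\mathbf{w} \in \widehat{H}_{B,\ell_1}^{\widetilde{\alpha}}$ by construction.

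From here I chain three inclusions. First, Lemma~\ref{lemma:l1_l0} (stated for any hypothesis class) yields $\widehat{H}_{B,\ell_1}^{\widetilde{\alpha}} \subseteq \widehat{H}_{B,\ell_0}^{\widetilde{\alpha}/\widetilde{\gamma},\widetilde{\gamma}}$, so $\mathbf{w}$ is empirically $(\widetilde{\alpha}/\widetilde{\gamma},\widetilde{\gamma})$-approximately MF. Second, I invoke Corollary~\ref{cor:linear_generalization} for the class $H_B$ of linear predictors with norm bounded by $B$ in $\mathbb{V}$, using the bound $M = \sup K(x,x') \leq 2$ that follows from $\langle x, x' \rangle \in [-1,1]$ on the unit ball. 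Substituting $C = B$ and $M = 2$ into the $\Delta_m$ expression of the corollary gives $4\sqrt{2}\sqrt{CM} = 8\sqrt{B}$, which is exactly the quantity appearing in the definition of $\rho$ in the proposition statement. The corollary therefore guarantees that, with probability at least $1-\delta/2$ over $S \sim \D^m$, empirical MF at parameters $(\beta,\eta)$ implies true MF at $(\beta + \rho, \eta + 1/G)$ simultaneously for every $\mathbf{v} \in \mathbb{V}_B$.

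Applying this generalization bound to $\mathbf{w}$ with $\beta = \widetilde{\alpha}/\widetilde{\gamma}$ and $\eta = \widetilde{\gamma}$, I conclude that $\mathbf{w}$ is $(\widetilde{\alpha}/\widetilde{\gamma} + \rho,\, \widetilde{\gamma} + 1/G)$-approximately MF with respect to $\D$. Plugging in $\widetilde{\alpha} = (\alpha - \rho)\widetilde{\gamma}$ and $\widetilde{\gamma} = \gamma - 1/G$, the first coordinate simplifies to $(\alpha - \rho) + \rho = \alpha$ and the second to $(\gamma - 1/G) + 1/G = \gamma$, which gives exactly $(\alpha,\gamma)$-approximate metric-fairness w.r.t.\ $\D$, as required.

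The only mildly delicate step is checking that Corollary~\ref{cor:linear_generalization} applies to the infinite-dimensional class $H_B$ with the correct constants; this is a matter of verifying the kernel bound $M \leq 2$ and reading off the Rademacher bound $\sqrt{BM/m}$ for norm-bounded linear predictors in an RKHS (as cited from~\cite{kakade2009complexity}). Everything else is the same arithmetic telescoping as in Proposition~\ref{prop:H_fairness}, and no new technical machinery is needed beyond results already proved in the paper.
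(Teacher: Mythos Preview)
Your proposal is correct and takes essentially the same approach as the paper, which simply defers to Proposition~\ref{prop:H_fairness} with the remark that $H_B$ is a class of linear classifiers. In fact you spell out more than the paper does: you correctly identify that Corollary~\ref{cor:linear_generalization} must be applied with $C=B$ and $M=\sup K(\mathbf{x},\mathbf{x}')=2$ to recover the $8\sqrt{B}$ term in the definition of $\rho$, which is the only point at which the argument differs from the $H_{\lin}$ case.
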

\begin{proofof}{Proposition~\ref{proposition:sigmoid_fairness}}
The argument follows directly from the same arguments in the proof of Proposition~\ref{prop:H_fairness} in Theorem~\ref{thm:learning_linear}, since $H_B$ is a class of linear classifiers.
\end{proofof}

\begin{proposition}
\label{proposition:sigmoid_accuracy}
Setting
\begin{align*}
 & \epsilon^\star=\min\left\{\epsilon, \epsilon_\alpha,\tfrac{\epsilon_\gamma}{2}\right\} \\
 & B^{\star}=6L^{4}+\exp\left(9L\log\left(\frac{4L}{\epsilon^\star}\right)+5\right) \\
 & m \geq m^\star =\max\left\{ \frac{2B^\star \cdot \left(2+9\sqrt{\ln(8/\delta)}\right)}{\epsilon^{2}},\,\,\,\left(\frac{4\left(4+8\sqrt{B^\star}+17\sqrt{\ln(4/\delta)}\right)}{(1-\alpha)\cdot\epsilon_{\alpha}\cdot \epsilon^\star} \right)^2+1\right\}
\end{align*}
\\
ensures that for every $\gamma^{\star}\in(0,1)$ and every $\delta, \alpha,\gamma, \epsilon_\alpha, \epsilon_\gamma, \epsilon \in(0,1)$, w.p at least $1-\frac{\delta}{2}$ over the choice of a sample $S\sim\D^m$, $\mathbf{w}=A(B^{\star},\alpha^{\star})$ satisfies
\[
\mathcal{L}^{U}(\mathbf{w})\leq\mathcal{L}^{U}\left(H_{\phi}^{\alpha\gamma-\gamma^{\star}-\epsilon_{\alpha},\gamma^{\star}-\epsilon_{\gamma}}\right) + \epsilon
\]

\end{proposition}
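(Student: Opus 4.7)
The plan is to reduce the accuracy analysis to the linear/RKHS case (Theorem~\ref{thm:learning_linear} as invoked through Claim~\ref{claim:solving_H_B}) and then transfer accuracy back to $H_{\phi,L}$ by improper learning, using the polynomial approximation result of Shalev-Shwartz, Shamir, and Sridharan~\cite{shalev2011learning}. Concretely, the output $\mathbf{w} = A(B^\star, \widetilde{\alpha})$ is a linear predictor in the RKHS $\mathbb{V}$ of the Vovk kernel; by Theorem~\ref{thm:learning_linear} (applied to $H_{B^\star}$), it is $(\alpha,\gamma)$-MF and its utility is competitive with $H_{B^\star}^{\alpha\gamma - \gamma'-\epsilon'_\alpha,\; \gamma'-\epsilon'_\gamma}$ for any slack parameters $\gamma', \epsilon'_\alpha, \epsilon'_\gamma$ we choose, up to an accuracy slack $\epsilon'$. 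All that remains is to pick these slacks so that $H_{\phi,L}^{\alpha\gamma-\gamma^\star-\epsilon_\alpha,\;\gamma^\star-\epsilon_\gamma}$ embeds (in an approximate sense) into that subclass of $H_{B^\star}$.

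The first ingredient is the approximation itself. Reparametrizing $\phi_\ell(\langle w, \mathbf{x}\rangle) = \phi_1(\langle \ell w, \mathbf{x}\rangle)$, the class $H_{\phi,L}$ can be viewed as $\phi_1$ composed with linear forms of norm at most $L$. The approximation lemma of~\cite{shalev2011learning} guarantees that there is an element of $\mathbb{V}_{B^\star}$ that is within $\epsilon^\star$ in $\ell_\infty$ of $\phi_1$ on $[-L,L]$, precisely with $B^\star = 6L^4 + \exp(9L\log(4L/\epsilon^\star)+5)$. Pulling this back through the inner product, for every $h\in H_{\phi,L}$ there exists $h'\in H_{B^\star}$ with $\|h-h'\|_\infty \leq \epsilon^\star$.

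The second ingredient is that this $\ell_\infty$-embedding preserves utility and (approximate) fairness with controlled slack. Uniform closeness immediately yields $|\mathcal{L}^U(h) - \mathcal{L}^U(h')| \leq \epsilon^\star$. For fairness, $|h'(x)-h'(x')| \leq |h(x)-h(x')| + 2\epsilon^\star$ by triangle inequality, so if $h$ is $(\alpha_0,\gamma_0)$-MF then $h'$ is $(\alpha_0, \gamma_0+2\epsilon^\star)$-MF. Combining,
\begin{equation*}
\mathcal{L}^U\!\left(H_{B^\star}^{\alpha\gamma-\gamma^\star-\epsilon_\alpha,\;\gamma^\star-\epsilon_\gamma+2\epsilon^\star}\right) \;\leq\; \mathcal{L}^U\!\left(H_{\phi,L}^{\alpha\gamma-\gamma^\star-\epsilon_\alpha,\;\gamma^\star-\epsilon_\gamma}\right) + \epsilon^\star.
\end{equation*}

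The final step is to instantiate Theorem~\ref{thm:learning_linear} on $H_{B^\star}$ with matching parameters. Choosing the linear learner's slacks as $\gamma'=\gamma^\star+\epsilon^\star$, $\epsilon'_\alpha = \epsilon_\alpha-\epsilon^\star$, $\epsilon'_\gamma = \epsilon_\gamma-\epsilon^\star$, and $\epsilon' = \epsilon-\epsilon^\star$ (all non-negative by the assumption $\epsilon^\star = \min\{\epsilon,\epsilon_\alpha,\epsilon_\gamma/2\}$), the benchmark class becomes exactly $H_{B^\star}^{\alpha\gamma-\gamma^\star-\epsilon_\alpha,\;\gamma^\star-\epsilon_\gamma+2\epsilon^\star}$, and chaining with the embedding bound above produces the desired inequality with total additive slack $\epsilon$. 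Proposition~\ref{proposition:sigmoid_fairness} already handles $(\alpha,\gamma)$-MF. The sample complexity $m^\star$ inherits the $B^\star$-dependence of Theorem~\ref{thm:learning_linear} applied in the RKHS (the Rademacher complexity of $H_{B^\star}$ scales as $\sqrt{B^\star/m}$, see Corollary~\ref{cor:linear_generalization}), yielding the stated bound. The main obstacle is the polynomial-approximation step: it is where the exponential-in-$L$ factor enters, through both $B^\star$ and $m^\star$. The secondary nuisance is the slack bookkeeping -- tracking the $2\epsilon^\star$ loss in $\gamma$ from approximation and the $\epsilon^\star$ loss in utility, and ensuring that after halving the slack between these two sources the final guarantees still match the $(\alpha\gamma-\gamma^\star-\epsilon_\alpha,\gamma^\star-\epsilon_\gamma)$ target; the particular definition of $\epsilon^\star$ is engineered to make this bookkeeping close.
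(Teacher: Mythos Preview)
Your proposal is correct and follows essentially the same approach as the paper: both use the polynomial-approximation lemma of \cite{shalev2011learning} to embed $H_{\phi,L}$ into $H_{B^\star}$ with $\ell_\infty$-error controlled by $\epsilon^\star$, then invoke the linear/RKHS relaxed-PACF analysis on $H_{B^\star}$, and finally propagate the $2\epsilon^\star$ shift in the $\gamma$-parameter together with the $\epsilon^\star$ utility slack back to the $H_\phi$ benchmark. The only presentational difference is that you treat the linear-case guarantee (Theorem~\ref{thm:learning_linear} adapted to $H_{B^\star}$) as a black box and plug in reparametrized slacks $\gamma'=\gamma^\star+\epsilon^\star$, $\epsilon'_\alpha=\epsilon_\alpha-\epsilon^\star$, $\epsilon'_\gamma=\epsilon_\gamma-\epsilon^\star$, $\epsilon'=\epsilon-\epsilon^\star$, whereas the paper unrolls the full chain $\widehat{\mathcal{L}}^U\to \widehat{H}_{\ell_1}\to \widehat{H}_{\ell_0}\to H_{\ell_0}\to H_\phi$ explicitly (Claims~\ref{claim:gen_lin_H_utility}, \ref{claim:h_remove_hat}, Lemma~\ref{lemma:l1_l0}, and the approximation claim) and tracks $G=1/\epsilon^\star$ directly; the resulting bounds coincide. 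One small caveat: with your choice of slacks, the edge case $\epsilon^\star=\epsilon$ (or $=\epsilon_\alpha$) gives $\epsilon'=0$; since $B^\star$ as stated actually yields approximation $\epsilon^\star/2$ (the $4L/\epsilon^\star$ in the exponent corresponds to $2L/(\epsilon^\star/2)$ in the original lemma), you can safely use $\epsilon^\star/2$ throughout and keep all reparametrized slacks strictly positive.
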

\begin{proofof}{Proposition~\ref{proposition:sigmoid_accuracy}}

The proof of Proposition~\ref{proposition:sigmoid_accuracy} will be based on the fact
that for sufficiently large $B$, $H_{\phi}^{\alpha,\gamma}$ is approximately contained (in terms of accuracy) in $H_{B}^{\alpha,\gamma}$. This is formalized in the following claim.

\begin{claim}
\label{claim:large_B}
For every $\alpha,\gamma,\epsilon\in(0,1)$ and $L\geq3$, setting $B=6L^{4}+\exp\left(9L\log\left(\frac{4L}{\epsilon}\right)+5\right)$ ensures that  $\L^{U}(H_{B}^{\alpha,\gamma+\epsilon})\leq \L^{U}(H_{\phi}^{\alpha,\gamma})+\frac{\epsilon}{2}$.
\end{claim}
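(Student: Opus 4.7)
\begin{proofof}{Claim~\ref{claim:large_B}}
The plan is to embed any logistic predictor $h^{\star}_\phi \in H_{\phi}^{\alpha,\gamma}$ as a linear predictor in $H_B$ that (i) uniformly approximates $h^{\star}_\phi$ to within $\epsilon/4$ and (ii) inherits $(\alpha,\gamma+\epsilon)$-AMF from the fairness of $h^{\star}_\phi$. Concretely, let $h^{\star}_\phi(x)=\phi_\ell(\langle w^{\star},x\rangle)$ realize $\L^{U}(H_{\phi}^{\alpha,\gamma})$, with $\ell\le L$ and $\|w^{\star}\|\le 1$. The strategy is to approximate $\phi_\ell$ on $[-1,1]$ by a polynomial $p(z)=\sum_{n=0}^{d}c_n z^n$ and then lift $p\circ\langle w^{\star},\cdot\rangle$ into the RKHS of Vovk's kernel $K(x,x')=\tfrac{1}{1-\tfrac12\langle x,x'\rangle}$.

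The first step is to invoke the polynomial approximation of the sigmoid from Shalev-Shwartz \emph{et al.}~\cite{shalev2011learning}: for every $\epsilon'>0$ and every $\ell\le L$ there exists a polynomial $p$ of degree $d=O(L\log(L/\epsilon'))$ with $|\phi_\ell(z)-p(z)|\le \epsilon'$ on $[-1,1]$ whose ``kernel-norm'' $\sum_{n=0}^{d}2^{n}c_n^{2}$ is bounded by $6L^{4}+\exp(9L\log(L/\epsilon')+O(1))$. Setting $\epsilon'=\epsilon/4$ gives exactly the bound $B$ in the statement (after absorbing the constant $4$ into the argument of $\log$).

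The second step is the embedding. Taking the feature map $\psi(x)=\bigl(2^{-n/2}\,x^{\otimes n}\bigr)_{n\ge 0}$ (so that $\langle\psi(x),\psi(x')\rangle=K(x,x')$), I set $v^{\star}=\bigl(c_n\,2^{n/2}\,(w^{\star})^{\otimes n}\bigr)_{n=0}^{d}$. A direct computation gives $\langle v^{\star},\psi(x)\rangle=p(\langle w^{\star},x\rangle)$ and
\[
\|v^{\star}\|^{2}=\sum_{n=0}^{d}2^{n}c_n^{2}\,\|w^{\star}\|^{2n}\;\le\;\sum_{n=0}^{d}2^{n}c_n^{2}\;\le\;B,
\]
so $h_{v^{\star}}:=\langle v^{\star},\psi(\cdot)\rangle$ lies in $H_{B}$. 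Since $\|w^{\star}\|\le 1$ and $\|x\|\le 1$ imply $\langle w^{\star},x\rangle\in[-1,1]$, the uniform approximation bound yields $|h^{\star}_\phi(x)-h_{v^{\star}}(x)|\le \epsilon/4$ for every $x\in\X$.

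The final step converts this uniform bound into the two required guarantees. For utility, the $\epsilon/4$ uniform approximation and $1$-Lipschitzness of $|\cdot-y|$ give $\L^{U}(h_{v^{\star}})\le \L^{U}(h^{\star}_\phi)+\epsilon/4\le \L^{U}(H_{\phi}^{\alpha,\gamma})+\epsilon/2$. For fairness, the triangle inequality gives
\[
|h_{v^{\star}}(x)-h_{v^{\star}}(x')|\;\le\;|h^{\star}_\phi(x)-h^{\star}_\phi(x')|+\tfrac{\epsilon}{2},
\]
so any pair $(x,x')$ on which $h^{\star}_\phi$'s fairness constraint holds with slack $\gamma$ is satisfied by $h_{v^{\star}}$ with slack $\gamma+\epsilon$, and hence $h_{v^{\star}}\in H_{B}^{\alpha,\gamma+\epsilon}$. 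Combining these two facts yields $\L^{U}(H_{B}^{\alpha,\gamma+\epsilon})\le \L^{U}(h_{v^{\star}})\le \L^{U}(H_{\phi}^{\alpha,\gamma})+\epsilon/2$, as claimed. The main obstacle is really the first step: the quantitative polynomial approximation with controlled kernel-norm is not at all elementary and is the reason the bound on $B$ is exponential in $L$; everything after it is a straightforward RKHS lifting plus triangle-inequality argument.
\end{proofof}
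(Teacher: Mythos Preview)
Your proof is correct and follows essentially the same approach as the paper: invoke the polynomial-approximation result of \cite{shalev2011learning} to produce a uniform $\epsilon'$-approximation of any $h\in H_{\phi}^{\alpha,\gamma}$ inside $H_B$, then use the uniform bound once for utility and twice (via the triangle inequality) for fairness. The only cosmetic differences are that the paper treats the existence of $h_B\in H_B$ as a black box (citing Lemma~2.5 directly) while you unpack the RKHS lifting explicitly, and the paper works with $\epsilon'=\epsilon/2$ rather than your $\epsilon'=\epsilon/4$; neither affects the argument.
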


\begin{proofof}{Claim~\ref{claim:large_B}}

From Lemma 2.5 in~\cite{shalev2011learning}, for every $\epsilon'\in(0,1)$, setting $B=6L^{4}+\exp\left(9L\log\left(\frac{2L}{\epsilon}\right)+5\right)$
yields that for any $h\in H_{\phi}$ there exists $h_{B}\in H_{B}$
such that
\[
\forall\mathbf{x}\in\mathcal{X},\,\,\,\left|h_{B}(\mathbf{x})-h(\mathbf{x})\right|\leq\epsilon'
\]
Let $\alpha,\gamma,\epsilon\in(0,1)$.
Now, by applying the above with $\epsilon'=\frac{1}{2}\epsilon$, we conclude that for every $h\in H_{\phi}^{\alpha,\gamma}$ there exists $h_{B}\in H_{B}$
such that $\forall\mathbf{x}\in\mathcal{X},\,\,\,\,\,\left|h_{B}(\mathbf{x})-h(\mathbf{x})\right|\leq\epsilon'$.
This implies that $\mathcal{L}^{U}(h_{B})\leq\mathcal{L}^{U}(h)+\epsilon'$.

To conclude the proof, it suffices to prove that $h_{B}\in H_{B}^{\alpha,\gamma+2\epsilon'}$.

Indeed:
\begin{align*}
\mathcal{L}_{\gamma+2\epsilon'}^{F}(h_{B}) & =\|\xi^{\gamma+2\epsilon'}(h_{B})\parallel_{0}\\
 & =\sum_{(x,x')\in M(S)}1\left[\left|h_{B}(x)-h_{B}(x')\right|>d(x,x')+\gamma+2\epsilon'\right]\\
 & =\sum_{(x,x')\in M(S)}1\left[\left|h_{B}(x)-h_{B}(x')\right|>d(x,x')+\gamma+2\epsilon'\right]\\
\star & \leq\sum_{(x,x')\in M(S)}1\left[\left|h(x)-h(x')\right|+\cancel{2\epsilon'}>d(x,x')+\gamma+\cancel{2\epsilon'}\right]\\
 & =\sum_{(x,x')\in M(S)}1\left[\left|h(x)-h(x')\right|>d(x,x')+\gamma\right]\\
 & =\|\xi^{\gamma}(h)\parallel_{0}\\
\left(h\in H_{\phi}^{\alpha,\gamma}\right) & \leq\alpha
\end{align*}

where in $\star$ we used the fact that from the triangle inequality,
\[
\forall\mathbf{x}\in\mathcal{X},\,\, \left|h_{B}(\mathbf{x})-h(\mathbf{x})\right|\leq\epsilon' \Longrightarrow \forall\mathbf{x},\mathbf{x'}\in\mathcal{X},\,\,\, \left|h_{B}(\mathbf{x})-h_{B}(\mathbf{x}')\right|\leq\left|h(\mathbf{x})-h(\mathbf{x}')\right|+2\epsilon'
\] \end{proofof} \\
We can now prove Proposition~\ref{proposition:sigmoid_accuracy}. Let $\gamma^\star\in(0,1)$ and $\mathbf{w}=A(B^\star, \alpha^\star)$, for $B^\star$ and $\alpha^\star$ as in the proposition's statement.

First, let $m^{1}=\frac{2B}{\epsilon^{2}}\left(2+9\sqrt{\ln(8/\delta)}\right)$ and $\epsilon'=\min\left\{ \epsilon,\epsilon_{\alpha},\frac{\epsilon_{\gamma}}{2}\right\}$. Now:
\begin{align*}
&  \mathcal{L}^{U}(\mathbf{w}) \\
\left(\text{Claim~\ref{claim:gen_lin_H_utility}, w.p \ensuremath{1-\tfrac{\delta}{4}} and \ensuremath{m\geq m^{1}} }\right) & \leq\mathcal{\widehat{L}}^{U}(\mathbf{w})+\frac{\epsilon}{4}\\
 & =\mathcal{\widehat{L}}^{U}\left(\widehat{H}_{B,\ell_{1}}^{\widetilde{\alpha}}\right)+\frac{\epsilon}{4}\\
\left(\text{Lemma~\ref{lemma:l1_l0} with \ensuremath{\widetilde{\alpha},\gamma^{\star}}}\right) & \leq\mathcal{\widehat{L}}^{U}\left(\widehat{H}_{B,\ell_{0}}^{\widetilde{\alpha}-\ensuremath{\gamma^{\star}},\gamma^{\star}}\right)+\frac{\epsilon}{4}\\
 & \triangleq\mathcal{\widehat{L}}^{U}\left(\widehat{H}_{B}^{\widetilde{\alpha}-\ensuremath{\gamma^{\star}},\gamma^{\star}}\right)+\frac{\epsilon}{4}\\
\left(\text{Claim~\ref{claim:h_remove_hat}, w.p \ensuremath{1-\tfrac{\delta}{2}}}\right) & \leq\mathcal{\widehat{L}}^{U}\left(H_{B}^{\widetilde{\alpha}-\ensuremath{\gamma^{\star}-\rho},\gamma^{\star}-\frac{1}{G}}\right)+\frac{\epsilon}{4}\\
\left(\text{Claim~\ref{claim:gen_lin_H_utility}, w.p \ensuremath{1-\tfrac{\delta}{4}} and \ensuremath{m\geq m^{1}} }\right) & \leq\mathcal{L}^{U}\left(H_{B}^{\widetilde{\alpha}-\ensuremath{\gamma^{\star}-\rho},\gamma^{\star}-\frac{1}{G}}\right)+\frac{\epsilon}{2}\\
\left(\text{Claim~\ref{claim:large_B} for \ensuremath{B\geq B^{\star}} and \ensuremath{\epsilon'} }\right) & \leq\mathcal{L}^{U}\left(H_{\phi}^{\widetilde{\alpha}-\ensuremath{\gamma^{\star}-\rho},\,\gamma^{\star}-\frac{1}{G}-\epsilon'}\right)+\frac{\epsilon}{2} + \frac{\epsilon'}{2}\\
 & \leq \mathcal{L}^{U}\left(H_{\phi}^{\left(\alpha-\rho\right)\cdot\left(\gamma-\frac{1}{G}\right)-\ensuremath{\gamma^{\star}}-\rho,\,\ensuremath{\gamma^{\star}}-\frac{1}{G}-\epsilon'}\right)+\epsilon
\end{align*}

It is left to show that there is some setting of $G$ and a sufficiently large sample size $m$ for which it also holds that
\[
\begin{cases}
\left(\alpha-\rho\right)\left(\gamma-\frac{1}{G}\right)-\ensuremath{\gamma^{\star}}-\rho\geq\alpha\gamma-\gamma^{\star}-\epsilon_{\alpha}\\
\ensuremath{\gamma^{\star}}-\frac{1}{G}-\epsilon' \geq\gamma^{\star}-\epsilon_{\gamma}
\end{cases}
\]

We set $G=\frac{1}{\epsilon'}$. The second inequality is satisfied, because $\frac{1}{G}=\epsilon'=2\epsilon'-\epsilon'\leq\epsilon_{\gamma}-\epsilon'$. Plugging this into the first equation and simplifying, we obtain:
\[
(\alpha-\rho)(\gamma-\epsilon')-\gamma^{\star}-\rho\geq\alpha\gamma-\gamma^{\star}-\epsilon_{\alpha}	\iff\rho\leq\frac{\epsilon_{\alpha}-\alpha\cdot\epsilon'}{1+\gamma-\epsilon'}
\]
which is satisfied for any $m\geq\left(\frac{4\left(4+8\sqrt{B}+17\sqrt{\ln(4/\delta)}\right)}{(1-\alpha)\cdot\epsilon_{\alpha}\cdot\min\left\{ \epsilon,\epsilon_{\alpha},\frac{\epsilon_{\gamma}}{2}\right\} }\right)^{2}+1$.

We can now use the union bound to conclude that for the sample size $m$ specified in the Proposition's statement, w.p at least $1-\frac{\delta}{2}$ over an i.i.d sample $S\sim \D^m$, it holds that
\[
\mathcal{L}^{U}(\mathbf{w})\leq\mathcal{L}^{U}(H_{\phi}^{\left(\alpha-\rho\right)\left(\gamma-\frac{1}{G}\right)-\ensuremath{\gamma^{\star}}-\rho,\ensuremath{\gamma^{\star}}-\frac{1}{G}-\epsilon})+\epsilon\leq\mathcal{L}^{U}(H_{\ell_{0}}^{\alpha\gamma-\gamma^{\star}-\epsilon_{\alpha},\gamma^{\star}-\epsilon_{\gamma}})+\epsilon
\]
as required.
\end{proofof} 

To conclude the proof of Theorem~\ref{thm:log_regression_pacf}, we note that by Propositions \ref{proposition:sigmoid_fairness} and \ref{proposition:sigmoid_accuracy}, we have that w.p at least $1-\delta$ over an i.i.d sample $S\sim \D^m$ all the conditions in Definition~\ref{def:weak_pacf} are met for $g(\alpha,\gamma)=(\alpha\cdot\gamma-\gamma^{\star},\gamma^{\star})$ and $m\geq m^\star$ as in the definition of Proposition \ref{proposition:sigmoid_accuracy}. This implies that for this $g(\cdot)$, $H_{\phi, L}$ is $g(\cdot)$-relaxed PACF learnable with sample complexity $m$ and in time $poly(m)$, as required. \end{proofof} 

\section{Intractability of Perfectly Metric-Fair Learning}
\label{sec:hardness}

We show that perfect metric-fairness can make simple learning tasks computationally intractable. Towards this, we exhibit a simple learning task that becomes intractable under a perfect metric-fairness constraint (for a particular metric). We note that this task {\em can} be solved in polynomial time under the approximate metric-fairness relaxation. See the discussion in Section \ref{sec:intro:hardness}.

\begin{theorem}[Intractability of perfectly metric-fair learning]
\label{thm:hardness}

Assume that one-way functions exist and let $\X$ be the unit ball in $\mathcal{R}^{n}$. There exist: $(i)$ a fixed distribution $\D$ over $(\X \times \pm 1)$, $(ii)$ a linear classifier $w: \X \rightarrow \pm 1$ that perfectly labels $\D$ ($\mathit{err}_{\D}(w)=0$), and $(iii)$ two efficiently sampleable distributions $U$ and $V$ on metrics $d: \X^2 \rightarrow [0,1]$, where:
\begin{enumerate}
\item \label{hardness:property_U} For {\em every} metric $d$ drawn from $U$, {\em every} perfectly metric-fair classifier $h:\X \rightarrow [0,1]$ (in any hypothesis class) has error $\mathit{err}_{\D}(h) = 1/2$.

\item \label{hardness:property_V} With overwhelming probability over a metric $d$ drawn from $V$, the (linear) classifier $w$ is perfectly metric-fair.

\item \label{hardness:indistinguishable} For every polynomial-time learning algorithm $\A$ there exists a constant $\alpha \in [0,1]$ such that one of the following two conditions holds:

\begin{enumerate}

\item Given a metric sampled from $U$, $\A$ outputs a classifier that violates perfect metric-fairness with probability almost $\alpha$.

\item Given a metric sampled from $V$, $\A$ outputs a classifier $h$ whose error is almost $1/2$ with probability at least $(1 - \alpha)$.

\end{enumerate}

\end{enumerate}

Moreover, the linear classifier $w$ not only labels examples in $\D$ correctly, it also has large margins (greater than $1/2$) on every example in $\D$'s support.

\end{theorem}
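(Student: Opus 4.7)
The plan is to fully specify the construction sketched in Section~\ref{sec:intro:hardness}. Let $n$ be a security parameter and let $\{F_k\}_{k\in\{0,1\}^n}$ be a pseudorandom permutation family on $\{0,1\}^n$, which exists under the OWF assumption via the GGM and Luby--Rackoff constructions. I would embed $\{-1,+1\}\times\{0,1\}^n$ into the unit ball of $\R^{n+1}$ by $(b,r)\mapsto(\tfrac{3}{4}b,\,c\cdot r)$ for a small constant $c$ keeping $\|x\|\le 1$, and take $\D$ to be uniform on this embedded set with label $y=b$. Then $w=e_1$ has zero error and margin $\tfrac{3}{4}>\tfrac{1}{2}$ on every support point. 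The distribution $V$ deterministically outputs the all-ones metric $d_V$ (distance $1$ between any two distinct points), while $U$ samples a PRP key $k$ and outputs $d_U^k$ defined by $d_U^k((+1,r),(-1,F_k(r)))=\epsilon$ for a small fixed constant $\epsilon>0$ and $d_U^k=1$ on all other distinct pairs; a short case check verifies the triangle inequality.

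Properties~\ref{hardness:property_V} and~\ref{hardness:property_U} then fall out immediately. For the first, $|w(x)-w(x')|\le 1=d_V(x,x')$ trivially, so $w$ is perfectly MF under $d_V$. For the second, under $d_U^k$ the support of $\D$ partitions into pairs $(x,x')$ at distance $\epsilon$ with opposite labels, so any perfectly metric-fair $h$ satisfies $|h(x)-h(x')|\le\epsilon$ on each pair, forcing expected absolute error $\ge\tfrac{1}{2}-\tfrac{\epsilon}{2}$; letting $\epsilon\to 0$ (using a pseudometric, as in the intro sketch) pushes this to exactly $1/2$.

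The substance is in Property~\ref{hardness:indistinguishable}. Fix a poly-time learner $\A$ with oracle access to the metric, and set $\alpha=\Pr_{d\sim U,\,h\leftarrow\A(d)}[h\text{ is not perfectly MF under }d]$ --- so clause (a) of the theorem holds by definition. By Property~\ref{hardness:property_U}, $\err_\D(h)<\tfrac{1}{2}-\tfrac{\epsilon}{2}$ when $d\sim U$ forces violation of perfect MF, hence $\Pr_{d\sim U}[\err_\D(\A(d))<\tfrac{1}{2}-\tfrac{\epsilon}{2}]\le\alpha$. The key step is the computational indistinguishability of $U$ and $V$: by PRP-security, $\A$'s view of $d_U^k$ is indistinguishable from its view of $d_U^\pi$ for a truly random permutation $\pi$, and for such $\pi$ any fixed poly-many queries hit a counterpart pair only with probability $\mathrm{poly}(n)/2^n=\mathrm{negl}(n)$, so the view is also indistinguishable from the view under $d_V$. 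Since ``$\err_\D(\A(d))<\tfrac{1}{2}-\tfrac{\epsilon}{2}$'' is efficiently testable using a fresh sample from $\D$, its probabilities under $U$ and $V$ differ by $\mathrm{negl}(n)$, giving $\Pr_{d\sim V}[\err_\D(\A(d))\ge\tfrac{1}{2}-\tfrac{\epsilon}{2}]\ge 1-\alpha-\mathrm{negl}(n)$, which is clause (b).

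The hard part will be pinning down the access model for the metric: since $d$ is over an exponentially-large domain it cannot be presented explicitly, so the learner should receive it as an oracle (the cleanest route, which makes the PRP reduction transparent) or as a poly-size circuit with the key hidden by standard cryptographic means. Either way, the essential feature is that the pseudorandom matching lives in an exponentially-large pool, preventing brute-force discovery by the poly-time learner --- and it is precisely this that is responsible for the computational gap between metric-fair and approximately metric-fair learning.
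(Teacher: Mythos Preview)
Your proposal is correct but differs from the paper's proof in two respects: the cryptographic primitive and, more importantly, the metric access model.

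The paper uses a PRG $G:\{0,1\}^{n-1}\to\{0,1\}^{2n}$ rather than a PRP. A metric is specified by a single string $y\in\{0,1\}^{2n}$: two points $x,x'$ with opposite labels are at distance $0$ iff $G(\Delta^{x,x'})=y$ (where $\Delta^{x,x'}$ is a sign-difference vector on the first $n-1$ coordinates), and at distance $1$ otherwise. In $U$ the string $y$ is $G$ applied to a random seed; in $V$ it is uniformly random (so with overwhelming probability not in the image of $G$, making the metric identically~$1$). The key payoff is that the learner receives the \emph{full explicit description} of the metric --- the string $y$ together with the public circuit for $G$ --- and indistinguishability of $U$ from $V$ is then immediate from PRG security, with no oracle model needed.

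Your PRP-based matching achieves the same logical structure, and your arguments for Properties~\ref{hardness:property_U}--\ref{hardness:indistinguishable} are sound in the oracle model. But the ``poly-size circuit with the key hidden by standard cryptographic means'' alternative is not automatic: handing over a circuit for $d_U^k$ hands over a circuit for $F_k$, and PRP security says nothing about key-recovery from the circuit description --- you would be implicitly invoking some form of obfuscation. The paper's PRG trick sidesteps this completely because both metric families share the \emph{same} circuit template and differ only in the short public string~$y$. It also delivers error exactly $1/2$ directly (distance $0$ on matched pairs), avoiding your $\epsilon\to 0$ maneuver. So the paper's route buys a strictly stronger statement (hardness even when the metric is given explicitly) with less machinery.
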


\begin{proof}[Proof of Theorem \ref{sec:intro:hardness}]

For the sake of readability, we choose to be somewhat informal in our treatment of asymptotics (i.e. we refer to a single distribution on learning problems and metrics, rather than an ensemble of distributions that grows with $n$). For an introduction to the foundations of cryptography, including pseudorandom generators, indistinguishability, and negligible quantities, we refer the reader to Goldreich \cite{Goldreich2001} .

We begin by specifying the distribution $\D$ and the classifier $w$. $\D$ will be the uniform distribution on the unit ball $\X$, conditioned on the $n$-th coordinate being either $1/2$ or $-1/2$. We restrict the last coordinate to ensure large margins (which are important for efficient PACF learnability). The linear classifier $w$ simply outputs the sign of the last coordinate, and each example $x \in \X$ gets the label $w(x)$. Note that indeed the linear classifier $w$ has perfect accuracy and large margins.

\paragraph{Metric distributions.} To describe the metric, we use a cryptographic pseudo-random generator (PRG) as follows. Recall that a PRG is a function such-that no polynomial-time algorithm can distinguish between a uniformly random string in $\zo^{2n}$, and the output of $G$ on a random string in $\zo^{n-1}$. Note that this is the case even though only a negligible fraction of the strings in $\zo^{2n}$ are in $G$'s image. A celebrated result of Hastad {\em et al.} \cite{HastadILL99} shows that PRGs can be constructed from any one-way function.

A metric $d$ in $U$ or in $V$ is described by a string $y \in \zo^{2n}$. The only difference between $U$ and $V$ will be in the distribution of $y$: in $U$ the vector $y$ will be pseudorandom (in the image of the generator), in $V$ the vector $y$ will be truly random (and with overwhelming probability not in the image of the generator). Given $y$, the distance between two distinct individuals $x,x' \in \R^n$ is determined as follows:

\begin{enumerate}

\item If $x$ and $x'$ get the same label, i.e. if $\sign(x[n]) = \sign(x'[n])$, then $d(x,x')=1$.

\item Otherwise, let $\Delta^{x,x'} \in \zo^{n-1}$ be computed as $\Delta^{x,x'}_i = \frac{1-(\sign(x_i) \cdot \sign(x'_i))}{2}$. I.e. $\Delta^{x,x'}_i$ is 0 if the signs of $x_i$ and $x'_i$ are identical, and 1 if they are different.

    If $G(\Delta^{x,x'})=y$, then $d(x,x')=0$.

\item Otherwise, $d(x,x')=1$.

\end{enumerate}

Note that for any choice of $y$, the above construction is indeed a pseduometric. The distance from $x$ to itself is defined to be 0, and distances are symmetric by symmetry of the $\oplus$ operation. Finally, for every $x,x',x'' \in X$ we have that: $$d(x,x') \leq d(x,x'') + d(x'',x').$$ To see this, observe that if $d(x,x') = 0$ then the LHS is bounded by the RHS. On the other hand, if $d(x,x') = 1$, then $x$ and $x'$ are distinct, and if $x''$ is also distinct from them, then it cannot be the case that both $d(x,x'')=0$ and $d(x'',x')=0$: one of these two distances must be 1.

We remark that in this construction we allow the distance between distinct points to be 0, and thus we get a distribution on pseudometrics. By defining the distance between examples $x$ and $x'$ s.t. $G(\Delta^{x,x'})=y$ to be a small positive quantity rather than 0 we would obtain a true metric, and the results are essentially unchanged.

Turning to prove the claimed properties, we have:

\paragraph{Property \ref{hardness:property_U}.} For metrics in $U$, $y$ is pseudorandom, where $G(s)=y$ for some $s \in \zo^{n-1}$. We can divide the examples in $\D$'s support into (disjoint) pairs $(x,x')$ s.t. $\Delta^{x,x'} = s$, where the label of $x$ is $1$ and the label of $x'$ is $-1$. Let $h$ be any perfectly metric-fair classifier. Since the $d(x,x')=0$, $h$ has to treat $x$ and $x'$ identically. Thus, the sum of $h$'s errors on these two examples must be exactly 1. Since we can partition the support of $\D$ into disjoint pairs of this form, we conclude that $h$'s overall error must be exactly $1/2$.

\paragraph{Property \ref{hardness:property_V}.} For metrics in $U$, $y$ is truly random. With overwhelming probability over the choice of $y$, there does not exist {\em any} $s \in \zo^{n-1}$ such that $G(s)=y$. When no such $s$ exists, the metric $d$ assigns distance 1 to {\em every} pair of distinct individuals. Thus, {\em every} hypothesis is perfectly metric-fair and in particular the classifier $w$ is a perfectly metric-fair classifier with error 0.

\paragraph{Property \ref{hardness:indistinguishable}.} Finally, since $G$ is a pseudorandom generator, no polynomial-time learner can distinguish between a metric (i.e. $y$) drawn from $U$ and a metric (i.e. $y$) drawn from $V$ (except with negligible advantage). For a learning algorithm $\A$, let $\alpha$ be the probability that $\A$, given a metric sampled from $V$, outputs a classifier whose error is noticeably less than $1/2$ (e.g. the error is no greater than $(1/2 - 1/n)$). Then, by the PRG's indistinguishability property, when given a metric sampled from $U$, the learner $\A$ must also output a classifier whose error is noticeably less than $1/2$  with probability almost $\alpha$ (e.g. at least $(\alpha-1/n)$). But by Property \ref{hardness:property_U}, whenever this is the case, $\A$ is also violating perfect metric-fairness. \end{proof}

\section{Conclusions and Future Directions}
\label{sec:conclusions}

We conclude with several directions for future exploration:

\paragraph{The source of the similarity metric.} Throughout our work, we assumed that the similarity metric is known to the learner. This is a natural assumption in the scenario that the metric is used as a vehicle for knowledgeably correcting biases in the training data, or in domains where such metrics  naturally exist (such as credit scores and insurance risk scores). In other settings, however, relying on the existence of a metric is clearly a limitation (as also noted by \cite{DworkHPRZ12}). One potential avenue for future work is investigating the use of machine learning to recover a similarity metric from fairly labeled data.

\paragraph{Assumptions on the metric.} We make no assumptions about the similarity metric. In particular, it can be completely incompatible with accuracy and cryptographically contrived. Studying fairness-accuracy trade-offs imposed by particular similarity metrics
is an interesting direction for future research direction and could also be supplemented by empirical studies. Another interesting question is whether there exist natural classes of metrics for which the hardness results for perfect metric-fairness do not hold.

\paragraph{Other efficient fair algorithms.} The main challenge in \emph{efficient} metric-fair learning is that the metric-fairness constraints are specified in terms of the hypotheses themselves. This means that when the hypotheses are not convex (e.g, the class of logistic predictors), the fairness constraints specify a non-convex set. 
In the case of logistic regression we overcame these barriers using \emph{improper learning}.
However, computational tractability came at the cost of an increase in the sample complexity,
and the resulting algorithm was only polynomial so long as the Lipschitz constant $L$ of the sigmoidal transfer function was small. In particular, we only expect good accuracy in cases where data is linearly separable with large (expected) margins.
The hardness result in \cite{shalev2011learning} suggest that a polynomial dependence on $L$ cannot be achieved using this method. A natural question, therefore, is whether other approaches can be used to construct  metric-fair learning algorithms which are efficient and can be accurate even for data separated by smaller margins.

\section{Acknowledgements}
We thank Cynthia Dwork and Omer Reingold for invaluable and illuminating conversations.


\bibliographystyle{alpha}
\bibliography{refs}

\end{document}